\theoremstyle{plain}
\newtheorem{theorem}{Theorem}[section]
\newtheorem{proposition}[theorem]{Proposition}
\newtheorem{lemma}[theorem]{Lemma}
\newtheorem{assumption}[theorem]{Assumption}
\theoremstyle{definition}
\newtheorem{definition}[theorem]{Definition}
\newtheorem{example}[theorem]{Example}
\theoremstyle{remark}
\newtheorem{remark}[theorem]{Remark}
\long\def\comment#1{} 
\DeclareMathOperator*{\argmin}{arg\,min} 
\newcommand{\xmath}[1] {\ensuremath{#1}\xspace}
\newcommand{\blmath}[1] {\xmath{\bm{#1}}}
\newcommand{\norm}[1] {\xmath{\left\| #1 \right\|}}
\newcommand{\ub}{{\blmath u}}
\newcommand{\wb}{{\blmath w}}
\newcommand{\xb}{{\blmath x}}
\newcommand{\yb}{{\blmath y}}
\newcommand{\Ac}{\mathcal{A}}
\newcommand{\Bc}{\mathcal{B}}
\newcommand{\Cc}{\mathcal{C}}
\newcommand{\Nc}{\mathcal{N}}
\newcommand{\Qc}{\mathcal{Q}}
\newcommand{\Tc}{\mathcal{T}}
\newcommand{\Xc}{\mathcal{X}}
\newcommand{\Yc}{\mathcal{Y}}
\newcommand{\Nd}{{\mathbb N}}
\newcommand{\Rd}{{\mathbb R}}
\newcommand{\Dc}{{{\mathcal D}}}
\newcommand{\beq}{\begin{equation}}
\newcommand{\eeq}{\end{equation}}
\newcommand{\beqa}{\begin{eqnarray}}
\newcommand{\eeqa}{\end{eqnarray}}
\newcommand{\lambdab}{\boldsymbol{\lambda}}
\newcommand{\indicator}[1]{\mathbbm{1}_{\{#1\}}}
\newcommand{\floor}[1]{\left\lfloor#1\right\rfloor}
\newcommand{\ceil}[1]{\left\lceil#1\right\rceil} 
\newcommand{\ReLUtwo}[3]{#1 \stackrel{\sigma}{\rightarrow} #2 \rightarrow #3}
\newcommand{\ReLUTwo}[3]{#1 \stackrel{\sigma}{\rightarrow} #2 \stackrel{\sigma}{\rightarrow} #3}
\newcommand{\ReLUthree}[4]{#1 \stackrel{\sigma}{\rightarrow} #2 \stackrel{\sigma}{\rightarrow} #3 \rightarrow #4}
\newcommand{\ReLUThree}[4]{#1 \stackrel{\sigma}{\rightarrow} #2 \stackrel{\sigma}{\rightarrow} #3 \stackrel{\sigma}{\rightarrow} #4}
\newcommand{\ReLUfour}[5]{#1 \stackrel{\sigma}{\rightarrow} #2 \stackrel{\sigma}{\rightarrow} #3 \stackrel{\sigma}{\rightarrow} #4 \rightarrow #5}
\newcommand{\ReLUFour}[5]{#1 \stackrel{\sigma}{\rightarrow} #2 \stackrel{\sigma}{\rightarrow} #3 \stackrel{\sigma}{\rightarrow} #4 \stackrel{\sigma}{\rightarrow} #5}
\newcommand{\exact}{feasible architecture~} 
\newcommand{\exacts}{feasible architectures~} 
\newcommand{\SIG}{\texttt{SIG}}
\newcommand{\width}{m} 
\icmltitlerunning{Defining Neural Network Architecture through Polytope Structures of Dataset}
\begin{document}
\allowdisplaybreaks

\twocolumn[
\icmltitle{Defining Neural Network Architecture through Polytope Structures of Dataset}




\begin{icmlauthorlist}
\icmlauthor{Sangmin Lee}{mat}
\icmlauthor{Abbas Mammadov}{mat,comp}
\icmlauthor{Jong Chul Ye}{mat,ai}
\end{icmlauthorlist}

\icmlaffiliation{mat}{Department of Mathematical Science, KAIST, Daejeon, Korea}
\icmlaffiliation{comp}{School of Computing, KAIST, Daejeon, Korea}
\icmlaffiliation{ai}{Kim Jaechul Graduate School of AI, KAIST, Daejeon, Korea}

\icmlcorrespondingauthor{Jong Chul Ye}{jong.ye@kaist.ac.kr}

\icmlkeywords{Machine Learning, ICML}

\vskip 0.3in
]



\makeatletter\def\Hy@Warning#1{}\makeatother 
\printAffiliationsAndNotice{}  

\begin{abstract}
Current theoretical and empirical research in neural networks suggests that complex datasets require large network architectures for thorough classification, yet the precise nature of this relationship remains unclear. 
This paper tackles this issue by defining upper and lower bounds for neural network widths, which are informed by the polytope structure of the dataset in question. 
We also delve into the application of these principles to simplicial complexes and specific manifold shapes, explaining how the requirement for network width varies in accordance with the geometric complexity of the dataset.
Moreover, we develop an algorithm to investigate a converse situation where the polytope structure of a dataset can be inferred from its corresponding trained neural networks. 
Through our algorithm, it is established that popular datasets such as MNIST, Fashion-MNIST, and CIFAR10 can be efficiently encapsulated using no more than two polytopes with a small number of faces.
\end{abstract}

\section{Introduction} \label{sec: intro}

\begin{figure*}[t]
    \centering
    \subfigure[]{\includegraphics[width=0.3\textwidth]{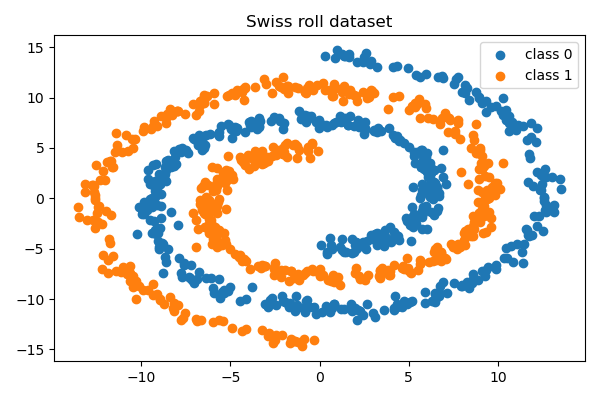}}
    \hfill 
    \subfigure[]{\includegraphics[width=0.3\textwidth]{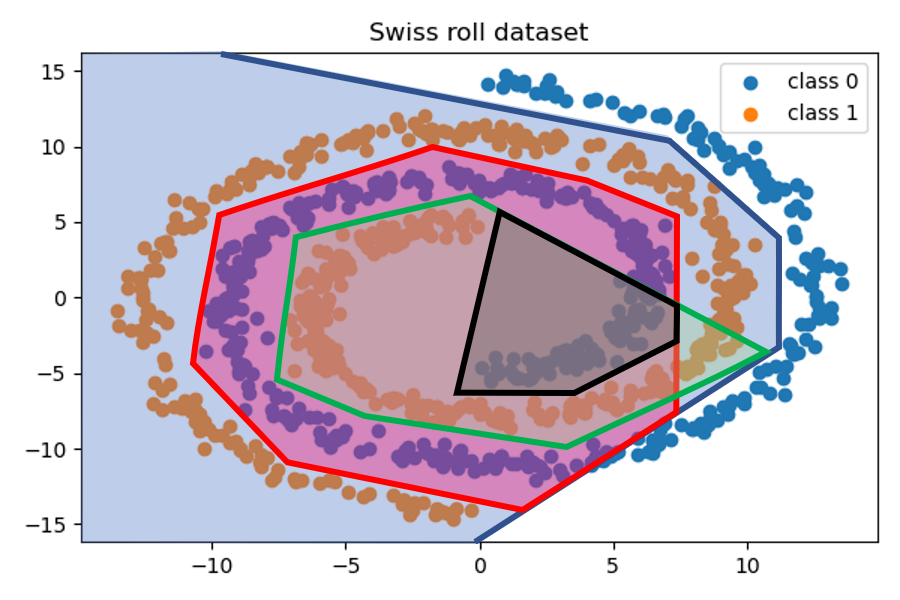}}
    \hfill 
    \subfigure[]{\includegraphics[width=0.3\textwidth]{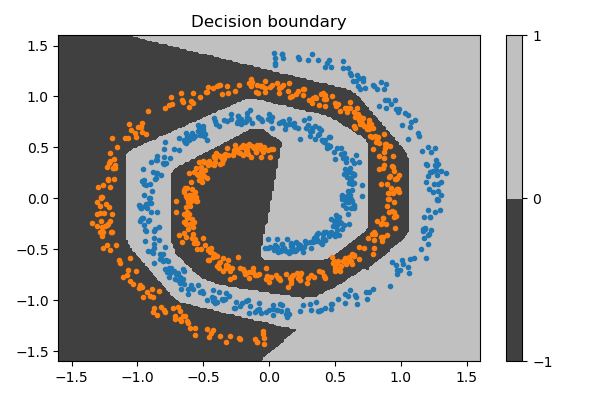}}
    \caption{
    What type of neural network architecture is capable of effectively classifying the swiss roll dataset depicted in (a)? By establishing a collection of covering polytopes to enclose one class, as illustrated in (b), our result demonstrates that a three-layer ReLU network with the architecture $\;\ReLUTwo{2}{20}{4} \rightarrow1$ can successfully achieve this classification task, as exemplified in (c). 
    }
    \label{fig: swiss}
\end{figure*}

To comprehend the remarkable performance of deep neural networks (DNNs), extensive research has delved into their architectures and the universal approximation property (UAP). The UAP of two-layer neural networks on compact sets was initially proven by \citet{cybenko1989approximation}, sparking widespread exploration of the UAP in diverse settings for DNNs. Studies have focused on determining the minimal depths and widths of deep ReLU networks required for UAP \citep{hornik1991approximation, park2020minimum}. These foundational results contribute to unraveling the intricate relationship between approximation power and neural network architectures.

However, a converse problem to address the influence of the characteristics of training datasets necessary to attain the UAP in neural networks has received relatively less attention.
For example, when analyzing the swiss roll dataset shown in Figure \ref{fig: swiss}(a), an important practical question emerges: \emph{What are the effective depth and width required for the complete classification of this dataset?} Despite the practical relevance of this inquiry in the context of training neural networks, existing theoretical results only offer basic lower bounds (minimum depth of 2 \citep{hornik1991approximation} and a width of $\max\{d_x+1, d_y\}$ \citep{park2020minimum}), which are often impractical for real applications. 
While a range of empirical evidence indicates that increasing the depth or width of networks could lead to successful outcomes, there remains an absence of theoretical assurances to foresee these results.

In this paper, we therefore tackle the challenge of identifying the optimal neural network architecture for classifying a given dataset. This task is approached through the lens of the polytope structure of deep ReLU networks, a subject that has garnered considerable attention in recent studies \citep{black2022interpreting, grigsby2022transversality, berzins2023polyhedral, huchette2023deep}. In fact, our primary theoretical goal is to address the ``multiple manifold problem," introduced by \citet{buchanan2020deep}:
\emph{For given two disjoint topological spaces $\Xc_+$ and $\Xc_-$, what is the optimal architecture for the neural network $\Nc$ such that $\Nc(\xb) > 0$ for all $\xb \in \Xc_+$ and $\Nc(\xb) < 0$ otherwise?} 

By utilizing the geometric properties of DNNs, here we provide a comprehensive answer to this question. Our approach involves determining both upper and lower bounds for the depth and widths of networks required for dataset classification, based on the polytope covering of the datasets. 
Specifically, we explicitly construct a neural network with practical applicability.
For example, our discovery in Theorem \ref{thm: compact} reveals that the swiss roll dataset in Figure \ref{fig: swiss}(a) can be efficiently classified using a three-layer ReLU network with 24 neurons, as depicted in Figure \ref{fig: swiss}(c).

Another important contribution of this paper is  the investigation into the converse situation, demonstrating that
 trained neural networks inherently capture the geometric properties of the dataset and enable the extraction of the dataset's polytope structure. 
As for demonstrating practical use, we uncover and discuss simple geometric traits of real-world datasets such as MNIST, Fashion-MNIST, and CIFAR10, achieved through the training of neural networks. 

Importantly, our contributions can be summarized as follows:
\vspace{-2mm}
\begin{itemize}
    \item 
    \textbf{Explicit construction of networks.}
    We introduce the novel concept of a \emph{polytope-basis cover} (Definition \ref{def: convex polytope cover}), which serves to describe the geometric structure of the dataset in detail. Building on this, we propose the design of a three-layer ReLU network, specifically tailored to efficiently classify the dataset in question, using its polytope-basis cover as a guiding framework (Theorem \ref{thm: compact}). 

%
    \item 
    \textbf{Bounds on network widths.}
    We define both upper and lower bounds for the width of a neural network necessary to classify a given convex polytope region, taking into account the number of its faces (Proposition \ref{prop: convex polytope}). Furthermore, we derive upper bounds on network widths when the dataset $\Xc$ is structured as a simplicial complex or can be covered by a difference of prismatic polytopes (Theorem \ref{thm: simplicial complex} and \ref{thm: betti numbers}). 
    These bounds are correlated with the number of facets or the Betti numbers of $\Xc$, demonstrating an interplay between the dataset's inherent geometry and the required network architecture.
    

     \item 
     \textbf{Investigating dataset geometry.} Building on our findings, we demonstrate that it is possible to investigate the geometric features of the dataset by training a neural network (Theorem \ref{thm: three-layer polytope cover}).
     Specifically, we develop algorithms that are able to identify a polytope basis-cover for given datasets (Algorithm \ref{alg: compressing}).
     Our results show that each class within the MNIST, Fashion-MNIST, and CIFAR10 datasets can be effectively distinguished using no more than two convex polytopes, each consisting of fewer than 30 faces (Table \ref{tab: result}). 
\end{itemize}



\section{Preliminaries} \label{sec: preliminaries} 
\paragraph{Notation.}

In this paper, we focus primarily on the binary classification problem, aiming to separate two disjoint topological spaces denoted as $\Xc_+$ and $\Xc_-$, or two classes of finite points denoted as $\Dc_+$ and $\Dc_-$. The training dataset is represented as $\Dc=\Dc_+\cup \Dc_- = \{(\xb_i, y_i)\}_{i=1}^n$, where $\xb_i\in\Rd^d$ and $y_i \in \{0,1\}$. 
Throughout the paper, we denote scalars by lowercase letters and vectors by boldface lowercase letters. 
For a positive integer $m$, $[m]$ represents the set $\{ 1,2,\cdots,m \}$.
The ReLU activation function is denoted by $\sigma(x):= \textup{ReLU}(x)=\max\{0, x \}$, and it is applied to a vector coordinate-wisely.
The sigmoid activation function is denoted as $\texttt{SIG}(x) = \frac{1}{1+e^{-x}}$.
The max pooling operation is represented as $\texttt{MAX}:\Rd^d \rightarrow \Rd$, which returns the maximum component of the input vector.
The $\varepsilon$ neighborhood of a topological space $\Xc \subset \Rd^d$ is defined by $\Bc_\varepsilon(\Xc):=\{\xb \in \Rd^d : \min_{\yb\in\Xc} \norm{\xb - \yb}_2 <\varepsilon\}$.
The indicator function is denoted by
\begin{align*}
    \indicator{c} := 
    \begin{cases}
    	1, \qquad \text{if $c$ is true}  , \\ 
    	0, \qquad \text{otherwise.}
    \end{cases}
\end{align*}
Additionally, we define a \emph{convex polytope} as an intersection of hyperspaces, as defined below: 
\begin{definition} \label{def: convex polytope} 
    A nonempty set $C \subset \Rd^d$ is called a \emph{convex polytope with $m$ faces} if there exist $\wb_k \in \Rd^d$ and $b_k \in \Rd$ for $k \in [m]$ such that $C = \bigcap\limits_{k=1}^m \{\xb\in\Rd^d~ |~\wb_k^\top\xb+b_k \le0\}$. \vspace{-2mm}
\end{definition}

\paragraph{Network architectures.}
In this paper, the terminology \emph{architecture} refers to the structure of a neural network, which means the depth and the width of hidden layers, and is often denoted by $\Ac$. 
A $D$-layer neural network $\Nc:\Rd^d \rightarrow \Rd$ with hidden layer widths $d_1, d_2, \cdots, d_{D-1}$ and activation functions $\texttt{ACT}_1,  \texttt{ACT}_2, \cdots, \texttt{ACT}_{D}$ is represented by $d\stackrel{\texttt{ACT}_1}{\rightarrow}d_1\stackrel{\texttt{ACT}_2}{\rightarrow}d_2\stackrel{\texttt{ACT}_3}{\rightarrow} \cdots \stackrel{\texttt{ACT}_{D-1}}{\rightarrow} d_{D-1} \stackrel{\texttt{ACT}_D}{\rightarrow} 1$. When the activation function is the identity, we add nothing on the arrow.
 For example, $\ReLUtwo{d}{\width}{1}$ denotes a two-layer ReLU network with $\width$ neurons, presented by 
\begin{align} \label{eq: two layer relu}
    \Nc(\xb) = v_0 + \sum_{k=1}^\width v_k \sigma(\wb_k^\top\xb + b_k).
\end{align}  

\begin{figure*}[!htp]
    \centering
    \subfigure[]{\includegraphics[width=0.24\textwidth]{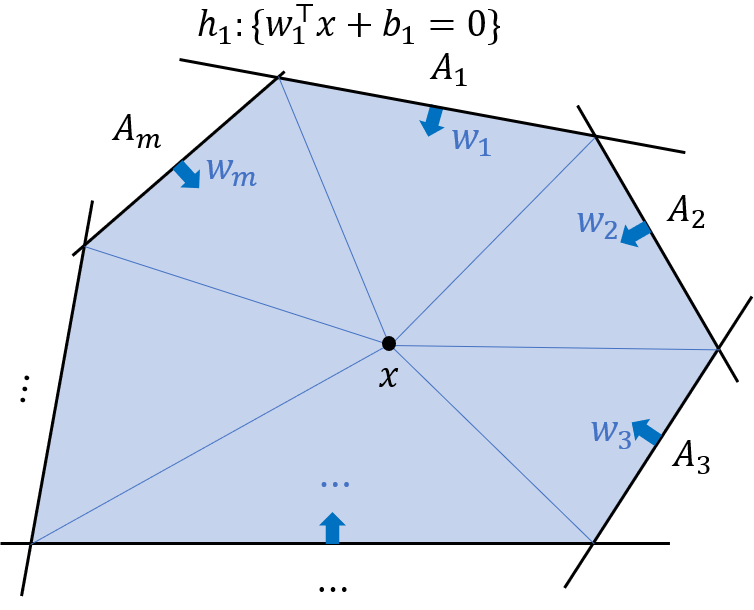}}
    \hfill
    \subfigure[]{\includegraphics[width=0.42\textwidth]{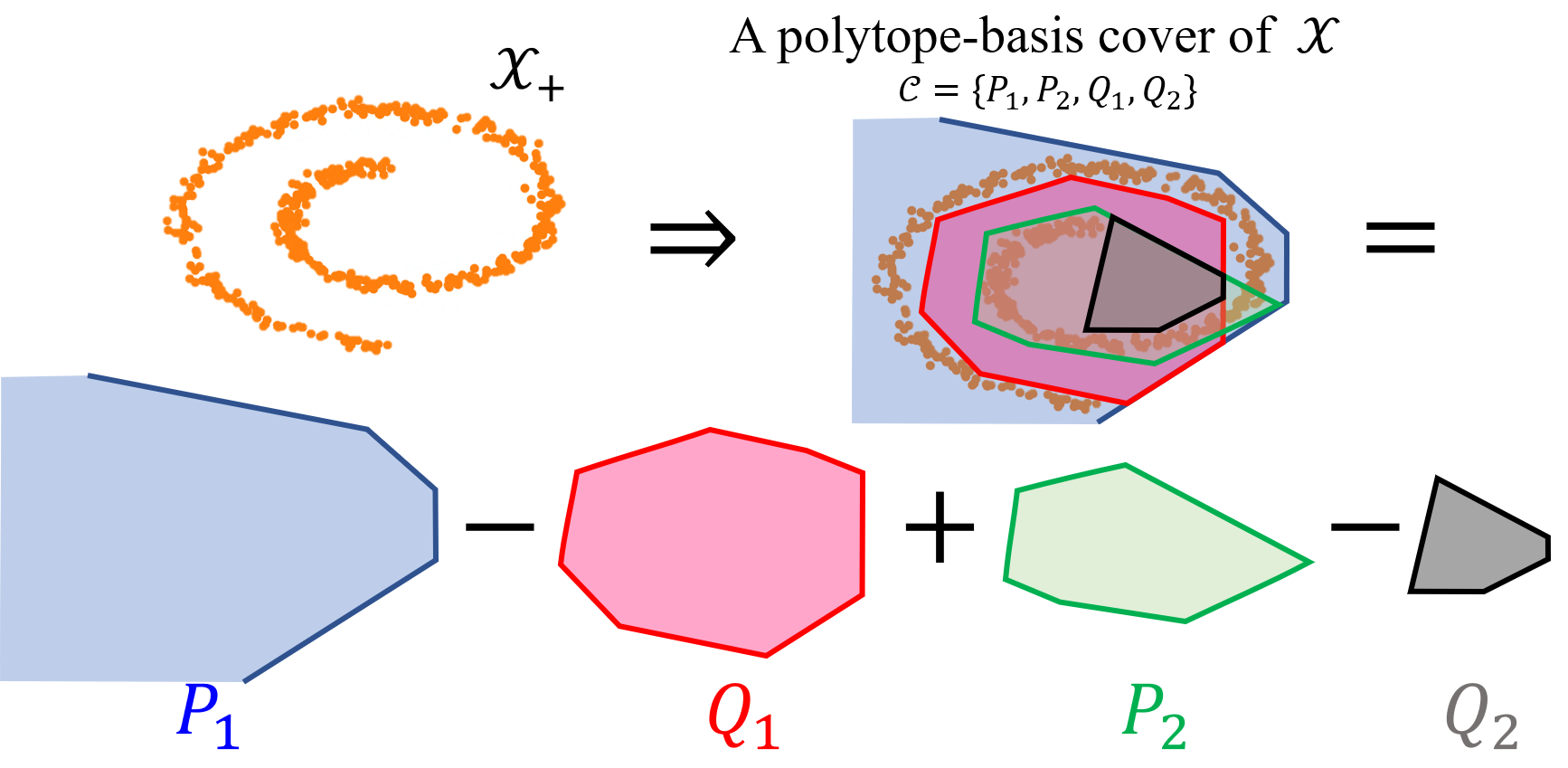}}
    \hfill
    \subfigure[]{\includegraphics[width=0.25\textwidth]{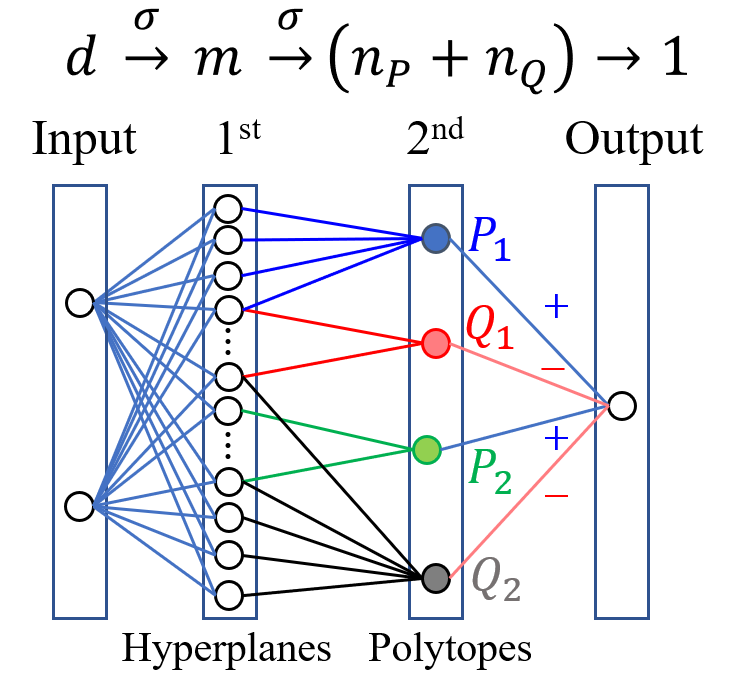}}
    \caption{ 
    The fundamental ideas in our work.
    (a) A convex polytope enclosed by $\width$ hyperplanes can be decomposed by $\width$ small pyramids.
    (b) For the topological space $\Xc_+$, the collection of polytopes $\Cc = \{P_1, P_2, Q_1, Q_2\}$ forms a polytope-basis cover of $\Xc$. 
    (c) 
    The constructive proof in Theorem \ref{thm: compact} further exhibits the role of neurons in hidden layers: the width of first layer means the number of total faces in $\Cc$, and the neurons in the second layer corresponds to the polytopes in $\Cc$.
    }
    \label{fig: convex polytope} 
\end{figure*}

\begin{definition}
    Let $\Xc:=\Xc_+\cup\Xc_- \subset \Rd^d$ be a union of two disjoint topological spaces. A neural network architecture $\Ac$ is called a \emph{\exact on $\Xc$} if there exists a neural network with the architecture $\Ac$ such that
    \begin{align*}
        \Nc(\xb) &> 0 \qquad \text{if} \qquad \xb \in \Xc_+, \\
        \Nc(\xb) &< 0 \qquad \text{if} \qquad \xb \in \Xc_-.
    \end{align*}
\end{definition}
\vspace{-3mm}
In other words, {\em \exact on $\Xc$} refers to a network architecture capable of fully discriminating between the two specified manifolds, $\Xc_+$ and $\Xc_-$. This paper aims to explore the connection between feasible architectures and the geometrical properties of the dataset.


\section{Main Contributions} \label{sec: main}

In this section, we present our main findings in two forms. Firstly, we establish the upper and lower limits of network width required for classifying a specific dataset. Secondly, we illustrate how trained neural networks inherently capture the geometric characteristics of the dataset they handle. 
 
\subsection{Data Geometry-Dependent Bounds on Widths}
Let $C\subset\Rd^d$ be a convex polytope with $\width$ faces. 
Our objective is to establish bounds on the widths of a ReLU neural network necessary for it to be \exact on $C$.
Applying piecewise linearity of ReLU networks and the volume formula of convex polytopes, the following proposition provides
the answer. 
\begin{proposition} \label{prop: convex polytope}
    Let $C \subset \Rd^d$ be a convex polytope enclosed by $\width$ hyperplanes, and consider $\Xc=\Xc_+\cup\Xc_-$ where $\Xc_+ := C, \Xc_- := \Bc_\varepsilon(\Xc_+)^c$. Then, $\ReLUtwo{d}{m}{1}$ is a \exact on $\Xc$ with minimal depth.
    Conversely, if $\ReLUThree{d}{d_1}{d_2}{\cdots}\ReLUtwo{}{d_k}{1}$ is a \exact on $\Xc$, then  
    $$ \hspace{-0.15cm}
    d_1 \cdot \prod_{j=2}^k (2d_j+1) \ge
    \begin{cases}
        \ceil{\frac{m}{2}} + (d-2), \qquad &\text{ if }\quad  m \geq 2d+1, \\
        2d-1, \qquad &\text{ if }\quad m=2d-1, 2d, \\
        d+1, \qquad &\text{ if }\quad m<2d-1.
    \end{cases}
    $$
    This lower bound is optimal when $k=1$ and $d=2$ (i.e., two-layer network on $\Rd^2$).
\end{proposition}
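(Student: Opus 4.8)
First I would exhibit the two-layer network explicitly. Normalizing the halfspace data so that $\|\wb_k\|_2=1$, set $\Nc(\xb)=\delta-\sum_{k=1}^{m}\sigma(\wb_k^\top\xb+b_k)$ for a small $\delta>0$. If $\xb\in C$ then every $\wb_k^\top\xb+b_k\le0$, so all ReLU terms vanish and $\Nc(\xb)=\delta>0$. If $\xb\notin\Bc_\varepsilon(C)$ then $\mathrm{dist}(\xb,C)\ge\varepsilon$, and Hoffman's error bound for polyhedra yields a constant $L=L(\wb_1,\dots,\wb_m)$ with $\mathrm{dist}(\xb,C)\le L\max_k(\wb_k^\top\xb+b_k)^{+}$; hence $\max_k(\wb_k^\top\xb+b_k)\ge\varepsilon/L$, so $\sum_k\sigma(\wb_k^\top\xb+b_k)\ge\varepsilon/L$ and $\Nc(\xb)\le\delta-\varepsilon/L<0$ once $\delta<\varepsilon/L$. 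Depth two is minimal because a depth-one (affine) network has $\{\Nc>0\}$ an open halfspace, which cannot lie between $C$ and $\Bc_\varepsilon(C)$ unless $C$ is itself a halfspace (too large when $C$ is bounded; otherwise compare recession cones).

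\textbf{The lower bound.} Feasibility forces $C\subseteq R\subseteq\Bc_\varepsilon(C)$ with $R:=\{\Nc\ge0\}$, so $R$ is a bounded polyhedral region whose boundary hugs $\partial C$ from outside within $\varepsilon$. I would control a combinatorial complexity of $\partial R$ --- essentially the number of distinct affine pieces it must contain near $\partial C$ --- from two sides. \emph{From above} (the architecture): $R$ is a sublevel set of a piecewise-linear map, so the layer-by-layer region count for deep ReLU networks applies --- the first hidden layer contributes $d_1$ affine folds in input space, and passing a flat cell through the $j$-th hidden layer ($j\ge2$) subdivides it into at most $2d_j+1$ cells --- which bounds the count by $d_1\prod_{j=2}^{k}(2d_j+1)$. \emph{From below} (the geometry of $C$): fixing $\varepsilon$ small relative to the inner geometry of $C$, for each facet $F_i$ the outward ray from the relative-interior centroid of $F_i$ must cross $\partial R$; these $m$ crossing points are pairwise far apart and $\partial R$ is nearly transverse to $\nb_i$ near each. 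The pyramid/volume identity $\mathrm{vol}(C)=\tfrac1d\sum_{i=1}^{m}\mathrm{dist}(p,\mathrm{aff}\,F_i)\,\mathrm{vol}_{d-1}(F_i)$, valid for any interior point $p$, governs how many facets one flat cell of $\partial R$ can simultaneously shadow, and --- together with the observation that a bounded component of $R$ has boundary a closed piecewise-linear $(d{-}1)$-sphere, hence at least $d+1$ facets (this supplies the floor $d+1$ for small $m$, the value $2d-1$ at $m\in\{2d-1,2d\}$, and $\ceil{m/2}+(d-2)$ for larger $m$) --- produces the three-regime lower bound on the count. Chaining the two estimates gives the asserted inequality.

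\textbf{Optimality when $k=1$, $d=2$.} Here the product is empty and the bound reads $d_1\ge\max\{\ceil{m/2},3\}$, which I would match by an ``inflated polygon''. Take an $\ceil{m/2}$-gon $C_0=\bigcap_{j=1}^{p}\{l_j\le0\}$ with $\|\wb_j\|_2=1$, $p=\ceil{m/2}$, and put $\Nc(\xb)=\tau-\sum_{j=1}^{p}\sigma(l_j(\xb))$. Since $\Nc$ is concave, $\{\Nc>0\}$ is convex; concretely it is $C_0$ with every edge pushed outward and every vertex beveled, i.e.\ a convex polygon with $2p$ edges. Collapsing one bevel to a point when $m$ is odd, and choosing $\tau$ small enough that this polygon lies in $\Bc_{\varepsilon/2}(C)$ for an inscribed $m$-gon $C:=\Xc_+$, makes $\ReLUtwo{2}{\ceil{m/2}}{1}$ feasible on the corresponding $\Xc$ (for $m\le4$ one carves a lightly inflated triangle with three neurons). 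Hence the bound is attained.

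\textbf{Expected main difficulty.} The two complexity counts in the lower bound are the crux: on the geometric side one must make the pyramid/volume argument quantitative and uniform in the small parameter $\varepsilon$ so that the exact three-regime expression --- in particular the gain of a factor close to $2$ rather than $1$ --- comes out, and on the combinatorial side one must propagate the ReLU region count through all $k$ layers so that precisely $d_1\prod_{j\ge2}(2d_j+1)$ (rather than a looser bound) appears; the generic high-dimensional ($d\ge3$) regime of the geometric estimate is where I expect the real work to lie.
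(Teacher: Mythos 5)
Your upper-bound construction is correct but genuinely different from the paper's: you sum the raw constraint violations $\sigma(\wb_k^\top\xb+b_k)$ and invoke Hoffman's error bound to get a uniform violation $\ge\varepsilon/L$ outside $\Bc_\varepsilon(C)$, whereas the paper weights the $k$-th neuron by $\tfrac1d\mu_{d-1}(A_k)$ so that the sum telescopes into the pyramid/volume identity, is identically $1$ on $C$, and is then pushed below $0$ off $\Bc_\varepsilon(C)$ by a large scaling constant. Your route is arguably cleaner (it avoids the compactness/supremum step and does not need $C$ bounded); the paper's route is what motivates its later constructions. Your $k=1$, $d=2$ optimality argument (inflated $\ceil{m/2}$-gon whose superlevel set is a $2\ceil{m/2}$-gon with beveled corners) is also sound and matches the paper's hexagon-from-three-lines example.

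The genuine gap is in the lower bound, which is the substance of the proposition. Your sketch does not contain the two ideas the paper's proof actually rests on. First, the per-neuron counting: in $\Rd^2$ each first-layer hyperplane meets the boundary of a convex region in at most two points, hence contributes at most two non-differentiable ``refraction points'' to the decision boundary, while an $m$-gon forces at least $m$ of them; this is what produces the factor of (essentially) $2$ and the value $\ceil{m/2}$, and it is completed by an induction on $m$ (a neuron serving two vertices can be deleted together with those vertices, reducing to the base cases $f(5)=f(6)=3$). Second, the dimension induction: prisming a $d$-dimensional polytope into $\Rd^{d+1}$ adds exactly two faces and forces two additional hyperplanes, which is where the additive $(d-2)$ and the $2d-1$ regime come from. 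Your proposed substitutes do not yield these: the pyramid/volume identity does not quantify ``how many facets one flat cell of $\partial R$ can shadow'' in any way that produces a factor of $2$, and the observation that a bounded PL boundary needs at least $d+1$ facets gives only the bottom regime $d+1$, not $2d-1$ or $\ceil{m/2}+(d-2)$. (Your propagation of the architecture through the layers, giving $d_1\prod_{j\ge2}(2d_j+1)$ effective hyperplanes, does match the paper's reduction to the two-layer case.) As written, the three-regime formula is asserted rather than derived, and you acknowledge as much; without the refraction-point count and the prism induction the proof of the converse direction is missing.
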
 

\begin{proof}[Proof sketch.]
    We briefly introduce the main idea here.
    Let $A_1, \cdots, A_\width$ be faces of $C$, and $\xb$ be a point in $C$. Since $C$ is convex, it can be decomposed to $m$ pyramids whose common apex is $\xb$ (see Figure \ref{fig: convex polytope}(a)). Then, the volume (in Lebesgue sense) of $C$ is equal to the sum of the volume of $m$ pyramids. Mathematically, it is represented by
    \begin{align*}
        \text{Vol}_d(C) &= \frac1d \sum_{k=1}^\width \text{Vol}_{d-1}(A_k)\sigma(\wb_k^\top\xb+b_k)
    \end{align*}
    where $\wb_i$ is a unit vector of the hyperplane $A_i$, and $\text{Vol}_d$ denotes the $d$-dimensional volume. 
    From this equation, we define a two-layer ReLU network 
    $$
    \Nc(\xb):= 1 + M (\text{Vol}_d(C) - \frac1d \sum_{k=1}^\width \text{Vol}_{d-1}(A_k)\sigma(\wb_k^\top\xb+b_k))
    $$
    for some constant $M$. Then $\Nc(\xb) = 1$ for all $\xb\in C$, and we can prove that $\ReLUtwo{d}{\width}{1}$ is a feasible architecture on the polytope, by adjusting the value of $M$. 
%
    The detailed proof can be found in Appendix \ref{app: proof 1}. 
\end{proof}

In the proof of Proposition \ref{prop: convex polytope}, $\sigma(\Nc)$ is a two-layer ReLU network that approximates the indicator function on a convex polytope.\footnote{We also mention that the approximation of indicator functions directly induces UAP of neural networks (Theorem \ref{thm: regression}).}
Building upon this proposition, we are interested in extending our findings to arbitrary topological spaces, specifically those that can be distinguished by a collection of polytopes. To facilitate this extension, we introduce an additional terminology.

\begin{definition} \label{def: convex polytope cover}
    Let $\Xc := \Xc_+ \cup \Xc_- \subset \Rd^d$ be a union of two disjoint topological spaces. A finite collection of polytopes $\Cc:=\{P_1,\cdots, P_{n_P}, Q_1,\cdots, Q_{n_Q}\}$ is called a \emph{polytope-basis cover of $\Xc$} if it satisfies
    \vspace{-1mm}
    \begin{align*}
        \sum_{k=1}^{n_P} \indicator{\xb \in P_k} &\;>\; \sum_{k=1}^{n_Q} \indicator{\xb \in Q_k} \qquad \text{for all} \quad \xb \in \Xc_+, \\
        \sum_{k=1}^{n_P} \indicator{\xb \in P_k} &\;\le\; \sum_{k=1}^{n_Q} \indicator{\xb \in Q_k} \qquad \text{for all} \quad \xb \in \Xc_-.
    \end{align*}
\end{definition}
\vspace{-3mm}
Roughly speaking, a polytope-basis cover of $\Xc$ is a polytope covering of $\Xc_+$ and $\Xc_-$ that admits overlapping, where the difference number of overlapped covers is restricted to be positive or negative with respect to the label. Below, we provide an example of a polytope-basis cover for the swiss roll dataset described in Figure \ref{fig: swiss}(a).

\begin{example} 
Let $\Xc_+, \Xc_-$ be the orange and blue class in Figure \ref{fig: swiss}(a), respectively. Figure \ref{fig: convex polytope}(b) demonstrates a polytope-basis cover of $\Xc$ consists of four convex polytopes: $P_1, P_2, Q_1,Q_2$. It is easily checked that $\sum_{k=1}^2\indicator{\xb \in P_k} - \sum_{k=1}^2\indicator{\xb \in Q_k} =1 >0$ for $\forall \xb \in \Xc_+$, while $\sum_{k=1}^2\indicator{\xb \in P_k} - \sum_{k=1}^2\indicator{\xb \in Q_k} =0$ for $\forall \xb \in \Xc_-$. 
\end{example}

The usefulness of polytope-basis covers appears in the following theorem: we can derive an upper bound of \exact on $\Xc$ from its polytope-basis cover, by applying the constructive proof used in Proposition \ref{prop: convex polytope}. 

\begin{theorem} \label{thm: compact}
    For a given topological space $\Xc =\Xc_+\cup\Xc_-\subset \Rd^d$, let $\Cc = \{P_1,\cdots, P_{n_{P}}, Q_1,\cdots, Q_{n_Q}\}$ be a polytope-basis cover of $\Xc$. Let $\width$ denote the total number of faces of the convex polytopes in $\Cc$.
    Then, 
    \begin{align*}
        \ReLUthree{d}{\width}{(n_P+n_Q)}{1}
    \end{align*}
    \vspace{-2mm}
    is a \exact on $\Xc$.
\end{theorem}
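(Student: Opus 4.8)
The plan is to construct an explicit three-layer ReLU network realizing the separation, by stacking the two-layer indicator-approximating networks from Proposition~\ref{prop: convex polytope} and combining them in the second hidden layer according to the polytope-basis cover structure. Concretely, for each polytope $P_k$ (with, say, $m_k$ faces) and each $Q_k$ (with $m_k'$ faces), the constructive argument behind Proposition~\ref{prop: convex polytope} gives a two-layer ReLU network whose post-sigmoid value approximates $\indicator{\xb\in P_k}$; the first layer of such a network uses exactly $m_k$ (resp.\ $m_k'$) neurons. Collecting all of these first layers in parallel uses $\sum_k m_k + \sum_k m_k' = \width$ neurons in the first hidden layer. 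I would then take the second hidden layer to have $n_P+n_Q$ neurons, one per polytope, each computing (a ReLU of) the affine readout $1 + M(\mathrm{Vol}_d(P_k) - \tfrac1d\sum_{j}\mathrm{Vol}_{d-1}(A_j^{(k)})\sigma(\wb_j^{(k)\top}\xb+b_j^{(k)}))$ that appeared in that proof, so that neuron $k$ outputs something close to $\indicator{\xb\in P_k}$ and likewise for the $Q$-neurons. The final linear layer outputs $\sum_{k=1}^{n_P} (\text{$P_k$-neuron}) - \sum_{k=1}^{n_Q}(\text{$Q_k$-neuron}) - \tfrac12$, which by Definition~\ref{def: convex polytope cover} is positive on $\Xc_+$ and negative on $\Xc_-$ once the indicator approximations are sharp enough.

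The key steps, in order: (i) recall from the proof of Proposition~\ref{prop: convex polytope} the exact form of the two-layer network $\Nc_k$ with $\sigma(\Nc_k)\approx \indicator{\cdot\in P_k}$, and note that $\Nc_k$ itself is affine in the first-layer ReLU features; (ii) observe that placing all $n_P+n_Q$ such networks in parallel needs only $\width$ first-layer neurons total, since the first-layer features for the $k$-th polytope are precisely the $m_k$ coordinates $\sigma(\wb_j^{(k)\top}\xb+b_j^{(k)})$; (iii) in the second hidden layer, devote one neuron to each polytope computing $\sigma(\text{rescaled }\Nc_k)$ or directly $\texttt{SIG}$-like behavior via a large multiplier $M$ — here one must be slightly careful that the second layer is a ReLU layer, so rather than a sigmoid I would use the clipped-linear trick $\min\{1,\max\{0,\cdot\}\} = \sigma(\cdot) - \sigma(\cdot-1)$... but that doubles neurons, so instead I would keep $M$ finite and argue the error is controllable; (iv) take the final linear combination with coefficients $+1$ for $P$-neurons, $-1$ for $Q$-neurons, and bias $-\tfrac12$, and invoke the strict inequalities in Definition~\ref{def: convex polytope cover} (the left side is integer-valued, so the gap between the $\Xc_+$ and $\Xc_-$ cases is at least $1$) to absorb the approximation error by choosing $M$ large.

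The main obstacle I anticipate is step (iii): the second hidden layer must be a genuine ReLU layer with only $n_P+n_Q$ neurons, yet we want each such neuron to behave like a (clipped) indicator, and a single ReLU cannot clip from above. The cleanest resolution is to exploit that we only need the \emph{difference} $\sum \indicator{P_k} - \sum\indicator{Q_k}$ to land on the correct side of a threshold: since each $\sigma(\Nc_k)$ with large $M$ is close to $\indicator{\xb\in P_k}$ away from a measure-zero boundary, and the boundary points can be handled by slightly shrinking/enlarging the polytopes (using the $\varepsilon$-neighbourhood slack already present in how $\Xc_\pm$ relate) — or, more simply, by noting that on the \emph{finite} or \emph{compact} sets $\Xc_+,\Xc_-$ one is uniformly bounded away from the polytope boundaries after an arbitrarily small perturbation, so finite $M$ suffices. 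A secondary technical point is bookkeeping the total first-layer width: I must confirm that no first-layer neuron is ``shared'' in a way that changes the count, i.e.\ that $\width$ is exactly $\sum_{\text{polytopes}} (\#\text{faces})$ as stated, which is immediate from the definition of $\width$ in the theorem. With these pieces the constant $M$ can be chosen uniformly, completing the construction. The detailed verification I would defer to an appendix, mirroring the structure of the Proposition~\ref{prop: convex polytope} proof.
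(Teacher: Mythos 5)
Your construction is correct and is essentially the paper's own proof: place the Proposition~\ref{prop: convex polytope} two-layer networks in parallel (first hidden layer = all $\width$ faces), devote one second-layer ReLU neuron to each polytope, and output the signed sum of these neurons minus $\tfrac12$, invoking the integer gap in Definition~\ref{def: convex polytope cover}. The only simplification you missed is that your upper-clipping worry in step (iii) is moot: the Proposition~\ref{prop: convex polytope} network satisfies $\Tc_A(\xb)\le 1$ everywhere by construction (it equals $1$ plus $M$ times a nonpositive quantity), so $\sigma(\Tc_A)\in[0,1]$ automatically and no second ReLU per polytope is needed.
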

The proof can be found in Appendix~\ref{app: proof compact}.
One of the important contributions of Theorem~\ref{thm: compact} is that its construction exhibits the exact role of each neuron in the hidden layers. It demonstrates that for a given polytope-baiss cover, a three-layer ReLU network with widths $\#$(hyperplanes) and $\#$(polytopes) in the first and second hidden layer, respectively, is a feasible architecture. 
For instance, we recall the polytope-basis cover represented in Figure \ref{fig: convex polytope}(b).
Each neuron in the first hidden layer represents a hyperplane in the input space, where each neuron in the second hidden layer represents a convex polytope ($P_i$ or $Q_j$) in $\Cc$ that is formed by connected neurons in the first layer as depicted in Figure \ref{fig: convex polytope}(c).

Building upon Theorem~\ref{thm: compact}, we can further explore the relationship between the topological properties of a dataset and the maximum width achievable by feasible network architectures. 
Specifically, we concentrate on simplicial complexes and Betti numbers, which are fundamental tools for investigating the topological structure of point cloud datasets in topological data analysis (TDA).

A simplicial $j$-complex is a specific type of simplicial complex where the highest-dimensional simplex has dimension $j$. Within a given simplicial complex $K$, a facet is a simplex with the highest dimension that is not a face (subset) of any larger simplex \citep{magai2022topology}.
With these definitions established, we proceed by proposing a feasible network architecture and deriving upper bounds on its width when one class $\Xc_+$ forms a simplicial complex.

\begin{theorem} \label{thm: simplicial complex}
    Let $\Xc=\Xc_+\cup\Xc_-$ be a union of two disjoint topological spaces, where $\Xc_+\subset \Rd^d$ is a simplicial $J$-complex consists of $k$ facets. Let $k_j$ be the number of $j$-dimensional facets of $\Xc_+$ for $j=1,\cdots,J$. Then, $\ReLUthree{d}{d_1}{k}{1}$ is a \exact on $\Xc$, where $d_1$ is bounded by
    \vspace{-2mm}
    \begin{align} \label{eq: simplicial complex}
        d_1 & \le
        \min\Bigg\{ k(d+1) - (d-1)\bigg\lfloor{ \sum_{j=0}^{\floor{\frac{d-1}{2}}} \frac{k_j}{2}}\bigg\rfloor, \nonumber \\
        & 
        (d+1)\! \bigg\lfloor\sum_{j\le \frac{d}{2}} \left(  k_j \frac{j+2}{d-j} + \frac{j+2}{j+1} \right) + \sum_{j>\frac{d}{2}} k_j \bigg\rfloor \Bigg\}. 
    \end{align}
\end{theorem}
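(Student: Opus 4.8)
The plan is to reduce Theorem~\ref{thm: simplicial complex} to Theorem~\ref{thm: compact} by exhibiting a polytope-basis cover of $\Xc$ whose total face count is bounded by the right-hand side of \eqref{eq: simplicial complex}, while using exactly $k$ polytopes in the cover (one per facet), so that the second hidden layer has width $k$. The natural first step is to thicken each facet: a $j$-dimensional simplex $\Delta$ sitting inside $\Rd^d$ is, by itself, a lower-dimensional set, so I would replace it by a small $\varepsilon$-neighborhood $\Bc_\varepsilon(\Delta)$, or more conveniently by a full-dimensional convex polytope $P_\Delta \supset \Delta$ contained in $\Bc_\varepsilon(\Delta)$, chosen thin enough that the union $\bigcup_\Delta P_\Delta$ still avoids $\Xc_-$ (possible since $\Xc_+$ is compact and disjoint from the closed set $\Xc_-$, or by shrinking the $\varepsilon$ defining $\Xc_-=\Bc_\varepsilon(\Xc_+)^c$-type separation). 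Taking $\Cc=\{P_\Delta : \Delta \text{ a facet}\}$ with $n_P = k$ and $n_Q = 0$ gives a genuine polytope-basis cover: on $\Xc_+$ each point lies in some facet hence in some $P_\Delta$, so the sum of indicators is $\ge 1 > 0$, and on $\Xc_-$ no $P_\Delta$ is hit so the sum is $0$. Then Theorem~\ref{thm: compact} immediately yields a feasible architecture $\ReLUthree{d}{\width}{k}{1}$ with $\width$ the total number of faces, and it remains only to bound $\width$.

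The heart of the proof is therefore the combinatorial bound on $\width = \sum_\Delta \#\text{faces}(P_\Delta)$. For a single $j$-dimensional facet I would build $P_\Delta$ as a "prism-like" thickening of the $j$-simplex: a $j$-simplex has $j+1$ facets (bounding hyperplanes within its affine hull), and thickening in the $d-j$ normal directions contributes a bounded number of additional bounding hyperplanes. A crude count gives each $P_\Delta$ on the order of $d+1$ faces when built as an intersection of a simplex-type region with a slab in each normal direction — or more simply $j+1$ in-plane faces plus $2(d-j)$ slab faces — and summing over all $k$ facets gives the ambient-dimension-driven term $(d+1)\lfloor \sum_j (\dots)\rfloor$ in the second branch of the min. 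The sharper first branch, $k(d+1) - (d-1)\lfloor \sum_{j\le (d-1)/2} k_j/2\rfloor$, must come from \emph{sharing} bounding hyperplanes between adjacent thickened facets: when two low-dimensional facets meet along a common sub-face, the polytopes $P_\Delta, P_{\Delta'}$ can be made to share the hyperplane(s) separating them, saving $d-1$ faces per such shared pair, and pairing up low-dimensional facets ($j \le \lfloor (d-1)/2\rfloor$, where the codimension is large enough for this to be possible) yields the $\lfloor \sum k_j/2\rfloor$ factor. I would formalize the sharing via a matching argument on the incidence structure of the simplicial complex.

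I expect the main obstacle to be the second (and sharper first) bound's precise constants: proving that the thickened polytope $P_\Delta$ of a $j$-facet can be realized with exactly the claimed number of faces, and that the face-sharing across adjacent facets can be done simultaneously for all pairs in a matching without the shared hyperplanes forcing the polytopes to intersect $\Xc_-$ or to fail convexity. In particular the weights $\frac{j+2}{d-j}$ and $\frac{j+2}{j+1}$ in the second branch suggest a more delicate construction than a naive prism — possibly grouping several facets into one polytope, or an inductive thickening where a $j$-facet's polytope reuses faces of its $(j-1)$-subfaces' polytopes, so the bookkeeping of which hyperplanes are charged to which facet is the crux. My plan is to first nail down the single-facet construction and its exact face count (the $j=d$ case reduces to Definition~\ref{def: convex polytope} with a $d$-simplex, $d+1$ faces, matching $(d+1)\cdot\frac{d+2}{d+1}=d+2$ up to the floor and grouping), then handle the low-codimension regime separately using the normal-direction slack, and finally assemble the global count and invoke Theorem~\ref{thm: compact}; verifying that both expressions in the $\min$ are simultaneously valid upper bounds (so the $\min$ is legitimate) is a routine consequence of giving two constructions.
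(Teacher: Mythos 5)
Your overall reduction --- thicken each facet into a convex polytope, take these $k$ polytopes as a polytope-basis cover with $n_Q=0$, and invoke Theorem~\ref{thm: compact} --- is exactly the paper's route, and your identification of the second-layer width with the number $k$ of facets is correct. But the two combinatorial mechanisms you propose for the face count are not the ones that make the bounds work, and one of them fails outright. First, the single-facet construction: the paper (Lemma~\ref{lem: simplex}) encloses a $j$-simplex in a full $d$-simplex lying inside its $\varepsilon$-neighborhood by adjoining $d-j$ auxiliary vertices, which costs exactly $d+1$ hyperplanes. Your prism-style thickening ($j+1$ in-plane faces plus $2(d-j)$ slab faces) costs $2d-j+1>d+1$ whenever $j<d$, so it does not even recover the baseline $k(d+1)$.

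Second, and more seriously, your explanation of the saving $(d-1)\bigl\lfloor\sum_{j}k_j/2\bigr\rfloor$ via hyperplane-sharing between \emph{adjacent} facets, formalized by a matching on the incidence structure, cannot be right: the bound must also hold for a simplicial complex consisting of $k$ pairwise disjoint low-dimensional facets (e.g.\ isolated vertices or edges in $\Rd^d$), where the incidence matching is empty and your argument yields no saving at all. The paper's mechanism is adjacency-free: any two $j$-simplices with $2(j+1)\le d+1$ vertices in total can be \emph{jointly} enclosed in a single $d$-simplex (their vertices together with enough auxiliary points span one inside a small neighborhood), after which two extra hyperplanes separate the two facets inside it; the pair then costs $d+3$ hyperplanes instead of $2(d+1)$, a saving of $d-1$ for an arbitrary pair, whence the $\lfloor\cdot/2\rfloor$. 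The second branch of the min is obtained the same way by packing $\lfloor(d+1)/(j+1)\rfloor$ many $j$-facets' vertex sets into one $d$-simplex and adding that many internal separating hyperplanes. You correctly sensed that ``grouping several facets into one polytope'' is needed for the second branch, but without the joint-enclosure idea neither branch of \eqref{eq: simplicial complex} is reachable.
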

\vspace{-2mm}
The proof can be found in Appendix \ref{app: proof 2}.
Theorem \ref{thm: simplicial complex} reveals that the width $d_1$ is bounded by in terms of the number of facets $k$ of the provided simplicial complex. 
From a geometric perspective, it is generally intuitive that a smaller number of facets suggests a simpler structure of the simplicial complex. 
This notion is mathematically expressed in \eqref{eq: simplicial complex}, which suggests that the first value in \eqref{eq: simplicial complex} results in $d_1 \lesssim \frac{k}{2}(d+3)$, which magnifies as $k$ increases.
Similarly, when the maximal dimension $J$ is smaller than $\frac{d}{2}$ and $k$ is fixed, the summation in the second term in \eqref{eq: simplicial complex} reduces to $d_1 \lesssim (d+1) \left( k\frac{J+2}{d-J} + 2 \right)$, which rapidly diminishes as $J$ decreases.
This analysis demonstrates that a smaller dimension $J$ demands smaller widths, which aligns with the intuition that \emph{the lower-dimensional manifold could be approximated with the smaller number of neurons.}

Now, we demonstrate how the result in Theorem \ref{thm: compact} can be further leveraged to ascertain a neural network architecture with width bounds defined in terms of the Betti numbers. 
The Betti number is a key metric used in TDA to denote the number of $k$-dimensional `holes' in a data distribution, which are frequently employed to study the topological characteristics of topological spaces \citep{naitzat2020topology}.

Recall that Theorem \ref{thm: compact} offers an upper bound on widths when $\Xc$ can be depicted as a difference between two groups of convex sets. Expanding on this, assuming the polytope-basis cover consists of prismatic polytopes\footnote{For the formal definition of prismatic polytopes, see Appendix \ref{app: proof 3}}, we can derive a bound for network architecture in relation to its Betti numbers.
The result is concretely explained in the following theorem.

\begin{theorem} \label{thm: betti numbers}
    Let $\Xc=\Xc_+\cup\Xc_-$ be a union of two disjoint topological spaces, where $\Xc_-$ can be separated from $\Xc_+$ by disjoint bounded prismatic polytopes having at most $\width$ faces. 
    Let $\beta_k$ be the $k$-th Betti number of the polytope-basis cover. Then, the following three-layer architecture
    \begin{align} \label{eq: betti numbers}
        d & \stackrel{\sigma}{\rightarrow} \left( \width + 2(\beta_0-1) + \sum_{k=1}^{d} \left(\width-2(d-k-1)\right)\beta_k \right) \nonumber \\
        & \ReLUtwo{}{\left(\sum_{k=0}^{d} \beta_k \right)}{1}
    \end{align}
    is a \exact on $\Xc$. Conversely, for any such $\Xc$, suppose $\ReLUFour{d}{d_1}{d_2}{\cdots}{d_D}{\rightarrow}{1}$ is a \exact on $\Xc$. Then, the network widths must satisfy
    \begin{align} \label{eq: betti lower bound}
         \sum_{i=1}^D \prod_{j=i}^D d_j \ge 2\sum_{k=0}^{d} \beta_k - 2.
    \end{align}
\end{theorem}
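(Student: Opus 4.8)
Both directions go through Theorem~\ref{thm: compact}, which identifies three‑layer ReLU architectures with polytope‑basis covers: the first hidden width is the total number of faces and the second hidden width is the number of polytopes. So the upper bound amounts to \emph{building} a good polytope‑basis cover from the separating prismatic polytopes, and the lower bound amounts to showing that \emph{every} feasible net of a given architecture induces a polytope‑basis cover that cannot be too small, because the net's positive region is forced to carry the topology of $\Xc$.

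\textbf{Upper bound.} The plan is to construct $\Cc=\{P_1,\dots,P_{n_P},Q_1,\dots,Q_{n_Q}\}$ explicitly. Take one ``outer'' prismatic polytope $P_1$ (at most $\width$ faces) containing a fixed connected component of $\Xc_+$; for each of the remaining $\beta_0-1$ components, insert a thin prismatic slab (two faces) as an extra $Q$ that separates it off inside a common outer prism — this accounts for the $\width+2(\beta_0-1)$ contribution and adds $\beta_0-1$ polytopes. Next, for each independent $k$‑dimensional hole of the cover ($1\le k\le d$) insert one prismatic ``plug'' $Q$ (and an extra interior $P$ when $k=d$): by the Alexander‑duality picture, a $k$‑hole in $\Rd^d$ is created by removing a prism of the appropriate dimension over a $(d-k-1)$‑dimensional polytope, and since a prism over an $f$‑face polytope has $f+2$ faces, one can align $2(d-k-1)$ of the plug's facets with hyperplanes already present, so its net cost is $\width-2(d-k-1)$ faces; summing over the $\beta_k$ such holes gives the middle width in \eqref{eq: betti numbers} and brings the polytope count to $\beta_0+\sum_{k\ge1}\beta_k=\sum_{k=0}^d\beta_k$, the third width. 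Then invoke Theorem~\ref{thm: compact}. The step needing care is checking that this assignment really satisfies Definition~\ref{def: convex polytope cover} (the signed indicator sum is positive on $\Xc_+$, nonpositive on $\Xc_-$) and that the signed complex it defines has exactly the prescribed $\beta_k$; this is a Mayer–Vietoris / nerve computation on the arrangement of the $P$'s and $Q$'s, using disjointness of the separating prismatic polytopes.

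\textbf{Lower bound.} Let $\Nc$ realize $d\stackrel{\sigma}{\rightarrow}d_1\stackrel{\sigma}{\rightarrow}\cdots\stackrel{\sigma}{\rightarrow}d_D\rightarrow1$ and be feasible on $\Xc$. There are two ingredients. (i) An architecture‑side estimate: $\Nc$ is piecewise linear, and a layer‑by‑layer induction — each hidden unit of layer $i$ can create at most one new breakpoint per existing linear piece along a generic line, and a layer‑$i$ unit has $\prod_{j=i}^D d_j$ directed paths to the output — bounds the number of sign changes of $\Nc$ along a generic line, hence the total Betti number of the open set $\{\Nc>0\}$, by $\tfrac12\sum_{i=1}^D\prod_{j=i}^D d_j+1$; the factor $\tfrac12$ is the elementary fact that $N$ sign changes of a one‑variable PL function bound the number of positive intervals by $\lceil (N+1)/2\rceil$, upgraded to all Betti numbers of the ambient set by a ``count critical values along generic linear projections'' argument. (ii) A topology‑side lower bound: $\{\Nc>0\}\supseteq\Xc_+$ is open and disjoint from $\{\Nc<0\}\supseteq\Xc_-$, and the separating prismatic polytopes lie in the gap between the classes, so the zero set is pinched through that gap and $\{\Nc>0\}$ deformation retracts onto a set that still contains (a copy of) the polytope‑basis cover; hence $\sum_k\beta_k(\{\Nc>0\})\ge\sum_{k=0}^d\beta_k$. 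Chaining (i) and (ii) gives \eqref{eq: betti lower bound}.

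\textbf{Main obstacle.} I expect the hard part to be ingredient (i) of the lower bound: obtaining the \emph{exact} constant (the coefficient $\tfrac12$ and the additive $+1$, which is what produces the ``$-2$'' in \eqref{eq: betti lower bound}) requires careful inductive bookkeeping of how breakpoints are created and \emph{shared} across the $d_i$ units of a layer, and then a clean passage from the line‑restricted count to a genuine bound on all Betti numbers of the $d$‑dimensional superlevel set — the usual subtlety being components or higher cycles of $\{\Nc>0\}$ that a fixed generic line misses, which must be recovered by varying the direction / a Morse‑type argument on a generic linear projection. The upper‑bound construction, by contrast, is essentially disciplined combinatorics once the prism face count $f\mapsto f+2$ and the correct dimension of the plug for a $k$‑hole are pinned down.
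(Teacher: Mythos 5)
Your upper bound is essentially the paper's construction: one outer bounded prismatic polytope with $\width$ faces, $2(\beta_0-1)$ extra hyperplanes to split off the remaining connected components, a prismatic ``plug'' per $k$-dimensional hole whose face count is discounted by the $2(d-k-1)$ facets it shares with existing hyperplanes, and then Theorem~\ref{thm: compact} applied to the resulting cover of $\sum_k\beta_k$ polytopes. That part is fine (the paper is no more rigorous than you are about verifying Definition~\ref{def: convex polytope cover} for the signed cover).

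The lower bound is where you diverge from the paper, and your route has a genuine gap. You try to prove the bound for \emph{every individual} feasible $\Xc$ by sandwiching $\sum_k\beta_k(\{\Nc>0\})$ between a topology-side lower bound and an architecture-side upper bound, and both halves are unsupported. On the architecture side, a restriction to a single generic line only controls $\beta_0$ of the one-dimensional slice; the higher Betti numbers of the $d$-dimensional superlevel set $\{\Nc>0\}$ are simply invisible to any one line, and the ``vary the direction / Morse on generic projections'' step you gesture at is precisely where a real (and delicate) proof would have to live --- there is no reason the constant $\tfrac12\sum_i\prod_j d_j+1$ survives that upgrade. On the topology side, $\{\Nc>0\}\supseteq\Xc_+$ and $\{\Nc>0\}\cap\Xc_-=\emptyset$ do \emph{not} force $\sum_k\beta_k(\{\Nc>0\})\ge\sum_k\beta_k$ of the cover: Betti numbers are not monotone under inclusion, and unless $\Xc_-$ actually occupies each removed prism, the network's positive region is free to fill holes in. The paper sidesteps both problems by reading the converse as a worst-case statement (cf.\ its phrasing ``universally feasible'' in Appendix~\ref{app: proof 3} and Proposition~\ref{prop: no topology}): it \emph{constructs} one $\Xc$ with the prescribed Betti numbers all of whose holes meet a single straight line $\ell$, so that feasibility forces at least $2\sum_k\beta_k$ sign alternations of $\Nc$ along $\ell$, and then it only needs the purely one-dimensional breakpoint count --- $d_1$ crossings from the first layer, at most $d_2(d_1+1)$ from the second (each second-layer activation boundary is a bent hyperplane refracting on the first-layer arrangement), and so on --- giving at most $1+\sum_{i=1}^D\prod_{j=i}^Dd_j$ linear pieces on $\ell$. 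Chaining those two one-dimensional counts yields \eqref{eq: betti lower bound} with no need for a global Betti-number bound on $\{\Nc>0\}$. To repair your argument you would either have to adopt this worst-case reduction or actually prove the two missing lemmas, neither of which is routine.
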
 

The proof is provided in Appendix \ref{app: proof 3}.
One of the important implications of Theorem \ref{thm: betti numbers}  is the upper and lower bounds on network widths in terms of the Betti numbers of $\Xc$, which reveals the interplay between the topological characteristics of the dataset and network architectures.
In Appendix,
we also show in Proposition \ref{prop: no topology} that topological property alone cannot determine the feasible architecture,
highlighting the significance of prismatic polytopes assumption in Theorem \ref{thm: betti numbers}.
In other words, the result in Proposition \ref{prop: no topology} implies that the geometrical assumptions in this theorem are indispensable to connecting topological features with bounds on the network widths.

Interestingly, the sum of Betti numbers $\sum_{k=0}^d \beta_k$ which appears in the third layer in \eqref{eq: betti numbers}, is often called the \emph{topological complexity} of $\Xc$. This quantity is recognized as a measure of the complexity of a given topological space in some previous works \citep{bianchini2014complexity, naitzat2020topology}, and can be bounded by Morse theory \citep{milnor1963morse} or Gromov's Betti number Theorem \citep{gromov1981curvature}. 

Furthermore, the lower bound on widths \eqref{eq: betti lower bound} shows that the sum of product of widths should be greater than the sum of Betti numbers. 
This finding also confirms the increased significance of widths in deeper layers as compared to earlier ones, highlighting the advantageous impact of depth in network architecture.
It also verifies that the contribution of the width in deeper layers holds greater significance compared to previous layers, i.e., the positive effect of depth.

\subsection{Polytope-Basis Cover Search Algorithm} \label{sec: algorithms}

\begin{figure*}[ht]
    \centering
    \includegraphics[width=0.98\textwidth]{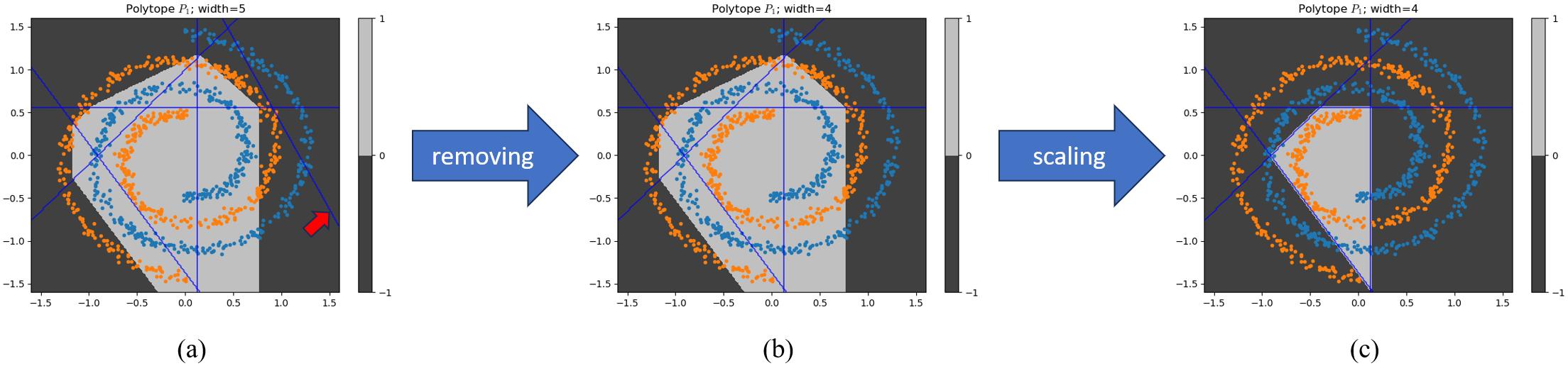}
    \caption{
    Visualization of Algorithm \ref{alg: compressing}. For a given two-layer network $\Tc$ defined by \eqref{eq: two-layer constant bias}, it strategically removes and scales specific neurons of $\Tc$ to encapsulate the characteristics of a single convex polytope. In essence, the algorithm compresses the network to reveal the minimal representation of a polytope structure.}
    \label{fig: compressing}
\end{figure*}

\begin{figure*}[ht]
    \centering
    \subfigure[]{\includegraphics[width=0.14\textwidth]{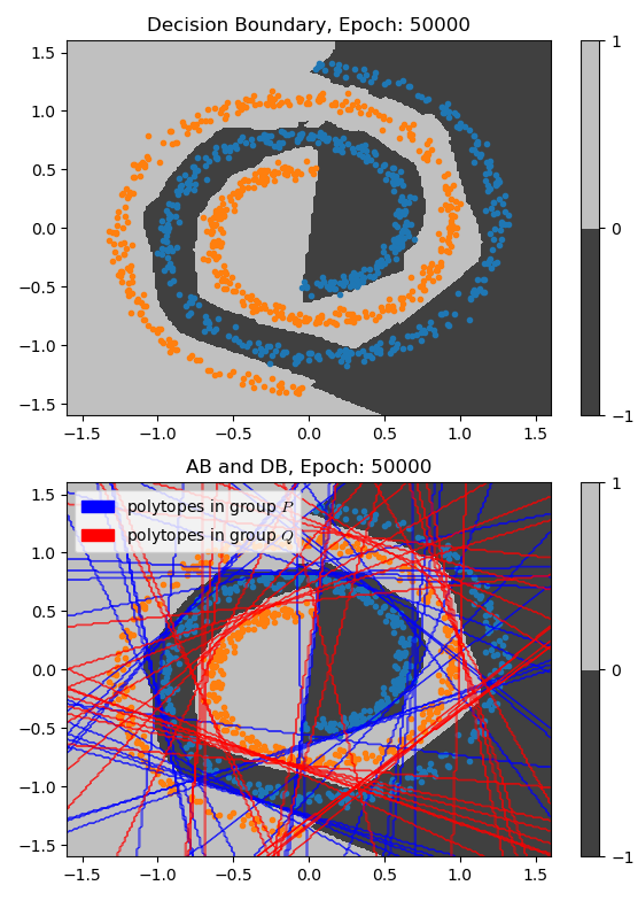}}
    \hfill
    \subfigure[]{\includegraphics[width=0.42\textwidth]{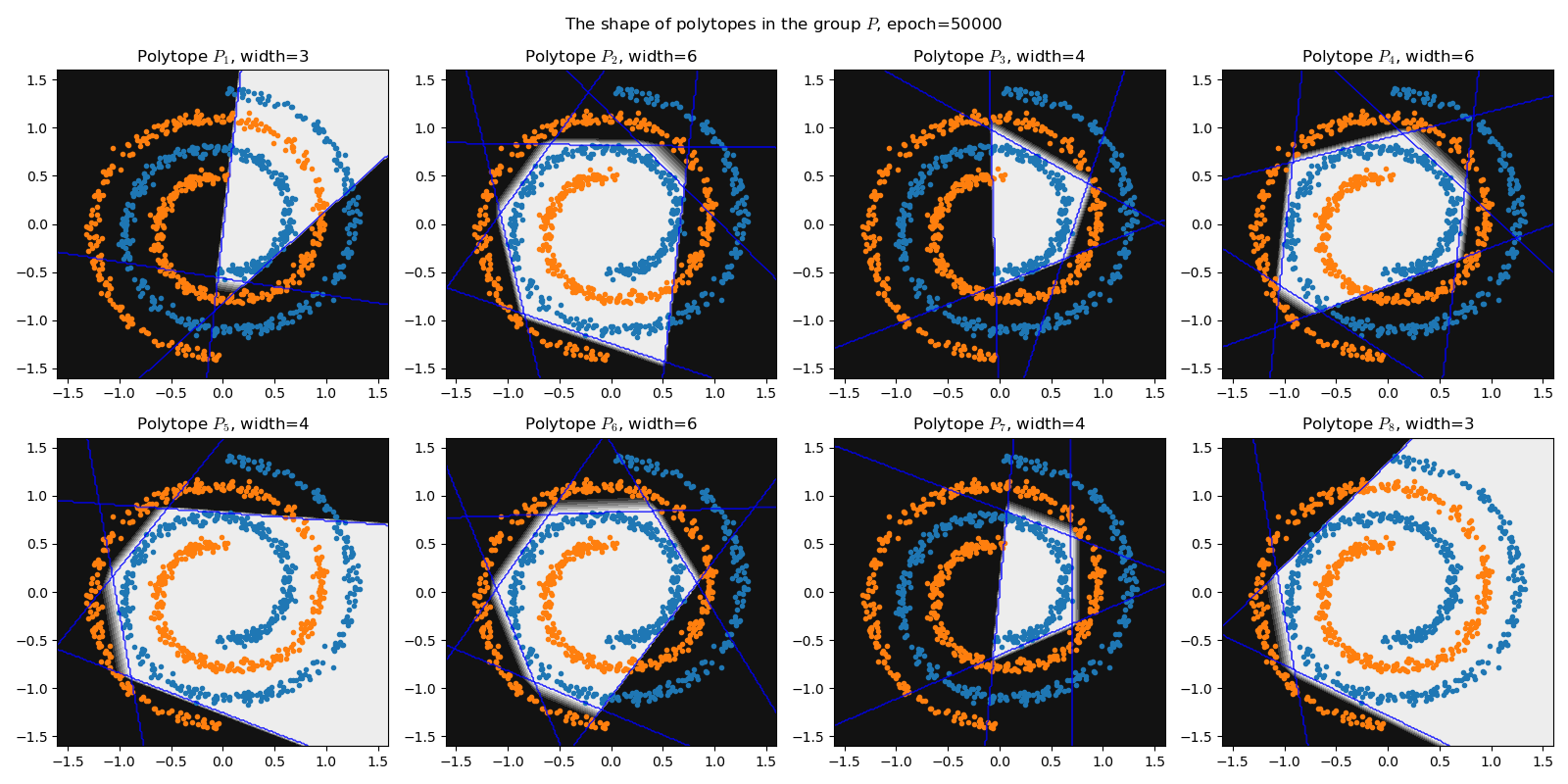}}
    \hfill
    \subfigure[]{\includegraphics[width=0.42\textwidth]{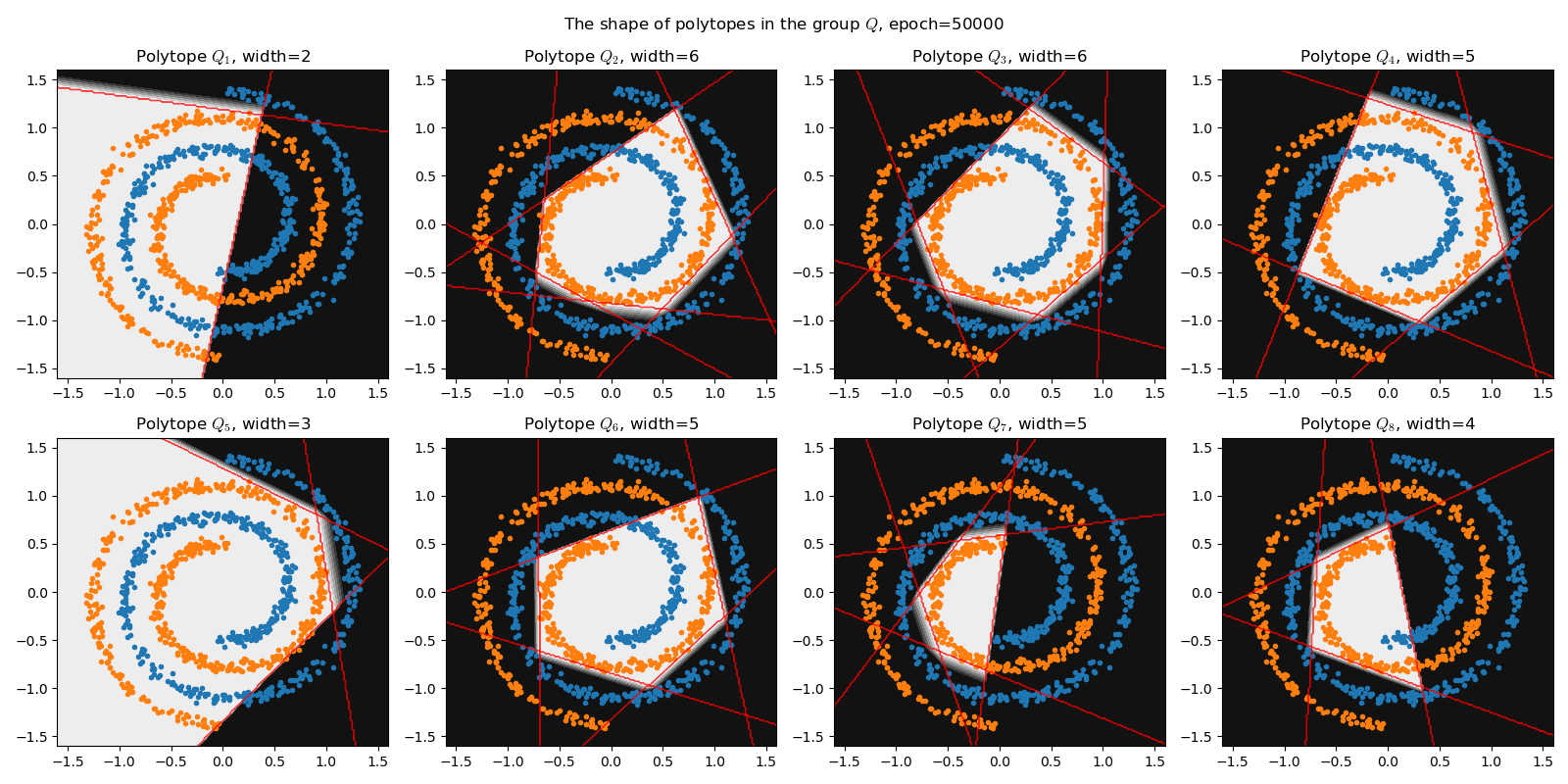}}
    \caption{
    A polytope-basis cover derived from a trained three-layer ReLU network defined in \eqref{eq: three-layer}, obtained from Algorithm \ref{alg: compressing}. The decision boundary and activation boundaries\footnotemark$\;$ of the trained network are depicted in (a). Each polytope corresponding to $a_j=+1$ and $a_j=-1$ is illustrated in (b) and (c), respectively. The result constitutes a polytope-basis cover of the swiss roll dataset.
    }
    \label{fig: three-layer}
\end{figure*}

So far, we have demonstrated how feasible architecture can be determined from the geometric characteristics of a topological space $\Xc$, in terms of its polytope-basis cover. 
In this section, we delve into the converse scenario: given a trained neural network on the dataset $\Dc=\{(\xb_i,y_i)\}$, can we obtain a polytope-basis cover of $\Dc$? 
We tackle this question by leveraging the convexity of two-layer ReLU networks, which has been studied in a few previous works \cite{amos2017input, sivaprasad2021curious, balestriero2022deephull} (see Appendix \ref{sec: related} for related works). Our focus also extends to the precise computation of the number of faces. 

\newcommand{\lambdabias}{\lambda}
\begin{theorem} \label{thm: three-layer polytope cover}
    Let $\Tc_j$ and $\Nc$ be two-layer and three-layer ReLU networks defined by 
    \begin{align} 
        \Tc_j(\xb) &:= \lambdabias + \sum_{k=1}^{\width_j} v_{jk} \sigma(\wb_{jk}^\top \xb + b_{jk}), \quad \forall v_{jk}<0, \label{eq: two-layer constant bias} \\
        \Nc(\xb) &:= -\frac12 \lambdabias + \sum_{j=1}^J a_j\sigma(\Tc_j(\xb)), \quad \forall a_j \in \{\pm 1\} \label{eq: three-layer}
    \end{align}
    for a positive constant $\lambdabias$.
    For a given dataset $\Dc=\{(\xb_i, y_i)\}$, suppose $\Nc$ satisfies 
    \begin{align} \label{eq: three-layer condition}
        \sigma(\Tc_j(\xb_i)) = 0 \;\textrm{ or }\; \lambdabias, \quad \forall \xb_i\in \Dc, \;\forall j\in[J].
    \end{align}
    Then, the collection of polytopes $\{C_j\}_{j\in[J]}$, defined by $C_j:=\{\xb\in\Rd^d~|~ \Tc_j(\xb)=\lambdabias\}$, becomes a polytope-basis cover of $\Dc$ whose accuracy is same with $\Nc$.
\end{theorem}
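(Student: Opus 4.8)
The plan is to show that each $C_j$ is a genuine convex polytope with a controlled number of faces, and then to verify the two defining inequalities of a polytope-basis cover (Definition~\ref{def: convex polytope cover}) directly from the structural hypotheses on $\Nc$ and $\Tc_j$. First I would observe that, since every $v_{jk} < 0$, the function $\xb \mapsto \Tc_j(\xb) = \lambdabias + \sum_k v_{jk}\sigma(\wb_{jk}^\top\xb + b_{jk})$ is concave, so the superlevel set $\{\xb : \Tc_j(\xb) \ge \lambdabias\}$ is convex; moreover $\Tc_j(\xb) \le \lambdabias$ always (the sum of nonpositive terms), so $\{\xb : \Tc_j(\xb) = \lambdabias\} = \{\xb : \Tc_j(\xb) \ge \lambdabias\}$ is exactly the set where every active term vanishes, i.e. $C_j = \bigcap_{k=1}^{\width_j}\{\xb : \wb_{jk}^\top\xb + b_{jk} \le 0\}$ (using $v_{jk}<0$ to flip the sign). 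This is precisely Definition~\ref{def: convex polytope}, a convex polytope with at most $\width_j$ faces, which also pins down the face count.

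Next I would translate condition \eqref{eq: three-layer condition} into the statement that, for every data point $\xb_i$, the quantity $\sigma(\Tc_j(\xb_i))$ is either $0$ or $\lambdabias$, and that $\sigma(\Tc_j(\xb_i)) = \lambdabias$ holds iff $\xb_i \in C_j$ (using that $\Tc_j(\xb_i) \le \lambdabias$ so $\sigma(\Tc_j(\xb_i)) = \Tc_j(\xb_i)$ whenever it is positive, and $\Tc_j(\xb_i) = \lambdabias \iff \xb_i \in C_j$). Plugging this into \eqref{eq: three-layer} gives, for each data point,
\begin{align*}
    \Nc(\xb_i) = -\tfrac12\lambdabias + \lambdabias\sum_{j : a_j = +1}\indicator{\xb_i \in C_j} - \lambdabias\sum_{j : a_j = -1}\indicator{\xb_i \in C_j}.
\end{align*}
Setting $\{P_k\} := \{C_j : a_j = +1\}$ and $\{Q_k\} := \{C_j : a_j = -1\}$, we get $\Nc(\xb_i) = \lambdabias\big(\sum_k \indicator{\xb_i\in P_k} - \sum_k\indicator{\xb_i\in Q_k} - \tfrac12\big)$. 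Since $\lambdabias > 0$, the sign of $\Nc(\xb_i)$ is the sign of $\sum_k\indicator{\xb_i\in P_k} - \sum_k\indicator{\xb_i\in Q_k} - \tfrac12$, which is positive exactly when $\sum_k\indicator{\xb_i\in P_k} > \sum_k\indicator{\xb_i\in Q_k}$ (the difference is an integer, so being $> \tfrac12$ is the same as being $\ge 1$) and negative exactly when $\sum_k\indicator{\xb_i\in P_k} \le \sum_k\indicator{\xb_i\in Q_k}$. This is exactly the pair of conditions in Definition~\ref{def: convex polytope cover} restricted to the data points, and simultaneously shows that $\mathrm{sign}(\Nc(\xb_i))$ agrees with the label predicted by the polytope-basis cover at every $\xb_i$, giving the equality of accuracies.

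The main obstacle, such as it is, is bookkeeping around the boundary/degenerate cases: making sure that $\sigma(\Tc_j(\xb_i))$ cannot take a value strictly between $0$ and $\lambdabias$ (this is guaranteed by hypothesis \eqref{eq: three-layer condition}, so it is really just invoking the assumption cleanly), and handling the possibility that some $v_{jk} = 0$ or that a hyperplane is redundant — but these only reduce the effective face count, so "at most $\width_j$ faces" is safe, and Definition~\ref{def: convex polytope} only requires a nonempty set, which holds since $\Tc_j$ attains $\lambdabias$ wherever all ReLU arguments are $\le 0$ (e.g. one can check $C_j \neq \emptyset$ is needed; if it were empty the cover statement would be vacuous on those terms, but typically $\xb$ far in the appropriate direction lies in $C_j$). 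I would also remark that the argument uses \emph{only} the values of $\Nc$ and the $\Tc_j$ on the finite set $\Dc$, which is why the conclusion is about a polytope-basis cover of the dataset $\Dc$ (finite-point version of Definition~\ref{def: convex polytope cover}) rather than of an ambient topological space. The whole proof is short and essentially a direct computation once the concavity observation is in place.
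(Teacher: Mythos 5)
Your proposal is correct and follows essentially the same route as the paper's proof in Appendix~\ref{app: proof 5}: identify each $C_j$ as a convex polytope via the concavity induced by $v_{jk}<0$ (the paper cites Lemma~\ref{lem: two-layer convex} for this), convert $\sigma(\Tc_j(\xb_i))$ into indicator functions using hypothesis~\eqref{eq: three-layer condition}, split the polytopes by the sign of $a_j$, and compare signs with Definition~\ref{def: convex polytope cover}. Your handling of the factor $\lambdabias$ when passing from $\sigma(\Tc_j(\xb_i))$ to $\lambdabias\,\indicator{\xb_i\in C_j}$ is actually slightly more careful than the paper's displayed computation, which silently drops that constant.
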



\begin{algorithm}[t]
\caption{Compressing algorithm} \label{alg: compressing}
\begin{algorithmic}
\STATE {\bf Input:} a pretrained two-layer network $\Tc$ defined in \eqref{eq: two-layer constant bias}, training dataset $\Dc=\{(x_i, y_i)\}_{i=1}^n$, $\lambda_{scale}>1$

\STATE $m \leftarrow \text{ the width of }\Tc$
\STATE $K \leftarrow \emptyset$
\FOR{$k, l\in [m]$} 
\IF{$\wb_l^\top\xb_i+b_l>0$ implies $\wb_k^\top\xb_i+b_k>0$ for all $i\in[n]$}
    \STATE $K\leftarrow K \cup \{k\}$
\ENDIF
\ENDFOR
\IF{$K\neq\emptyset$}
    \STATE $k \leftarrow \argmin_{k\in K} |v_k| \cdot \norm{\wb_k}$ 
    \STATE remove the $k$-th neuron $(v_k, \wb_k, b_k)$ from $\Tc$
    \STATE $m \leftarrow m-1$
\ENDIF
\FOR{$k\in [m]$}
\IF{$\wb_k^\top\xb_i+b_k>0$ and $0<\Tc(\xb_i)<1$ for some $i\in[n]$}
\STATE $(v_k, \wb_k, b_k) \leftarrow \lambda_{scale} \times (v_k, \wb_k, b_k)$
\ENDIF
\ENDFOR
\STATE \textbf{Output:} $\Tc$ 
\end{algorithmic}
\end{algorithm}

The proof of this theorem can be found in Appendix \ref{app: proof 5}.
The constant $\lambdabias$ in \eqref{eq: two-layer constant bias} and \eqref{eq: three-layer condition} is a positive scalar value determined from the ratio of labels in the dataset. In experiments, we practically adopted $\lambdabias=5$. See Appendix \ref{app: real world algorithm} for further details.

This theorem establishes that if a trained three-layer network $\Nc$ defined in \eqref{eq: three-layer} satisfies the condition outlined in \eqref{eq: three-layer condition}, then we can derive a polytope-basis cover of the training dataset $\Dc$ from $\Nc$.
Below, we outline two strategies we employed to satisfy both \eqref{eq: two-layer constant bias} and \eqref{eq: three-layer} conditions.

Firstly, to meet \eqref{eq: two-layer constant bias}, we utilize the implicit bias of gradient descent established by \citet[Theorem 2.1]{du2018algorithmic}, stated in Proposition \ref{prop: balanced property}.
Specifically, we initialize the network weights to satisfy
\begin{align} \label{eq: initialization}
    v_{jk} < - \sqrt{ \norm{\wb_{jk}}^2 + b_{jk}^2} \quad \forall j \in[J], k \in [m].
\end{align} 
Then, the implicit bias preserves the inequality \eqref{eq: initialization}, thus ensures $v_{jk}<0$ for all $j\in[J]$ and $k\in[m]$ on the gradient flow. This satisfies the first condition.

Secondly, to achieve \eqref{eq: three-layer condition}, we introduce a novel approach named the ``compressing algorithm." This algorithm aims to `compress' a given two-layer network $\Tc_j$ defined in \eqref{eq: two-layer constant bias} to identify a minimal convex polytope representation. The process is precisely outlined in Algorithm \ref{alg: compressing}.

More precisely, the algorithm operates in two main steps: 1) identifying and removing a neuron that do not significantly influence the decision boundary, and 2) amplifying the weights to delineate the faces of the decision boundary polytope. We illustrate the functionality of the algorithm in Figure \ref{fig: compressing}. 
In Figure \ref{fig: compressing}(a), the red arrow highlights a neuron identified as non-essential for the decision boundary. This neuron is subsequently removed as shown in (b). Subsequently, by scaling the weights $(v_k, \wb_k, b_k)$ by a factor $\lambda_{scale}>1$, the decision boundary shrinks into a convex polytope with a number of faces equal to the width of $\Tc$ (depicted in Figure \ref{fig: compressing}(c)). 

\begin{figure}[t]
    \centering
    \includegraphics[width=0.98\columnwidth]{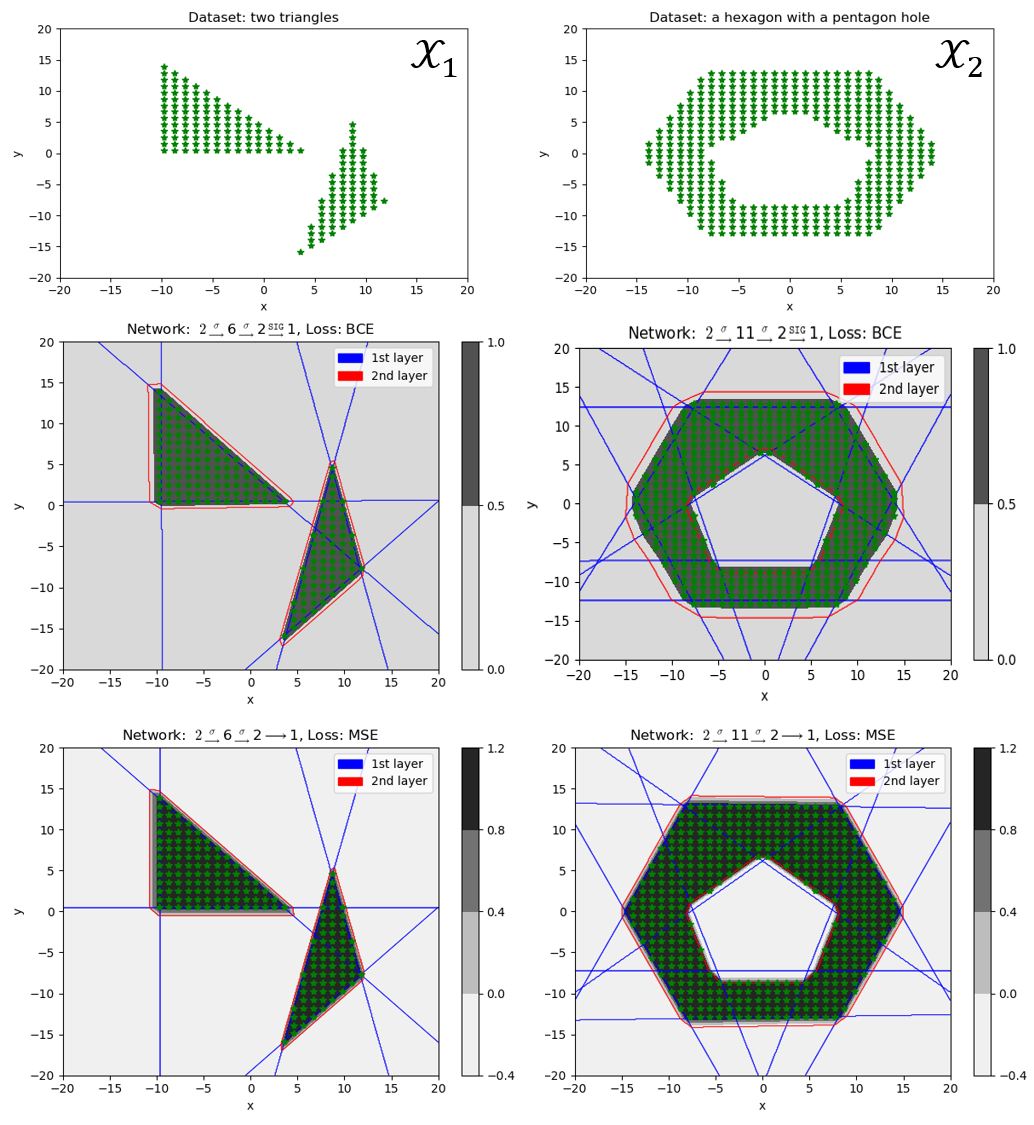}
    \caption{Experimental verification of convergence of gradient descent.
    Two columns exhibit the shape of two topological spaces, which are `two triangles' and `a hexagon with a pentagon hole.'
    The second and third row show the converged networks by gradient descent under the BCE loss and the MSE loss, respectively. 
    The first layer (blue) and second layer (red) represent the hyperplane and polytopes, respectively, described in Section \ref{sec: main}.
    \vspace{-5mm}
    }
    \label{fig: experiments}
\end{figure}

Note that a single execution of Algorithm \ref{alg: compressing} may not immediately yield the network satisfying \eqref{eq: three-layer condition}. However, we prove in Proposition \ref{prop: algorithms} that by iterating this algorithm a sufficient number of times, the output of the algorithm always satisfies both \eqref{eq: two-layer constant bias} and \eqref{eq: three-layer condition}, making it suitable for the application of Theorem \ref{thm: three-layer polytope cover}. 
Therefore, we apply the algorithm once every thousand iterations during the gradient descent optimization process, and the end of whole training. It is precisely described in Algorithm \ref{alg: three-layer} in Appendix \ref{app: three-layer}. We additionally mention that Algorithm~\ref{alg: compressing} is compatible with non-pretrained networks, although this flexibility may come at the cost of increased training time.

Consequently, we present a polytope-basis cover of the swiss roll dataset derived from a trained three-layer network in Figure \ref{fig: three-layer}. The polytopes comprising the resulting cover are visualized in Figure \ref{fig: three-layer} (b) and (c). This outcome demonstrates the polytope-basis cover inherent in the trained network can be identified through Algorithm \ref{alg: compressing}. It is worth noting that the decision boundary and activation boundary$^{\ref{DB AB}}\;$ displayed in Figure \ref{fig: three-layer}(a) is combination of several polytopes.

Lastly, we mention that we further propose two alternative methods for obtaining a polytope-basis cover in Appendix \ref{app: algorithms}. Specifically, one approach involves deriving such a cover from any trained two-layer network (Algorithm \ref{alg: two-layer}), while the other method entails training several neural networks in order (Algorithm \ref{alg: polytopes in order}). 
Further details and comparisons of these additional algorithms are provided in Appendix \ref{app: algorithms}.

\footnotetext{\label{DB AB} For the formal definition of decision boundary (DB) and activation boundaries (ABs), see Definition \ref{def: activation} in Appendix \ref{app: algorithms}.}

\begin{figure*}[t]
    \centering
    \subfigure[]{\includegraphics[width=0.48\textwidth]{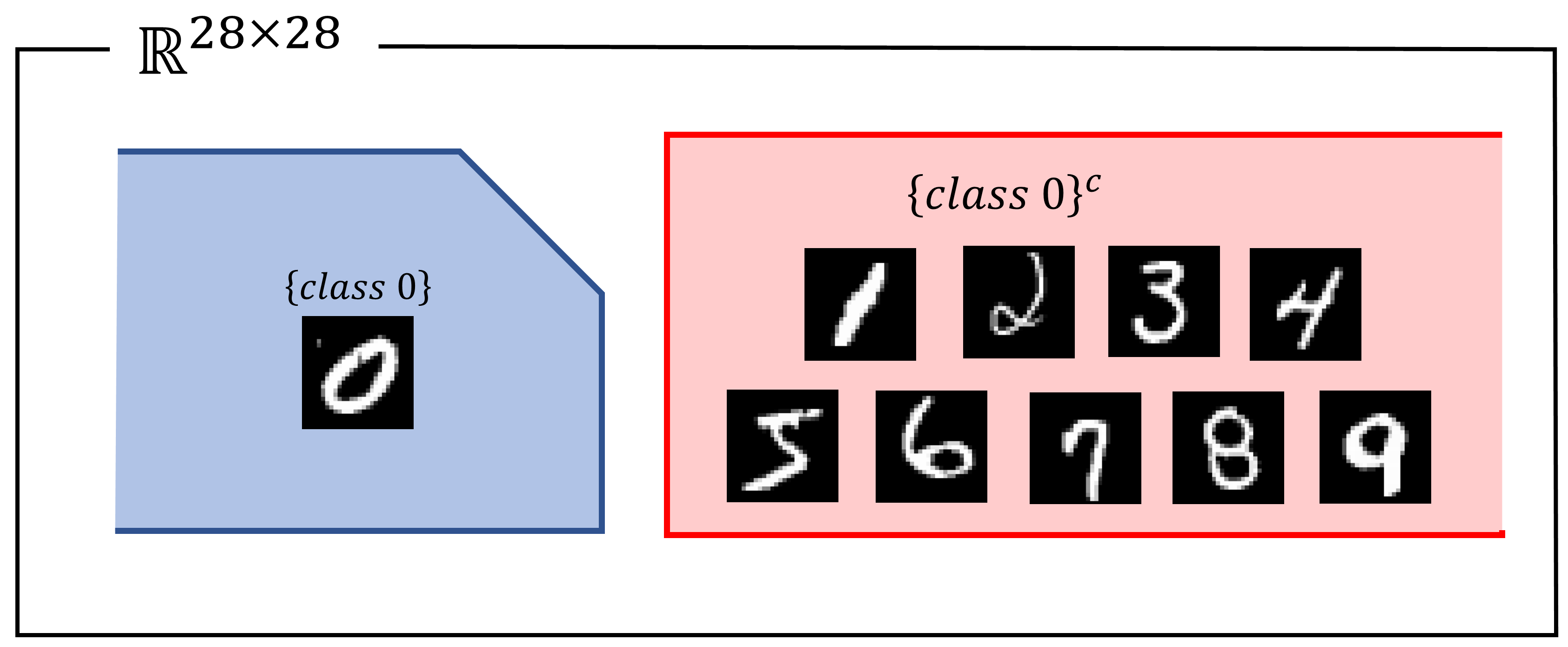}}
    \hfill
    \subfigure[]{\includegraphics[width=0.48\textwidth]{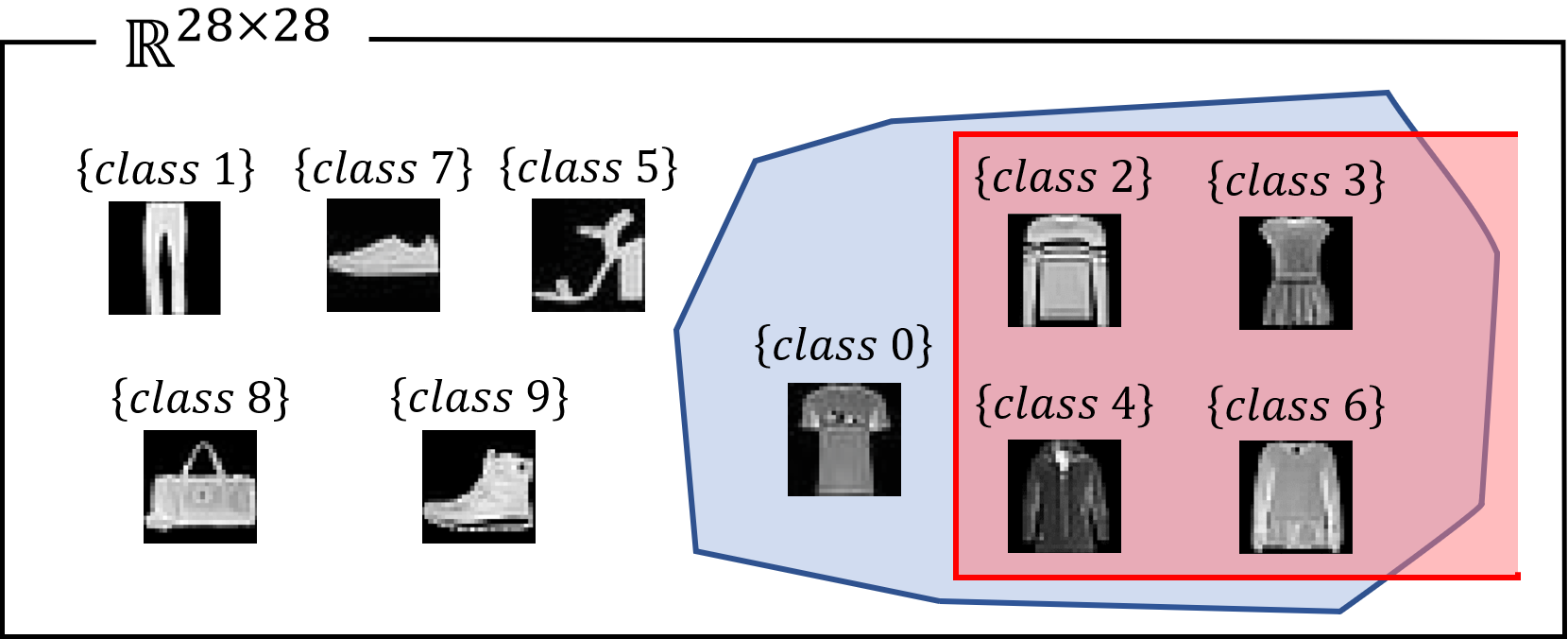}}
    \caption{
    Illustration of a polytope-basis cover of the real datasets.
    (a) The class $\{0\}$ of MNIST can be separated by a single convex polytope with four faces, while the complement class $\{0\}^c$ can be separated with three faces. 
    (b) The class $\{0\}$ of Fashion-MNIST can be separated by the difference of two polytopes, one of which contains similar images.
    Other classes also have simple polytope-basis covers as described in Table \ref{tab: result}.
    }
    \label{fig: real dataset}
\end{figure*}

\section{Experimental Results} \label{sec: experiments}
In Section \ref{sec: main}, we studied the relationship between the dataset geometry and neural network architectures. In this section, we provide two empirical results: 1) gradient descent indeed converges to the networks we unveil, and 2) we can investigate the geometric features of high-dimensional real-world datasets through our proposed algorithm. 

\subsection{Convergence on Polytope-Basis Covers}
We begin by demonstrating that gradient descent indeed converges to the networks we proposed in the preceding section, without additional regularization terms. We consider two illustrative topological spaces, $\Xc_1$ and $\Xc_2$, as depicted in Figure \ref{fig: experiments}. $\Xc_1$ represents a simplicial $2$-complex in $\Rd^2$, comprising two triangles, while $\Xc_2$ is a hexagon with a pentagonal hole, possessing a straightforward polytope-basis cover. The objective is to classify points within these spaces in $\Rd^2$ against others.
We evaluate the performance for two loss functions, which are the mean squared error (MSE) loss and the binary cross entropy (BCE) loss functions. For the BCE loss, we applied $\texttt{SIG}$ on the last layer.

For the first dataset $\Xc_1$, Theorem \ref{thm: simplicial complex} suggests that $\ReLUthree{2}{6}{2}{1}$ is a \exact on $\Xc_1$. 
To facilitate a clearer examination of weight vectors in each layer, we plot the activation boundaries\textsuperscript{\ref{DB AB}} in blue (the 1st hidden layer) and red (the 2nd hidden layer) colors, where the gray color denotes the decision boundary of the converged network. 
The weight vectors in the first layer accurately enclose the two triangles, reflecting the geometrical shape of $\Xc_1$.
Similarly, for the second dataset $\Xc_2$, Theorem \ref{thm: compact} suggests that $\ReLUthree{2}{11}{2}{1}$ is a \exact. Specifically, the eleven neurons in the first layer correspond to the eleven hyperplanes delineating the boundaries of the outer hexagon and the inner pentagon, while two neurons in the second hidden layer align with the two polygons. These outcomes precisely correspond to our network constructions depicted in Figure \ref{fig: convex polytope}(b, c). 


We conclude this section by providing theoretical insights into the convergence behavior of gradient descent. In Appendix \ref{sec: convergence}, utilizing our explicit construction of neural networks, we construct an explicit path that loss strictly decreases to zero (the global minima), when the network is initialized close to the target polytope (see Theorem \ref{thm: convergence}). The specific conditions governing the initialization region are described in terms of the distribution of the dataset along the convex polytope.
Although this result does not mean that gradient descent must converge to the global minimum but only guarantees the existence of such a path, it is strong evidence for the convergence to the global minima. 
For a thorough understanding and precise statements, we refer readers to Appendix \ref{sec: convergence}. 

\subsection{Polytope-Basis Cover for Real Datasets}

\begin{figure*}[t]
    \centering
    \subfigure[]{\includegraphics[width=0.48\textwidth]{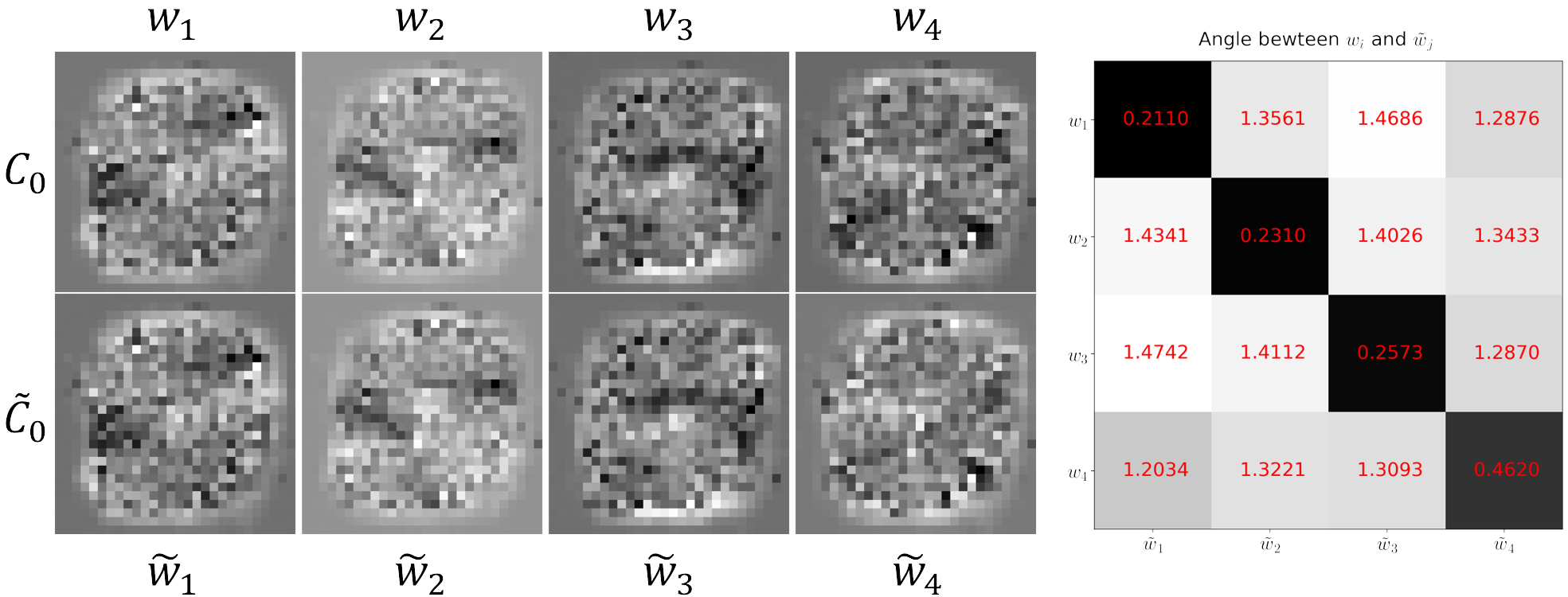}} \hfill
    \subfigure[]{\includegraphics[width=0.48\textwidth]{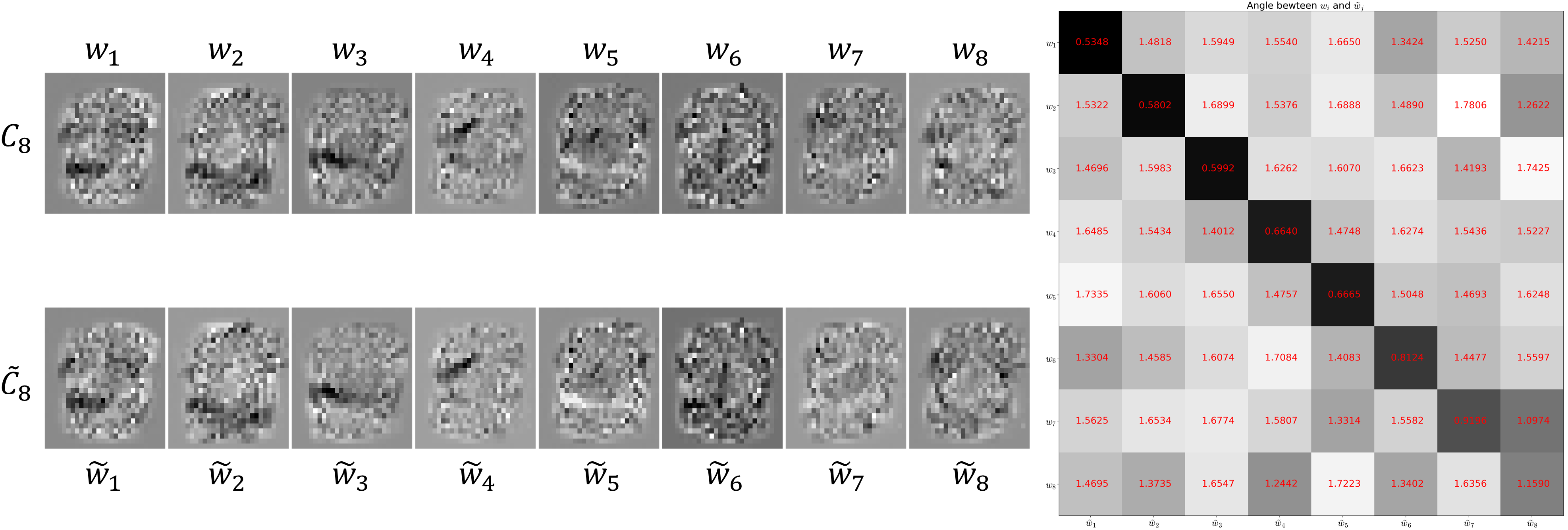}} 
    \caption{
    Visualization of two polytope covers for MNIST classes $\{0\}$ and $\{8\}$.
    (a) The four faces of two distinct polytope covers for class $\{0\}$ in the MNIST dataset are depicted. 
    (b) The eight faces of two distinct polytope covers for class $\{8\}$ in the MNIST dataset are depicted. 
    The distribution of angles between the vectors in two polytope covers are plotted on the right panel. 
    }
    \label{fig: face visualization}
\end{figure*}

Here, we delve into the polytope-basis cover analysis of three real-world datasets: MNIST, Fashion-MNIST, and CIFAR10.
We focus on binary classification tasks, specifically distinguishing one class from all other classes to obtain a polytope-basis cover of the class. For every class, we also consider its complement, denoted as $\{class\}^c$. We employed Algorithm \ref{alg: three-layer} to get the minimal polytope achieving $99.99\%$ accuracy on the union of the training and test sets.

Our empirical results are presented in Table \ref{tab: result}. Each column in the table corresponds to a class in the dataset, where each row presents the type of the class. The values in the table denotes the number of polytopes and their faces (we use notation $a$+$b$ to denote two polytopes with $a$ and $b$ faces, respectively).
For instance, the value in the first row and the first column shows that the $\{0\}$ class images in MNIST dataset can be distinguished from other classes by a single convex polytope with four faces. On the other hand, the second row in the first column shows that the complement of the class, namely $\{0\}^c := \{1,2,3,4,5,6,7,8,9\}$, can be separated from $\{0\}$ by a convex polytope with three faces, as illustrated in Figure \ref{fig: real dataset}(a). 

\begin{table}[t]
\centering
\resizebox{\columnwidth}{!}{ 
\begin{tabular}{cccccccccccc} 
\multicolumn{2}{c||}{}
&\multicolumn{10}{c}{Class} \\ 
\multicolumn{2}{c||}{Datasets} 
& 0& 1& 2& 3& 4& 5& 6& 7& 8& 9 \\ \hlineB{3}
\multirow{2}{*}{MNIST}  & \multicolumn{1}{c||}{\{class\} $\hphantom{^c} \vphantom{\Big|}$} &
4&4&7&8&5&7&4&8&8&7 \\ \cline{2-12}  
& \multicolumn{1}{c||}{\{class\}$^c\vphantom{\Big|}$} &
3&3&4&5&4&5&4&4&9&9\\ \hlineB{3}
\multirow{2}{*}{\begin{tabular}[c]{@{}c@{}}Fashion-\\ MNIST\end{tabular}} & \multicolumn{1}{c||}{\{class\}$\hphantom{^c}\vphantom{\Big|}$} &
9+3&4&9+5&9+3&9+6&8&9+7&9+1&6&10 \\ \cline{2-12} 
& \multicolumn{1}{c||}{\{class\}$^c\vphantom{\Big|}$} & 
16&3&22&11&20&4&28&6&4&5 \\ \hlineB{3}
\multirow{2}{*}{CIFAR10} & \multicolumn{1}{c||}{\{class\}$\hphantom{^c}\vphantom{\Big|}$} &
29+3&19&23+3&24+4&19&16+3&21&21&18&21\\ \cline{2-12} 
& \multicolumn{1}{c||}{\{class\}$^c\vphantom{\Big|}$}&
29&7&27+3&26&26+4&17&13&10&20+4&8
\\ \hlineB{3}
\end{tabular}
}
\caption{
Polytope-basis covers of each class in MNIST, Fashion-MNIST, and CIFAR10 datasets. Here, $a$+$b$ denotes two polytopes with $a$ and $b$ faces. 
For certain classes that cannot be covered by a single convex polytope with less than 30 faces, a second polytope is additionally computed. 
Overall, each class of real-world datasets can be covered by at most two polytopes, indicating the geometric simplicity of real datasets.
\vspace{-2mm}
}
\label{tab: result}
\end{table}

For Fashion-MNIST and CIFAR10 datasets, certain classes that cannot be covered by a convex polytope with less than 30 faces are covered by two polytopes. Figure \ref{fig: real dataset}(b) visually illustrates the classification of the class $\{0\}$ in Fashion-MNIST, accomplished through the difference of two polytopes.
In the case of CIFAR10 dataset, the number of faces in the polytopes tends to be higher compared to other datasets, consistent with the expectation that CIFAR10 exhibits a more intricate geometric structure than MNIST or Fashion-MNIST. 

Furthermore, we identify the geometric complexity of each class from Table \ref{tab: result}. In Fashion-MNIST, the complement of the class in $\{0,2,3,4,6\}$ prominently require more faces than other classes, and they all pertain to top clothes and share visual similarities (Figure \ref{fig: real dataset}(b)). Furthermore, it fails to find a single convex polytope cover of $\{0\}$ (with less than 30 faces) since the obtained polytope always contains many images in $\{2,3,4,6\}$ classes as illustrated in Figure~\ref{fig: real dataset}(b). 
In contrast, the class $\{1\}$ and its complement $\{1\}^c$ are separated by the fewest faces, suggesting they are less entangled with other classes. This observation is consistent with the distinctive, unique shape of the ``Trouser'' class in Fashion-MNIST dataset. This result highlights \emph{how neural networks can be utilized as a tool for quantifying the geometric complexity of datasets}. We provide additional interesting empirical examples in Appendix \ref{app: geometry}.

We further investigate the uniqueness of the obtained polytope covers in MNIST dataset. When Algorithm \ref{alg: compressing} is applied to networks initialized with small norm, the obtained covers exhibit noticeable similarity. 
For MNIST class $\{0\}$, we compute two distinct covering polytopes $\Cc_0$ and $\tilde\Cc_0$ with four faces. In Figure \ref{fig: face visualization}(a), the weight vectors $\wb_i$ of these covers are visualized, and the angles between the vectors of these two polytopes are displayed. It is easily verified that there is a clear one-to-one correspondence between vectors in the two polytopes, both visually and numerically.

For MNIST class $\{8\}$, which has a polytope cover with eight faces, a similar result is provided in Figure \ref{fig: face visualization}(b). Although the correspondence is slightly weaker than that of class $\{0\}$ due to the increased number of faces, most vectors still exhibit a strong one-to-one correspondence. 
From this, it can be seen that Algorithm \ref{alg: compressing} experimentally provides a unique polytope cover, offering further epexegetic support for the geometric simplicity of MNIST dataset.

Now, we provide \exacts for multi-class classifier for real datasets.
By combining the results in Table \ref{tab: result} with Theorem \ref{thm: compact}, we can ascertain the \exacts of these datasets, based on their geometric characteristics. Note that this is the first result on the minimal network architectures that can completely classify the given datasets, utilizing the geometric features of the datasets. It is provided in the remark below. 

\begin{remark} \label{rmk: minimal architecture}
    Adopting the covering polytopes with minimal number in Table \ref{tab: result} for each class, we deduce that
    \begin{align*}
        \text{MNIST : } &\quad \ReLUthree{784}{47}{10}{10} \\
        \text{Fashion-MNIST : } &\quad \ReLUthree{784}{90}{14}{10} \\
        \text{CIFAR10 : } &\quad \ReLUthree{3072}{178}{12}{10} 
    \end{align*}
    are \exacts for these datasets. Furthermore, the second and third weight matrices in these networks are highly sparse, as demonstrated in the proof of Theorem \ref{thm: compact} and illustrated in Figure \ref{fig: convex polytope}(c).
\end{remark}

It is worth noting that our findings stem from Algorithm \ref{alg: compressing}, which selectively removes and adjusts neurons within the network. Given the sparse connectivity observed in these networks, we anticipate an inherent connection between our results and the Lottery Ticket Hypothesis \citep{frankle2018lottery, malach2020proving}.
In other words, the sparse pruned neural networks suggested in Remark \ref{rmk: minimal architecture} can be understood as an example of the `winning ticket' in LTH that is explicitly constructed.

Additionally, our results offer another perspective on understanding LTH.
For instance, the assumptions in Theorem \ref{thm: three-layer polytope cover} shed light on the significance of masked and unmasked weights, and why maintaining the sign of weight values is important \citep{zhou2019deconstructing}. 
Specifically, the unmasked weights can be associated with the connection of faces to polytopes, and preserving the signs is crucial for maintaining the convex structure of these polytopes, as specified in \eqref{eq: two-layer constant bias} and \eqref{eq: three-layer condition}.
We hope our study contributes to future research efforts aimed at elucidating the principles underlying LTH. 




\section{Conclusion} \label{sec: conclusion}

In this paper, we investigated the geometric characteristics of datasets and neural network architecture.
Specifically, we established both upper and lower bounds on the necessary widths of network architectures for classifying given data manifolds, relying on its polytope-basis cover. 
Furthermore, we extended these insights to simplicial complexes or spaces consisting of prismatic polytopes, shedding light on how the width bound varies in response to the complexity of the dataset.
Conversely, we also demonstrated that the polytope structure of datasets can be inspected by training neural networks. 
We proposed such an algorithm, and our experimental results unveiled that each class within MNIST, Fashion-MNIST, and CIFAR10 datasets can be distinguished by at most two polytopes, implying geometric simplicity of real-world datasets.
We further analyzed that the number of faces in the polytope serves as an indicator of the geometric complexity of each class. 
Our empirical investigations unveil that neural network indeed converges to a polytope-basis cover of dataset, and conversely, it is possible to inspect geometrical features of the dataset from trained neural networks.

\paragraph{Limitations and future work.}
The optimality of the polytope-basis cover obtained by Algorithm \ref{alg: compressing} has not been clarified, which remains an interesting avenue for future research. 
Furthermore, while we only considered fully-connected networks in this work, it is desired to investigate the geometry in other network architectures like convolutional neural networks (CNNs).

\newpage
\section*{Acknowledgements}
This work was supported by Institute of Information \& communications Technology Planning \& Evaluation (IITP) grant funded by the Korea government(MSIT, Ministry of Science and ICT) (No. 2022-0-00984, Development of Artificial Intelligence Technology for Personalized Plug-and-Play Explanation and Verification of Explanation), by National Research Foundation of Korea(NRF) (**RS-2023-00262527**), by Institute of Information \& communications Technology Planning \& Evaluation (IITP) grant funded by the Korea government(MSIT)  
(No.2019-0-00075, Artificial Intelligence Graduate School Program(KAIST)), and by the National Research Foundation of Korea under Grant RS-2024-00336454

\section*{Impact Statement}
This paper presents work on describing the polytope structure in deep ReLU networks. This framework provides insights into the geometric roles of neurons and layers, potentially leading to a deeper understanding of high-dimensional dataset geometry and polytope structure of deep ReLU networks. The proposed approach may contribute to investigating data representation, the geometry of feature spaces, and understanding LTH. Lastly, there are no potential societal consequences of this work.


\bibliographystyle{icml2024}


\newpage
\appendix
\onecolumn
{\Large \textbf{Appendix}} \\
\newcommand{\indentHere}{\-\ \hspace{1.75cm}} 
{\large
\rule{\textwidth}{0.4pt}

\textbf{Appendix \ref{sec: related}.} Related Works

\textbf{Appendix \ref{app: geometry}.} Geometric Simplicity of Real-World Datasets

\textbf{Appendix \ref{app: algorithms}.} Algorithms for Finding Polytope-Basis Covers \\
\indentHere \ref{app: three-layer} A Polytope-Basis Cover Derived from a Trained Three-Layer ReLU Network \\
\indentHere \ref{app: two-layer} A Polytope-Basis Cover Derived from a Trained Two-layer ReLU Network \\
\indentHere \ref{app: real world algorithm} An Efficient Algorithm to Find a Simple Polytope-Basis Cover \\
\indentHere \ref{app: compare algorithms} Comparison of the Proposed Algorithms.

\textbf{Appendix \ref{sec: convergence}.} Convergence on the Proposed Networks

\textbf{Appendix \ref{app: proofs}.} Proofs \\
\indentHere \ref{app: proof 1} Proof of Proposition \ref{prop: convex polytope} \\
\indentHere \ref{app: proof compact} Proof of Theorem \ref{thm: compact} \\
\indentHere \ref{app: proof 2} Proof of Theorem \ref{thm: simplicial complex} \\
\indentHere \ref{app: proof 3} Proof of Theorem \ref{thm: betti numbers} \\
\indentHere \ref{app: proof 5} Proof of Theorem \ref{thm: three-layer polytope cover} \\
\indentHere \ref{app: proof 6} Proof of Proposition \ref{prop: algorithms} \\
\indentHere \ref{app: proof 4} Proof of Theorem \ref{thm: convergence}

\textbf{Appendix \ref{app: rebuttal}.} Additional Propositions and Lemmas

\rule{\textwidth}{0.4pt}
}


\section{Related Works} \label{sec: related} 

\paragraph{Geometric approaches to ReLU networks.}
Various geometric methodologies have been employed to explore the approximation capabilities of deep ReLU networks. \citet{hanin2019deep}, for instance, introduced the concept of bent hyperplanes in the input space, assessing its theoretical and empirical complexities. The methodology known as the 'bent hyperplane arrangement' has been applied across various research domains. Notably, it has been utilized in the analysis of decision regions \citep{beise2021decision, grigsby2022transversality, black2022interpreting} or characterizing linear regions within ReLU networks \citep{rolnick2020reverse}, which is also intricately connected to our proofs in Theorem \ref{prop: convex polytope} and \ref{thm: betti numbers}.

On the other hand, there has been a burgeoning interest in investigating polytope structures induced by deep ReLU networks \citep{fawzi2018empirical, xu2021traversing, alfarra2022decision, vincent2022reachable, black2022interpreting, haase2022lower, liu2023relu, fan2023deep, huchette2023deep, vallin2023geometric, piwek2023exact}. 
\citet{masden2022algorithmic} introduced algorithms capable of extracting the polytope structure inherent in networks and deriving topological properties of the decision boundary. \citet{carlsson2019geometry} and \citet{vallin2023geometric} considered the pre-image of ReLU networks, characterizing the geometric shapes of the decision boundary to gain an understanding of the polytope partitions of deep ReLU networks.

Nonetheless, there has been a scarcity of exploration into the explicit construction of neural networks for the purpose of classifying a given dataset, as illustrated in Figure \ref{fig: swiss}. 
In this study, we address the practical challenge of distinguishing between the two data manifolds, and we introduce practical algorithms for constructing covering polytopes based on the properties of ReLU networks. 
This approach can be considered a complementary method for investigating the approximation capabilities of neural networks in terms of polytopes - a geometric aspect that has not been extensively explored.

\paragraph{Exploring input convexity in neural networks.}
Recent years have witnessed a surge in research dedicated to unraveling the convexity of ReLU networks concerning their input. Crucially, it has been established that the ReLU activation function demonstrates convexity concerning its input when composited weights are all positive (Proposition 1 in \cite{amos2017input}, cf. Lemma \ref{lem: two-layer convex}). This discovery led to the inception of Input Convex Neural Networks (ICNNs) by \citet{amos2017input}, a characteristic that has been effectively leveraged in various applications \citep{makkuva2020optimal, chen2020input, fan2021scalable, bunning2021input, balestriero2022deephull}. Notably, \citet{balestriero2022deephull} applied this idea and proposed \emph{DeepHull}, the algorithm to approximate the convex hull by convex deep networks. 
On the other hand, \citet{sivaprasad2021curious} asserted that these convex neural networks exhibit self-regularization effects, demonstrating superior generalization performance in specific tasks.  
Additionally, research has delved into representing neural networks as the difference of convex (DC) functions \citep{sankaranarayanan2022cdinn, piwek2023exact, beltran2024dc}. \citet{sankaranarayanan2022cdinn} employed Linear Programming to optimize polyhedral DC functions, capitalizing on the piecewise linearity induced by the ReLU activation function. \cite{beltran2024dc} showed that DC NNs have an implicit bias avoiding overfitting in 1-D nonlinear regularization. \cite{sankaranarayanan2022cdinn} proposed a new network architecture called Convex Difference Neural Network (CDiNN), and suggested using convex concave procedure for optimization. 
These findings of previous studies suggest that investigating neural networks through the lens of differences in convex functions holds significant potential for future research.

In this study, we decompose a trained ReLU network into the difference of several convex functions and leverage this decomposition to induce a polytope-basis cover for a given dataset. Specifically, we employ the ICNN architectures to derive a convex polytope cover, revealing how trained neural networks capture the geometric characteristics of the training dataset. While \citet{balestriero2022deephull} explored a similar approach by minimizing the volume of the polytope as a regularizer, our main focus is on minimizing the number of neurons to reduce a polytope with a small number of faces. Consequently, our empirical results can be considered as an application of ICNNs to extract geometric features of datasets in terms of polytopes.

Moreover, it is noteworthy that previous studies imposed restrictions on the weights, either by enforcing $W_k \ge 0$ \citep{amos2017input, sivaprasad2021curious} or by substituting them with squared values $W_k^2 \ge 0$ \citep{sankaranarayanan2022cdinn}, to maintain network convexity. In contrast, we achieve this solely by adjusting the initialization conditions, leveraging the implicit bias of gradient descent \citep{du2018algorithmic}.

\paragraph{Complexity of datasets and neural network architectures.}
Several studies have delved into the intricate relationship between the geometric features of datasets and neural network training, often referred to as the \emph{multiple manifold problem} \citep{goldt2020modeling, buchanan2020deep, wang2021deep, chen2022nonparametric, tiwari2022effects}. This problem typically involves the binary classification of two low-dimensional manifolds by neural networks.
For instance, \citet{buchanan2020deep} and \citet{wang2021deep} focused on the task of distinguishing between two curves (i.e., one-dimensional manifolds), investigating the convergence speed and generalization concerning the geometric features of the dataset. Similarly, in the context of low-dimensional data manifolds, \citet{tiwari2022effects} examined the effects of data geometry on the complexity of trained neural networks, measuring the distance to the manifold. In a related vein, \citet{dirksen2022separation} considered the separation problem with randomly initialized ReLU networks, explicitly linking required widths to weight initialization. 

Furthermore, there are other numerous empirical studies that have supported the implicit relationship between network architecture and the geometric complexity or topological structure of the data manifold \citep{fawzi2018empirical, kim2020pllay, cohen2020separability, birdal2021intrinsic, barannikov2022representation, barannikov2021manifold, naitzat2020topology, hajij2020topological, hajij2021topological, magai2022topology}.
Additionally, \citet{li2018measuring} empirically investigated the loss landscape to measure the intrinsic dimension of datasets, which is deeply related to the minimal neural network architecture. 
These theoretical and empirical findings suggest a high correlation between neural network architecture and the training dataset: \emph{a more complicated dataset requires a more complex architecture}.

However, a notable gap still exists in the literature when it comes to explicitly constructing a neural network in practice. For example, as introduced in Section \ref{sec: intro}, it remains unknown which architecture of neural networks can or cannot completely classify a given dataset with explicit construction of neural networks. 



\paragraph{Estimating dataset characteristics from a trained network.}
Once a neural network is trained, it is well-established that the trained network encapsulates information or characteristics of the training dataset \citep{carlini2019secret, carlini2021extracting, haim2022reconstructing}. This phenomenon, implies that neural networks can be employed to extract information about the dataset through the training process. 
From a topological perspective, \citet{paul2019estimating} introduced a method for estimating the Betti numbers of 2D or 3D datasets using convolutional neural networks, later extended to 4D datasets by \cite{hannouch2023topology}.
In the geometric realm, \citet{kantchelian2014large} proposed the large-margin convex polytope machine with a training algorithm to find a convex polytope that encloses one class with a large margin. Their empirical results demonstrated that the digit '2' class of MNIST has a simple convex polytope cover that generalizes well. 

However, there are still opportunities for more precise investigations into the dataset geometry in terms of polytopes, which are induced from ReLU networks. In this paper, we propose an algorithm that derives a polytope-basis cover of a given dataset by learning neural networks, building upon our theoretical analysis. Moreover, it reveals the number of faces of the polytopes, which can describe the geometric complexity of dataset. 



\paragraph{Our contributions.}

In this paper, we harmonize diverse perspectives on neural networks, offering insights into the intrinsic relationship between the geometric complexity of datasets and network architectures. Our focus centers on the polytope structure induced by ReLU networks (Theorems \ref{thm: compact}, \ref{thm: simplicial complex}, \ref{thm: betti numbers}). 
The principal contribution of our work lies in delineating lower and upper bounds on widths within deep ReLU networks for the classification of a given dataset, drawing from the polytope structure inherent in the data. Our theoretical results not only elucidate how the geometric complexity of a dataset influences the required widths of neural networks but also illuminate the nuanced role of neurons in deep layers.

Furthermore, we present algorithms aimed at deriving a polytope-basis cover for given datasets, thereby highlighting the inherent link between trained neural networks and the polytope structure of the training datasets. While prior research \citep{kantchelian2014large, sivaprasad2021curious} merely showcased the polyhedral separability of certain classes in MNIST or CIFAR10, we determine the exact number of faces required for covering polytopes of each class. Essentially, our work introduces a novel methodology for exploring the intricate relationship between dataset geometry and neural networks.

\section{Geometric Simplicity of Real-World Datasets} \label{app: geometry}

In this section, we present empirical results illustrating how the number of faces of covering polytopes reflects the geometric characteristics of a class. Specifically, we examine a classification task under varying levels of noisy labels. We focus on the class $\{1\}$ in each dataset and manipulate the noise levels (denoted as $r$) across values of 0, 0.01, 0.1, 0.25, 0.5, and 0.90 while maintaining the total number of data points in the class. The noisy class with noise level $r$ is denoted by $\{1\}_r^{noise}$.

Here we give an example: When $r=0.00$, the $\{1\}_{r=0.00}^{noise}$ class is identical to the $\{1\}$ class in Fashion-MNIST, comprising 6000 T-shirt images. As $r$ increases, such as $r=0.01$, the noised class $\{1\}_{r=0.01}^{noise}$ still consists of 6000 images, but only 99\% of them are T-shirt images, with the remaining 1\% randomly selected from other classes. Consequently, when $r=0.9$, $\{1\}_{r=0.9}^{noise}$ contains 600 T-shirt images and 5400 randomly selected images from other classes, i.e., totally randomly selected images in the Fashion-MNIST dataset. The task is finding a single covering polytope that contains $\{1\}_{r=0.9}^{noise}$ against the other data.

We utilize Algorithm \ref{alg: compressing} to identify a single polytope with minimal width, achieving an accuracy greater than 99.9\% on the noised dataset \footnote{Note that Tables~\ref{tab: result} and~\ref{tab: random label} employ different accuracy criteria. With a stricter criterion of 99.99\% in Table~\ref{tab: result}, the identified polytopes are likely more precise but potentially require the larger number of faces. Conversely, the looser criterion of 99.9\% used in Table~\ref{tab: random label} may lead to slightly loose polytope, but easier to identify.}. To mitigate the randomness inherent in image selection, we compute the average value over five repetitions. The results are presented in Table \ref{tab: random label}. 

\begin{table}[ht]
\centering
\resizebox{0.6\columnwidth}{!}{ 
\begin{tabular}{ccccccc}
Dataset \textbackslash\ noise level $r\vphantom{\Big|}$ & 0.00 & 0.01 & 0.10 & 0.25 & 0.50 & 0.90 \\ \hlineB{3}
MNIST class $\{1\}\vphantom{\Big|}$  &
3 & 5.2 & 29.2  & 46.6  & 64.0 & 52.4    \\ \hline
Fashion-MNIST class $\{1\}\vphantom{\Big|}$  &
3&  4.6 & 39.0  &  57.4 &  77.0 & 65.6 \\ \hline 
CIFAR10 class $\{1\}\vphantom{\Big|}$ &
12 & 12.8  &  16.2 & 19.6 & 28.4 &  35.6  \\ \hline
\end{tabular}
}
\caption{
The average number of faces required for a convex polytope to cover the noisy class $\{1\}^{noise}_{r}$, with varying levels of random labels $(r)$. As the proportion of random labels in the class rises, there is a noticeable corresponding increase in the requisite number of faces.
}
\label{tab: random label} 
\end{table}

The results illustrate a positive correlation between the prevalence of noisy labels and the increasing complexity of covering polytopes: as the level of noise rises, the requisite number of faces also increases accordingly. Specifically, Table \ref{tab: random label} demonstrates that images within the same class can be effectively distinguished by a convex polytope with a smaller number of faces, highlighting the inherent geometric complexity of real-world datasets.

Moreover, our result can demonstrate some previous works that classified these datasets using convex polytopes. Notably, for MNIST and CIFAR10 datasets, \citet{kantchelian2014large} and \citet{sivaprasad2021curious} observed that convex polytope classifiers exhibit self-regularization effects and robustness to label noise. Our results presented in Table \ref{tab: random label} validate this observation based on the geometric features of the datasets: each class in the datasets inherently possesses simple geometry, enabling convex classifiers to generalize effectively. 

A particularly intriguing finding emerges when comparing the results with 50\% corrupted labels ($r=0.5$) to those with fully-mixed labels ($r=0.9$). Notably, for MNIST and Fashion-MNIST datasets, Table \ref{tab: random label} suggests that the 50\% corrupted dataset is even more challenging to segregate than the fully-mixed counterpart. In other words, a convex polytope with fewer faces must encompass the original images from the uncorrupted class $\{1\}$ -the 'digit 1 images' in MNIST or the 'Trouser images' in Fashion-MNIST-, necessitating a larger number of faces for the polytope to effectively segregate them. This observation emphasizes the geometric simplicity of MNIST and Fashion-MNIST datasets, contrasting with the behavior observed in CIFAR10, which does not exhibit this trend.
\begin{wrapfigure}{r}{0.3\textwidth}
  \centering
  \includegraphics[width=0.18\textwidth]{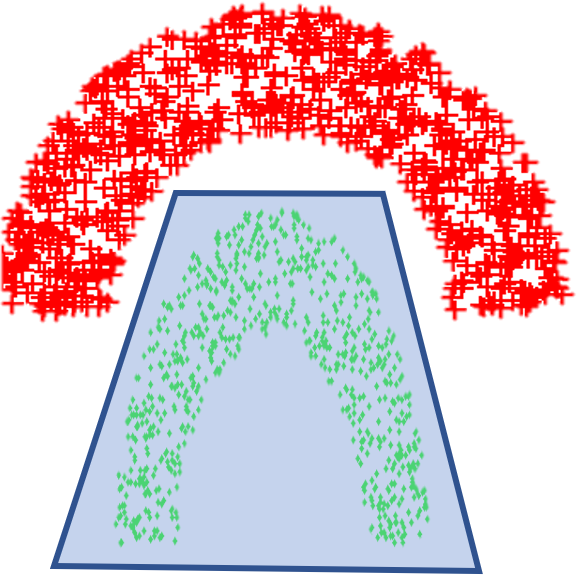}
  \caption{Although two manifolds are polyhedrally separable, both manifolds may not exhibit convexity.}\vspace{-1cm}
  \label{fig: bird}
\end{wrapfigure}   

\vspace{1mm}
\begin{remark}
To prevent potential misunderstandings regarding the results in Table \ref{tab: random label}, we offer two insights. Firstly, in the case of the fully-mixed class ($r=0.9$), it is not surprising that an arbitrary set of points in high-dimensional space can be separated by a single convex polytope. This phenomenon is rooted in the curse of dimensionality, where in higher dimensions, points are inherently easier to separate \citep{pestov2013k, gorban2018blessing}. Indeed, Table \ref{tab: random label} illustrates that random images in CIFAR10 are separated with fewer faces compared to other datasets. Secondly, it's worth noting that although two manifolds may be polyhedrally separable, this does not necessarily imply that each manifold is convex, as depicted in Figure \ref{fig: bird}.
\end{remark}

\section{Algorithms for Finding Polytope-basis Covers} \label{app: algorithms}

This section delves into the inner workings of our algorithms. Each algorithm is presented with a breakdown of its steps, the rationale behind its design, and its theoretical underpinnings. Additionally, two novel algorithms dedicated to polytope-basis cover reduction are introduced. For clarity, the section is organized into four sections.

In Section \ref{app: three-layer}, we delve into the motivation behind Algorithm \ref{alg: compressing} and provide a comprehensive explanation of its implementation.
We introduce Algorithm \ref{alg: two-layer} in Section \ref{app: two-layer}, which extracts a polytope-basis cover from any trained two-layer ReLU network. 
Section~\ref{app: real world algorithm} introduces Algorithm~\ref{alg: polytopes in order}, an efficient algorithm for generating a minimal polytope-basis cover for a given dataset. Finally, Section~\ref{app: compare algorithms} presents a comparative analysis of all proposed algorithms.

\subsection{A Polytope-Basis Cover Derived From a Three-Layer ReLU Network} \label{app: three-layer} 

In Section \ref{sec: algorithms}, we introduced the compressing algorithm (Algorithm \ref{alg: compressing}) that deforms a two-layer network to represent a single convex polytope. 
Before we provide detail descriptions, we introduce some terminologies.

\begin{definition} \label{def: activation}
    Let $\Nc(\xb):= v_0 + \sum_{k=1}^m v_k\sigma(\wb_k^\top\xb +b_k)$ be a two-layer ReLU network. For each $k\in[m]$, we refer a pair $(v_k, \wb_k, b_k)$ as a \emph{neuron} of $\Nc$. 
    We say \emph{a neuron $\sigma(\wb_k^\top\xb+b_k)$ activates (or deactivates) $\xb$} if $\wb_k^\top\xb+b_k>0$ (or $\wb_k^\top\xb+b_k<0$, respectively).
    The \emph{activation boundary (AB)} of a neuron $(v_k, \wb_k, b_k)$ is defined by $\{\xb\in\Rd^d ~|~ \wb_k^\top\xb+b_k = 0\}$.
    Similarly, the \emph{decision boundary (DB)} of $\Nc$ is defined by the set $\{\xb\in\Rd^d ~|~ \Nc(\xb)=0\}$.
\end{definition}
Roughly speaking, the activation boundaries of $\Tc$ are non-differentiable points of $\Tc$.
For example, the leftmost column in Figure \ref{fig: three-layer} or \ref{fig: general two-layer} shows the decision boundary and activation boundaries of trained networks. 

Now, we provide in-depth explanations of the algorithms introduced in the main text. 
Let $\Tc$ be a two-layer ReLU network defined in \eqref{eq: two-layer constant bias}, thus $v_k<0$ for all $k\in[\width]$. 
Let $S$ be the region defined by $S:= \{\xb~|~\Tc(\xb)=\lambdabias\}$. If it is nonempty, then it is a convex polytope with $\width$ faces by Definition \ref{def: convex polytope}.
However, its decision boundary $R:= \{\xb~|~\Tc(\xb)>0\} \supset S$ may contain more data points than $S$. For the given dataset $\Dc$, to achieve \eqref{eq: three-layer condition}, the objective of the compressing algorithm is to achieve
\begin{align} \label{eq: goal of scaling}
   R \cap \Dc = S \cap \Dc.
\end{align}

Note that \eqref{eq: goal of scaling} directly implies \eqref{eq: three-layer condition}. Below, we demonstrate a detailed examination of the compressing algorithm step-by-step, which was briefly introduced in Section \ref{sec: main}.
The algorithm comprises two parts: \textbf{1.} eliminating a redundant neuron, and \textbf{2.} scaling some neurons.


\begin{itemize}

\item {\bf PART 1. Eliminating a redundant neuron.}
The first part involves the removal of remaining redundant neurons that may activate some data points but do not contribute significantly to the network output. Specifically, in Figure \ref{fig: compressing}(a), the neuron indicated by the red arrow does not contribute to the change of the decision boundary since the training data points activated by this neuron already exhibit negative outputs, i.e., $\Tc(\xb)<0$ already. In essence, eliminating such neurons does not significantly alter the decision boundary of $\Tc$. 
However, although the removal of this neuron maintains the decision boundary of $\Tc$, it does affect the output value of $\Tc$, consequently influencing other subnetworks in the three-layer network $\Nc$. To address this, the algorithm removes only one neuron at once, which has the smallest value of $|v_k| \cdot \norm{\wb_k}$.

\item {\bf PART 2. Scaling neurons.}
The second part is deforming the given network to satisfy \eqref{eq: goal of scaling} by magnifying neurons.
From the definition of $\Tc(\xb)=\lambdabias+\sum_k v_k\sigma(\wb_k^\top\xb +b_k)$, recall that all $v_k<0$. If we take $v_k \rightarrow -\infty$ for all $k$, then the region $R=\{\xb\in\Rd^d ~|~ 0<\Tc(\xb)\}$ shrinks to the region $S=\{\xb\in\Rd^d ~|~ \Tc(\xb)=\lambdabias\}$ (cf. Figure \ref{fig: assumptions}(b)). This is the trick we used to figure out the minimal polytope representation of the decision region. 

In this step, we just increase the magnitude of a neuron $(v_k, \wb_k, b_k)$ for $k\in[m]$ if it has an activated data $\xb_i$ such that $\Tc_j(\xb) \neq \lambdabias$.
The multiplication constant is referred to $\lambda_{scale}$ to $(v_k, \wb_k, b_k)$ in the algorithm. Furthermore, note also that the updated neuron still satisfies \eqref{eq: initialization}, thus gradient desecent algorithm can be parallelized with keeping $v_k<0$. 
It is noteworthy that in Figure~\ref{fig: compressing}(b), the decision boundary demonstrably shrinks to the polytope shown in (c).
\end{itemize}


However, removing a neuron (PART 1) and adjusting the scaling of some neurons (PART 2) will inevitably alter the network's output, which could potentially decrease its classification accuracy. Therefore, as highlighted in Section \ref{sec: main}, it is advisable to apply the compressing algorithm in conjunction with an optimization process, as outlined in Algorithm \ref{alg: three-layer}.

\begin{algorithm}[htbp]
\caption{Extracting a polytope-basis cover from a three-layer ReLU network}
\label{alg: three-layer}
\begin{algorithmic}
\STATE {\bf Require:} a pretrained three-layer network $\Nc(\xb)$ defined in \eqref{eq: three-layer}, the training dataset $\Dc=\{(x_i, y_i)\}_{i=1}^n$, Epochs
\FOR{$epoch=1,\cdots, Epochs$}
\FOR{$iteration=1,2,\cdots,1000$}
\STATE one-step gradient descent for $\Nc$ under the BCE loss \eqref{eq: BCE loss}
\ENDFOR
\FOR{$j=1,\cdots,J$}
\STATE $\Tc_j \leftarrow$ \texttt{COMP}$(\Tc_j)$
\COMMENT{the compressing algorithm (Algorithm \ref{alg: compressing})}
\ENDFOR
\ENDFOR
\IF{there exists $i\in[n]$ and $j\in[J]$ such that $0<\Tc_j(\xb_i)<1$ }
\REPEAT
\STATE $\Tc_j \leftarrow$ \texttt{COMP}$(\Tc_j)$
\COMMENT{the compressing algorithm (Algorithm \ref{alg: compressing})}
\UNTIL{$\sigma(\Tc_j(\xb_i))$ is either $0$ or $1$ for all $i\in[n]$ and $j\in[J]$}
\ENDIF
\STATE {\bf Output:} $\Nc$ 
\end{algorithmic}
\end{algorithm}

Figure \ref{fig: three-layer} illustrates the results obtained by applying Algorithm \ref{alg: three-layer}. A three-layer network $\Nc$ defined in \eqref{eq: three-layer} with $J=16$ polytopes, each consisting of 20 neurons (architecture $\ReLUthree{2}{320}{16}{1}$), was pre-trained on the swiss-roll dataset. Subsequently, Algorithm \ref{alg: compressing} was applied to compress each $\Tc_j$ with fine tuning, resulting in a compressed three-layer network with architecture $\ReLUthree{2}{72}{16}{1}$. 
If the obtained network still completely classify the dataset, then we can derive the polytope basis cover by Theorem \ref{thm: compact}.
More precisely, the polytopes defined by $C_j := \{\xb\in\Rd^d~|~\Tc_j(\xb) = \lambdabias\}$, the collection of polytoeps $\Cc = \{C_j\}_{j \in [J]}$ becomes a polytope-basis cover of the dataset, comprising these 16 polytopes of $\Nc$. The obtained polytopes are illustrated in Figure \ref{fig: three-layer}(b, c).

Furthermore, it is noteworthy to mention the time efficiency of both Algorithm \ref{alg: compressing} and \ref{alg: three-layer}. Despite the presence of multiple \textbf{for} loops, these algorithms are implemented efficiently using parallel computing in PyTorch. Empirically, they demonstrate quick performance with practical neural network widths. On a theoretical level, Proposition \ref{prop: algorithms} ensures that Algorithm \ref{alg: three-layer} terminates within a finite timeframe and provides a polytope-basis cover that maintaining the accuracy of the converged neural network $\Nc$.

\subsection{A Polytope-Basis Covers Derived From a Two-Layer ReLU Network} \label{app: two-layer}

\begin{figure*}[t]
    \centering
    \includegraphics[width=0.98\textwidth]{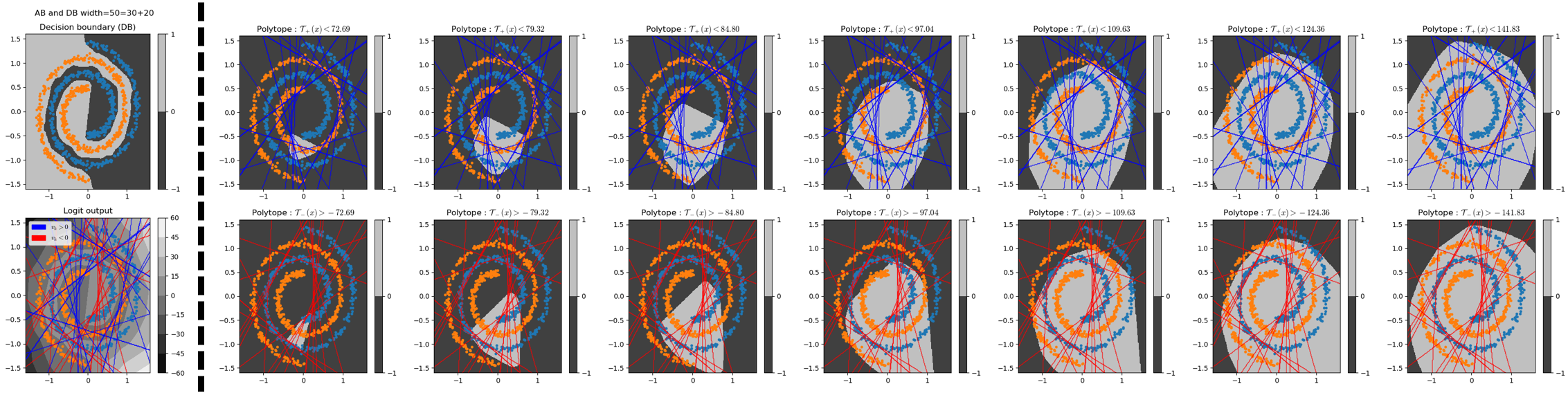}
    \caption{
    A polytope-basis cover derived by Algorithm \ref{alg: two-layer} from a trained two-layer ReLU network with architecture $\ReLUtwo{2}{50}{1}$. 
    The decision boundary and all activation boundaries of the converged network are depicted in the leftmost column. The algorithm provides a polytope-basis cover consists of 68 polytopes, and some of them are illustrated in other columns. 
    }
    \label{fig: general two-layer}
\end{figure*}

In this section, we propose another algorithm that extracts a polytope-basis cover from a trained two-layer ReLU network $\Nc$ defined in \eqref{eq: two layer relu}. 
First, we decompose $\Nc$ as the sum of convex and concave functions by aligning it according to the sign of the weight values.
\begin{align*}
    \Nc(\xb) &= v_0 + \sum_{k=1}^m v_k\sigma(\wb_k^\top\xb_k+b_k) \\
    &= \left( \frac12 v_0 + \sum_{v_k>0} v_k\sigma(\wb_k^\top\xb_k+b_k) \right)
    + \left( \frac12 v_0 + \sum_{v_k<0} v_k\sigma(\wb_k^\top\xb_k+b_k) \right) \\
    &=: \Nc_+(\xb) + \Nc_-(\xb).
\end{align*}
Note that both $\Nc_+$ and $\Nc_-$ are convex and concave functions, respectively, by Lemma \ref{lem: two-layer convex}.
Now, we consider the network output of each data. For any $\xb_i\in \Dc$, we have
\begin{align*}
    \Nc(\xb_i) &> 0 \qquad \Leftrightarrow \qquad \Nc_+(\xb_i) > - \Nc_-(\xb_i), \\ 
    \Nc(\xb_i) &< 0 \qquad \Leftrightarrow \qquad \Nc_+(\xb_i) < - \Nc_-(\xb_i).
\end{align*}
Then, we quantize these functions to derive a polytope-basis cover. With the similar idea of Lebesgue integration, we can approximate the convex function $\Nc_+(\xb)$ by a \emph{simple function}. Here, the \emph{simple function} means a linear combination of indicator functions, basically considered in mathematical field like Lebesgue theory \citep{rudin1976principles}. Define $M:=\max_{\xb\in\Dc}\Nc_+(\xb)$. Then, for a given $\varepsilon>0$, we can approximate $\Nc_+(\xb)$ by 
\begin{align*}
    \Nc_+(\xb) &\approx M - \sum_{l=0}^\infty l\varepsilon \cdot \indicator{M-(l+1)\varepsilon <\Nc_+(\xb)<M-l\varepsilon}(\xb) \\
    &= M- \varepsilon \sum_{C\in \Cc_Q} \indicator{\xb\in C}
\end{align*}
where $\Cc_Q$ is the collection of polytopes defined by $C_l:=\{\xb~|~\Nc_+(\xb)<M - l\varepsilon\}$ for $l=0,1,\cdots$. 
This decomposition can be understood as quantization of $\Nc_+(\xb)$ by slices with height $\varepsilon$. Moreover, the above approximation would be accurate as $\varepsilon\rightarrow0$. Similarly, $\Nc_-(\xb)$ can be approximated in the similar manner. The main idea to obtain a polytope-basis cover $\Cc$ from $\Nc$ is selecting sufficiently many $\varepsilon$'s to quantize $\Nc_+$ and $\Nc_-$. 


Empricially, we construct a polytope-basis cover from the values of $\Nc$. If $\Cc$ is not a polytope-basis cover yet, we select an incorrectly classified data point $\hat{\xb}$ with the smallest confidence value, i.e., $\hat\xb := \argmin_{\xb_i \in \Dc} |\Nc(\xb_i)|$. Then, we choose an intermediate value $c$ between $\Nc_+(\hat{\xb})$ and $\Nc_-(\hat{\xb})$ and add two polytopes $C_+ := \{\xb~|~\Nc_+(\xb) < c\}$ and $C_- := \{\xb~|~\Nc_-(\xb) > -c\}$ to the polytope-basis cover $\Cc$.  Then, the value
$$
\left| \sum_{C \in \Cc_P} \indicator{\xb \in C} (\hat\xb) - \sum_{C \in \Cc_Q} \indicator{\xb \in C} (\hat\xb) \right|
$$
is decreased by one since $\hat{\xb}$ is contained in either $C_+$ or $C_-$. Therefore, by repeating this process sufficiently many times, $\Cc$ will correctly classify $\hat{\xb}$.
Based on this idea, we provide Algorithm \ref{alg: two-layer} that extracts a polytope-basis cover from a given trained two-layer ReLU network $\Nc$. 

\begin{algorithm*}[htbp]
\caption{Extracting a polytope-basis cover from a trained two-layer ReLU network} \label{alg: two-layer} 
\begin{algorithmic}
\STATE {\bf Require:} a pretrained two-layer ReLU network $\Nc$ defined in \eqref{eq: two layer relu}, training dataset $\Dc=\{(x_i, y_i)\}_{i=1}^n$
\STATE Declare the empty collections $\Cc_P$ and $\Cc_Q$.
\REPEAT
\STATE Define $o_i := \sum\limits_{C\in\Cc_P} \indicator{\xb_i \in C} - \sum\limits_{C\in\Cc_Q} \indicator{\xb_i \in C} -\frac12 $ for all $i\in[n]$.
\IF{$\Cc$ is not a polytope-basis cover of $\Dc$}
\STATE $\hat\xb \leftarrow \argmin\limits_{\text{sgn}(o_i) \neq \text{sgn}(\Nc(\xb_i)) } 
\left|\; \Nc_+(\xb_i) + \Nc_-(\xb_i) \;\right|$
\COMMENT{$\hat\xb$ is not correctly covered by $\Cc$, and has the smallest confidence.} 
\STATE $c \leftarrow \frac12 ( \Nc_+(\hat\xb) - \Nc_-(\hat\xb))$ 
\STATE Add the polytope $C:=\{\xb~|~\Nc_-(\xb)>-c\}$ in $\Cc_P$. 
\STATE Add the polytope $C:=\{\xb~|~\Nc_+(\xb)<c\}$ in $\Cc_Q$.  
\STATE $\Cc \leftarrow \Cc_P \cup \Cc_Q$
\COMMENT{Now, $\hat\xb$ is correctly covered by $\Cc$.}
\ENDIF
\UNTIL{$\Cc$ becomes a polytope-basis cover of $\Dc$}
\STATE {\bf Output:} $\Cc$
\COMMENT{The polytope-basis cover of $\Dc$ derived from $\Nc$.}
\end{algorithmic}
\end{algorithm*}

The result of Algorithm \ref{alg: two-layer} on a two-layer ReLU network with architecture $\ReLUtwo{2}{50}{1}$, which is trained on the Swiss roll dataset, is illustrated in Figure \ref{fig: general two-layer}. Specifically, the algorithm generates a polytope-basis cover of the dataset consists of 68 polytopes ($\Cc_P$ and $\Cc_Q$ consists of 34 polytopes, respectively). 14 polytopes in $\Cc$ is illustrated in 2nd column to 8th column in Figure \ref{fig: general two-layer}. 
Proposition \ref{prop: algorithms} guarantees that Algorithm \ref{alg: two-layer} must terminate in finite time, and produces a polytope-basis cover of $\Dc$ which has the same accuracy with the given $\Nc$. 
Therefore, training a two-layer ReLU network to 100\% accuracy on the dataset $\Dc$, Algorithm \ref{alg: two-layer} allows to derive a polytope-basis cover of the given dataset.

There are some pros and cons in Algorithm \ref{alg: two-layer}. The advatages of the algorithm are 1. it can be applied to arbitrary two-layer ReLU networks, and 2. it does not modify the trained network. Therefore, it unveils the inherent polytope-basis cover and convex polytope structures in trained two-layer ReLU networks.
However, two drawbacks in this algorithm are 1. generally it induces many polytopes in practice, and 2. the number of faces of each polytope is unknown. 
For detail comparison with other algorithms, see Section \ref{app: compare algorithms}.

\subsection{An Efficient Algorithm to Find a Simple Polytope-Basis Cover} \label{app: real world algorithm}

In the preceding subsections, we introduced two algorithms in Sections \ref{app: three-layer} and \ref{app: two-layer} that extract a polytope-basis cover from trained two-layer or three-layer ReLU networks. However, the results obtained from both algorithms, as illustrated in Figure \ref{fig: three-layer} and \ref{fig: general two-layer}, still exhibit too many polytopes on the training dataset. As demonstrated for the Swiss roll dataset in Figure \ref{fig: swiss}, we have previously shown the existence of a polytope-basis cover comprising only four polytopes (refer to Figure \ref{fig: swiss} and \ref{fig: convex polytope}). 

In this section, to address the aforementioned issue, we present an efficient algorithm designed to find a polytope-basis cover with a reduced number of polytopes. This algorithm is outlined in Algorithm \ref{alg: polytopes in order}. The key distinction of this algorithm from the previous ones is that it does not derive a polytope cover from a trained network. Instead, it sequentially identifies a convex polytope by training several two-layer ReLU networks defined in \eqref{eq: two-layer constant bias}. Consequently, this algorithm only requires access to the training dataset.

Before we demonstrate the algorithm, we provide modified network and loss functions. Specifically, we consider the following two types of two-layer ReLU networks. Here, $\lambda_{bias}>0$ is a hyperparameter that enhances the convergences as introduced in \eqref{eq: two-layer constant bias}.
\begin{align}
    \Tc_+(\xb) &:= \lambda_{bias} + \sum_{k=1}^m v_k \sigma(\wb_k^\top\xb+b_k), \qquad \forall v_k < 0 \label{eq: TC+} \\
    \Tc_-(\xb) &:= -\lambda_{bias} + \sum_{k=1}^m v_k \sigma(\wb_k^\top\xb+b_k),  \qquad \forall v_k > 0 \label{eq: TC-} 
\end{align} 
If $\lambda_{bias}$ is large, then $\Tc_+$ has a large output value at initialization, and gradient descent optimization is heavily affected by data with $0$ labels. A similar situation happens for $\Tc_-$, and it helps to find a single polytope that contains whole class data. Practically, it is enough to use $\lambda_{bias}=5$ to find such polytopes.

The modified loss function is defined by
\begin{align} \label{eq: reinforced bce loss}
    L_{BCE, \lambdab} (\Theta) := - \frac{1}{|\Dc_0|} \sum_{y_i = 0} \lambda_0 \cdot \ell\left( \SIG \circ \Tc(\xb_i), y_i \right) - \frac{1}{|\Dc_1|} \sum_{y_i = 1} \lambda_1 \cdot \ell\left(\SIG \circ \Tc(\xb_i), y_i \right) 
\end{align}
where $\lambdab=(\lambda_0, \lambda_1)$ is the hyperparameter proposed to reinforce to cover a whole data class with specific label. For instance, by using a large value of $\lambda_1$, $\Tc$ can be trained to cover whole data points that have label $y_i=1$. 
After successfully configured the first polytope $C_1$, we train the second network $\Tc_2$ to distinguish data points of another data class, $y_i=0$, in the obtained polytope $C_1$. Repeating this process alternatively, Algorithm \ref{alg: polytopes in order} generally provides a polytope-basis cover with a small number of polytopes.

Now, we provide a detailed illustration of the algorithm's functionality with the example displayed in Figure \ref{fig: polytopes in order}. Here we use $\lambda_{bias}=5$, and $\lambdab=(1, 10)$.
First, the algorithm trains $\Tc_1$ on the entire dataset $\Dc$ using the loss function in \eqref{eq: reinforced bce loss}. After fine-tuning through Algorithm \ref{alg: three-layer}, we obtain the first polytope $C_1$ displayed in Figure \ref{fig: polytopes in order}(b). Note that all orange data points $(\Dc_1)$ are contained in $C_1$.
Next, the second network $\Tc_2$ is trained on $(\Dc_0 \cap C_1) \cup \Dc_1$ using \eqref{eq: reinforced bce loss} with $\lambdab=(1,10)$. $\Tc_2$ is aimed to cover all blue data points $(\Dc_0)$ within $C_1$. After training and fine-tuning, we obtain the second polytope $C_2$ displayed in Figure \ref{fig: polytopes in order}(c).
Similarly, we can find the third polytope $C_3$ by training another network $\Tc_3$ on $\Dc_0 \cup (\Dc_1 \cap C_2)$, displayed in Figure \ref{fig: polytopes in order}(d) and so on. In the example in Figure \ref{fig: polytopes in order}, totally four polytopes are obtained, and visualized through (b) to (e). Lastly, by Theorem \ref{thm: compact}, we can construct a three-layer ReLU network with architecture $\ReLUthree{2}{17}{4}{1}$ that can completely classify this swiss roll dataset, based on the obtained polytope-basis cover. The decision boundary of the constructed three-layer ReLU network is illustrated in Figure \ref{fig: polytopes in order}(f).

\begin{algorithm}[htbp]
\caption{An efficient algorithm for finding a polytope-basis cover}
\label{alg: polytopes in order}
\begin{algorithmic}
\STATE {\bf Require:} training dataset $\Dc =\Dc_0 \cup \Dc_1 = \{(x_i, y_i)\}_{i=1}^n$, hyperparameter $\lambdab = (\lambda_0, \lambda_1)$, $acc_{th}, \lambda_{bias}$, $width$
\STATE $\Cc_P \leftarrow \emptyset$, $\Cc_Q \leftarrow \emptyset$.
\STATE $m \leftarrow width$
\REPEAT
\REPEAT
\STATE Initialize $\Tc_+$ defined in \eqref{eq: TC+}.
\STATE Train $\Tc_+$ on the dataset $\Dc \cap \Cc_P^c$ by gradient descent, under the modified BCE loss \eqref{eq: reinforced bce loss}.
\STATE Fine tune the trained $\Tc_+$ by Algorithm \ref{alg: three-layer}.
\IF{$\Tc_+(\xb) \neq \lambda_{bias}$ for some $\xb\in\Dc_1\cap \Cc_P^c$}
\STATE $m \leftarrow m+1$
\ENDIF
\UNTIL{$\Tc_+(\xb) = \lambda_{bias}$ for all $\xb\in\Dc_1\cap \Cc_P^c$}
\STATE $A \leftarrow \{\xb\in\Rd^d ~|~ \Tc_+(\xb) = \lambda_{bias}\}$ 
\COMMENT{This is a polytope covering $\Dc_1 \cap \Cc_P^c$}
\STATE Add $A$ in $\Cc_P$
\STATE $m \leftarrow width$
\REPEAT
\STATE Initialize $\Tc_-$ defined in \eqref{eq: TC-}.
\STATE Train $\Tc_-$ on $\Dc\cap\Cc_Q^c$ by gradient descent, under the modified BCE loss \eqref{eq: reinforced bce loss}.
\STATE Fine tune the trained $\Tc_-$ by Algorithm \ref{alg: three-layer}.
\IF{$\Tc_-(\xb) \neq -\lambda_{bias}$ for some $\xb\in\Dc_0\cap \Cc_Q^c$}
\STATE $m \leftarrow m+1$
\ENDIF
\UNTIL{$\Tc_-(\xb) = -\lambda_{bias}$ for all $\xb\in\Dc_0\cap \Cc_Q^c$}
\STATE $A \leftarrow \{\xb\in\Rd^d ~|~ \Tc_-(\xb) = -\lambda_{bias}\}$ 
\COMMENT{This is a polytope covering $\Dc_0 \cap \Cc_Q^c$}
\STATE Add $A$ in $\Cc_Q$
\STATE $\Cc \leftarrow \Cc_P \cup \Cc_Q$
\UNTIL{$\Cc$ becomes a polytope-basis cover of $\Dc$ with accuracy greater than $acc_{th}$} 
\STATE {\bf Output:} $\Cc$
\end{algorithmic}
\end{algorithm}

The key advantage of this algorithm lies in its ability to generate a small number of convex polytopes, in contrast to other algorithms we proposed. As demonstrated in Figure \ref{fig: polytopes in order}, it suggests $\ReLUthree{2}{17}{4}{1}$ as a \exact on the given Swiss roll dataset, which appears to be close to optimal. However, due to the iterative nature of training multiple two-layer networks until achieving a complete polytope-basis cover, it typically requires a longer computation time. A detailed comparison with other algorithms is provided in the subsequent section.

\begin{figure}
    \centering
    \includegraphics[width=\textwidth]{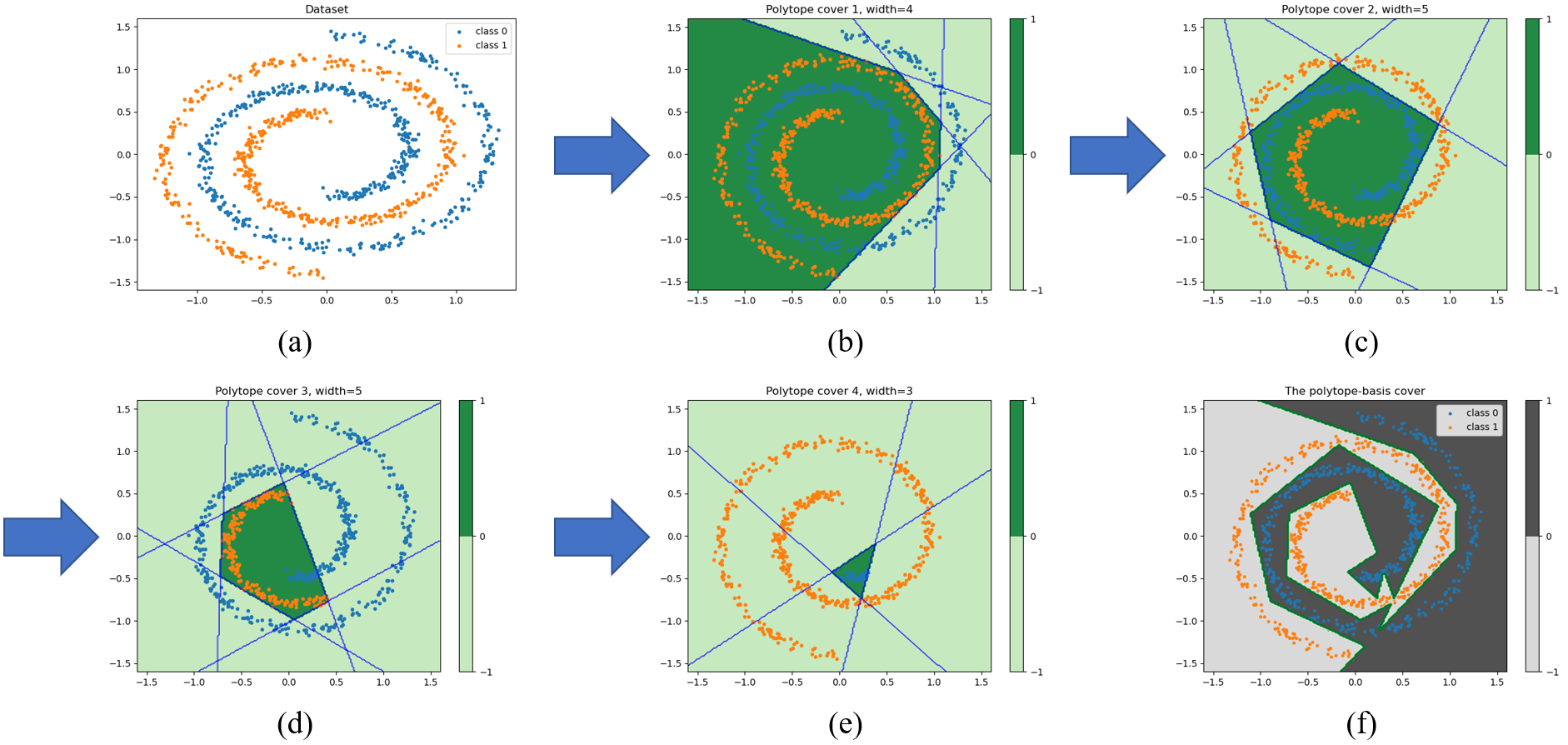}
    \caption{The result of Algorithm \ref{alg: polytopes in order}.
    For the given dataset (a), the algorithm determines the first polytope $C_1$ which contains the whole orange class, as shown in (b). In the next step, it obtains the second polytope $C_2$ which contains the whole blue class inside $C_1$. Other polytopes are similarly derived and illustrated in (c) to (e).
    Totally, the algorithm produces a polytope-basis cover consisting of four polytopes. Theorem \ref{thm: compact} shows that $\ReLUthree{2}{17}{4}{1}$ is a \exact on this dataset, and the decision boundary of the induced network is drawn in (f). 
    }
    \label{fig: polytopes in order} 
\end{figure}

\subsection{Comparison of the Proposed Algorithms} \label{app: compare algorithms} 

Finally, we compare the proposed algorithms (Algorithm \ref{alg: compressing}, \ref{alg: three-layer}, \ref{alg: two-layer}, and \ref{alg: polytopes in order}). First, we provide theoretical results for the proposed algorithms, where their proofs involve demonstrating how a polytope-basis cover can be explicitly constructed from the results of the algorithms. The detailed proof is available in Appendix \ref{app: proof 6}. 

\begin{proposition} \label{prop: algorithms}
    The following statements hold.
    \begin{enumerate}
        \item Let $\Tc$ be a network produced by repeating Algorithm \ref{alg: compressing} sufficiently many times. Then, it satisfies both \eqref{eq: two-layer constant bias} and \eqref{eq: three-layer condition}.
        
        \item Algorithm \ref{alg: three-layer} must terminate in finite time, and it produces a polytope-basis cover of the training dataset $\Dc$ that has the same accuracy with the compressed network $\Nc$. 
        
        \item Algorithm \ref{alg: two-layer} must terminate in finite time, and it produces a polytope-basis cover of the training dataset $\Dc$ that has the same accuracy with the given network $\Nc$.
        
        \item If Algorithm \ref{alg: polytopes in order} terminates in finite time, then it produces a polytope-basis cover of the training dataset $\Dc$.
        
    \end{enumerate}
\end{proposition}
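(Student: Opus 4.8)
The plan is to treat the four assertions in order, the common backbone being this structural observation: for a two‑layer ReLU network $\Tc(\xb)=\lambdabias+\sum_k v_k\sigma(\wb_k^\top\xb+b_k)$ with $\lambdabias>0$ and all $v_k<0$, one has $\Tc(\xb)\le\lambdabias$ everywhere, with $\{\xb:\Tc(\xb)=\lambdabias\}=\bigcap_k\{\xb:\wb_k^\top\xb+b_k\le0\}$ a convex polytope, while the activation region $\{\Tc>0\}$ may be strictly larger; moreover rescaling one neuron $(v_k,\wb_k,b_k)\mapsto\lambda_{scale}(v_k,\wb_k,b_k)$ with $\lambda_{scale}>1$ preserves the sign of $\wb_k^\top\xb+b_k$ at every $\xb$ (so the polytope and all activation patterns on $\Dc$ are unchanged) and multiplies that neuron's contribution, wherever active, by $\lambda_{scale}^2$. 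I would prove assertion~1 first, deduce assertions~2 and~3 from it together with Theorem~\ref{thm: three-layer polytope cover}, and finish with the essentially formal assertion~4.

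For assertion~1: PART~1 of Algorithm~\ref{alg: compressing} can only decrease the integer width $m$, so it actually removes a neuron at most $m_0$ times (with $m_0$ the initial width), and thereafter every invocation leaves the weights untouched; from that point on the polytope $S:=\{\xb:\Tc(\xb)=\lambdabias\}$ is fixed, and writing $R:=\{\xb:\Tc(\xb)>0\}\supseteq S$, the goal \eqref{eq: goal of scaling} $R\cap\Dc=S\cap\Dc$ (which implies \eqref{eq: three-layer condition}) is reached by showing $\Dc\cap(R\setminus S)$ shrinks to $\emptyset$ under PART~2: (i) a point of $S\cap\Dc$ has no active neuron and rescaling never activates an inactive neuron, so it stays in $S$; (ii) a point $\xb_i$ with $\Tc(\xb_i)\le0$ has $\Tc(\xb_i)$ non‑increasing under every rescaling, so it stays out of $R$, and in particular no point ever enters $R\setminus S$; (iii) a point $\xb_i$ with $0<\Tc(\xb_i)<\lambdabias$ has every neuron active at it rescaled (it is its own witness), so if it stayed in $R\setminus S$ for $t$ consecutive rescalings its value would be $\lambdabias+\lambda_{scale}^{2t}\sum_{k\text{ active at }\xb_i}v_k(\wb_k^\top\xb_i+b_k)$ with the sum a fixed strictly negative number, hence after finitely many rescalings $\Tc(\xb_i)\le0$ and, by (ii), it never returns. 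Finiteness of $\Dc$ then finishes it; and $v_k<0$ survives throughout, so \eqref{eq: two-layer constant bias} continues to hold. The delicate point — and the main obstacle of the whole proposition — is that a removal in PART~1 can momentarily re‑create points of $R\setminus S$ (dropping a negative term raises $\Tc$), which is exactly why one must wait out the $\le m_0$ removals before running the geometric shrinkage argument.

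For assertion~2: the concluding \textbf{repeat} loop of Algorithm~\ref{alg: three-layer} is precisely ``apply Algorithm~\ref{alg: compressing} to each $\Tc_j$ until $\sigma(\Tc_j(\xb_i))\in\{0,\lambdabias\}$ for all $i,j$'', which terminates by assertion~1; hence Algorithm~\ref{alg: three-layer} halts with a network $\Nc$ still of the form \eqref{eq: three-layer}, each $\Tc_j$ of the form \eqref{eq: two-layer constant bias} and satisfying \eqref{eq: three-layer condition}, so Theorem~\ref{thm: three-layer polytope cover} directly yields that $\{C_j\}_{j\in[J]}$ with $C_j=\{\xb:\Tc_j(\xb)=\lambdabias\}$ is a polytope‑basis cover of $\Dc$ of the same accuracy as $\Nc$. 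For assertion~3 I would decompose the trained $\Nc$ into convex and concave parts $\Nc=\Nc_++\Nc_-$ and note that the polytopes appended to $\Cc_Q$ (resp.\ $\Cc_P$) are sublevel sets $\{\Nc_+<c_t\}$ (resp.\ superlevel sets $\{\Nc_->-c_t\}$), with $c_t=\tfrac12(\Nc_+(\hat\xb_t)-\Nc_-(\hat\xb_t))$. The crux is a monotone‑potential argument on $o_i$: for $\xb_i$ with $\Nc(\xb_i)>0$ one has $\{\Nc_+(\xb_i)<c\}\subseteq\{\Nc_-(\xb_i)>-c\}$ for every $c$, so $o_i$ is non‑decreasing along the run, and in the step where $\hat\xb_t=\xb_i$ the chosen $c_t$ lies in the nonempty ``gap'' interval $(-\Nc_-(\xb_i),\Nc_+(\xb_i)]$, forcing $o_i$ strictly positive and keeping it so; symmetrically for $\Nc(\xb_i)<0$. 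A point selected as $\hat\xb$ is therefore correctly classified from then on, hence never reselected, so the loop performs at most $|\Dc|$ iterations and halts; assuming the mild genericity condition $\Nc(\xb_i)\ne0$ for all $i$, the resulting $\Cc$ classifies every $\xb_i\in\Dc$ exactly as $\Nc$ does, i.e.\ it is a polytope‑basis cover of $\Dc$ with the same accuracy.

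For assertion~4 it suffices to observe that every set $A=\{\xb:\Tc_+(\xb)=\lambda_{bias}\}$ (resp.\ $\{\xb:\Tc_-(\xb)=-\lambda_{bias}\}$) appended inside Algorithm~\ref{alg: polytopes in order} is, by the structural fact above applied to \eqref{eq: TC+}--\eqref{eq: TC-}, a genuine convex polytope, and that the outer \textbf{repeat}--\textbf{until} condition ``$\Cc$ is a polytope‑basis cover of $\Dc$ with accuracy $>acc_{th}$'' is a decidable predicate on the finite labelled set $\Dc$ given the finitely many polytopes collected so far; hence whenever the algorithm terminates, its output satisfies that predicate by construction, which is exactly the claim. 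No termination guarantee is asserted here, consistent with the heuristic nature of the inner network‑growing loops, and the conditional form of assertion~4 is all that is needed. In the write‑up I would give assertion~1 in full detail, state assertions~2 and~4 briefly as above, and spend the remaining space on the potential‑function computation for assertion~3, which is the most calculation‑heavy of the four.
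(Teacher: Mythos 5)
Your proof is correct, and for assertions~1 and~2 it follows essentially the paper's route: the key computation is the same $\lambda_{scale}^2$ amplification of the strictly negative active sum $\sum_k v_k\sigma(\wb_k^\top\xb_i+b_k)$ at any $\xb_i$ with $0<\Tc(\xb_i)<\lambdabias$, followed by an appeal to Theorem~\ref{thm: three-layer polytope cover}. In fact your version of assertion~1 is more careful than the paper's: you explicitly isolate the finitely many PART-1 removals (which can raise $\Tc$ and temporarily re-populate $R\setminus S$) before running the shrinkage argument, and you verify that points with $\Tc(\xb_i)\le 0$ can never re-enter $R$ under scaling; the paper's proof runs the scaling argument without addressing this interleaving. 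The two places where you genuinely diverge are assertions~3 and~4. For assertion~3 the paper justifies that newly added polytopes do not disturb already correctly covered points by analogy with refining a Lebesgue-style quantization of the convex part $\Nc_+$, whereas you prove the same monotonicity directly from the containment $\{\Nc_+<c\}\subseteq\{\Nc_->-c\}$ at positively classified points (and its mirror image), turning $o_i$ into a monotone potential; this is tighter and more self-contained, and yields the same $\le|\Dc|$ iteration bound. For assertion~4 the paper reconstructs the nested chain of polytopes produced by Algorithm~\ref{alg: polytopes in order} and feeds the resulting alternating-sign network into Theorem~\ref{thm: three-layer polytope cover}, while you simply read the conclusion off the loop guard of the outer \textbf{repeat}--\textbf{until}; your argument is logically sufficient for the conditional statement as written (the paper's version additionally explains the structure of the output, but proves nothing more). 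No gaps.
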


\begin{table}[ht]
\centering
\resizebox{\linewidth}{!}{
\begin{tabular}{cccc}
$\vphantom{\Big|}$ & Algorithm \ref{alg: three-layer} & Algorithm \ref{alg: two-layer}  & Algorithm \ref{alg: polytopes in order}  \\ \hlineB{4}
Input network  $\vphantom{\Big|}$ & a (pretrained) three-layer \eqref{eq: three-layer} & any trained two-layer \eqref{eq: two-layer constant bias}  & - \\ \hline
\# of obtained polytopes $\vphantom{\Big|}$ & at most $J$ & generally large & generally small \\ \hline
\# of obtained faces   $\vphantom{\Big|}$  & known   & unknown  & known \\ \hline
Theoretical guarantee for termination $\vphantom{\Big|}$ & yes  & yes & no\\ \hline
Time consumption  $\vphantom{\Big|}$  & normal & short   & long \\ \hline
Accuracy of the polytope cover $\vphantom{\Big|}$ & unknown  & same with the given network  & $>acc_{th}$ (if it terminates) \\ \hline
\end{tabular}
}
\label{tab: comparison}
\caption{Comparison of proposed algorithms. All these algorithms generate a polytope-basis cover of the given training dataset, but each of them has its own pros and cons.}
\end{table}

The comparison of proposed algorithms is summarized in Table \ref{tab: comparison}. Below, we discuss differences of algorithms for each item in Table \ref{tab: comparison}.

\begin{itemize}
\item \textbf{Input network.}
Algorithm \ref{alg: three-layer} operates on a pretrained three-layer network outlined in \eqref{eq: three-layer}. Algorithm \ref{alg: two-layer} necessitates a fully-trained two-layer ReLU network specified by \eqref{eq: two layer relu}. However, Algorithm \ref{alg: polytopes in order} does not necessitate any pre-existing networks as inputs but generates polytope covers through the training of several two-layer ReLU networks as per \eqref{eq: two-layer constant bias}.

\item \textbf{The number of polytopes and faces.}
As detailed in Appendix \ref{app: two-layer}, Algorithm \ref{alg: two-layer} generally yields multiple polytopes to correctly cover all data points in the dataset. Additionally, it does not calculate the precise number of faces for each polytope. In contrast, Algorithm \ref{alg: three-layer} generates a polytope-basis cover by compressing the given three-layer network. 
Therefore, if the three-layer network $\Nc$ defined in \eqref{eq: three-layer} is the sum of $J$ two-layer networks ($\Tc_j$), the algorithm is guaranteed to produce a polytope-basis cover consisting of no more than $J$ polytopes. The compressing algorithm and Lemma \ref{lem: two-layer convex} provide the exact number of faces for each polytope.
Thirdly, Algorithm \ref{alg: polytopes in order} does not impose any specific lower or upper bounds on the number of polytopes. Similar to Algorithm \ref{alg: three-layer}, it also furnishes the exact number of faces for each polytope.

For instance, we recall the example on the swiss roll dataset: Figures \ref{fig: three-layer}, \ref{fig: general two-layer}, and \ref{fig: polytopes in order} illustrate that the algorithms yield 16, 68, and four polytopes, respectively. Also note that the last algorithms suggest a \exact of the dataset by $\ReLUthree{2}{17}{4}{1}$, which looks sufficiently minimal. 

\item \textbf{Theoretical guarantee.}
Proposition \ref{prop: algorithms} guarantees that Algorithm \ref{alg: three-layer} and \ref{alg: two-layer} must terminate in finite time. However, there is no theoretically guarantee for Algorithm \ref{alg: polytopes in order}. Even though, in all our experiments on synthetic and real-world datasets, it always terminates in finite time and produces a complete polytope-basis cover for the given dataset.

\item \textbf{Time consumption.}
Since Algorithm \ref{alg: two-layer} does not require fine-tuning process, it consumes the shortest time among these algorithms. Algorithm \ref{alg: three-layer} requires only fine-tuning process, so it takes a normal amount of time. However, Algorithm \ref{alg: polytopes in order} tends to spend relatively longer time due to its iterative process of training two-layer networks with fine-tuning until it achieves a complete polytope-basis cover. 
However, it is essential to note that even for real-world datasets like CIFAR10, the practical execution time is still quite reasonable, typically taking only a few minutes. 

\item \textbf{Accuracy of the obtained cover.}
The polytope-basis cover generated by Algorithm \ref{alg: two-layer} preserves the accuracy of the given network (Proposition \ref{prop: algorithms}). In the case of Algorithm \ref{alg: three-layer}, the fine-tuning process introduces the possibility of a different accuracy level compared to the original network. Consequently, the final accuracy cannot be determined beforehand. Regarding Algorithm \ref{alg: polytopes in order}, its accuracy is guaranteed to be greater than the given $acc_{th}$ if it terminates within a finite time.
\end{itemize}

Below, we provide additional experimental results of the proposed algorithms on several synthetic datasets, showcasing visual differences among these algorithms.
We consider three synthetic datasets: XOR, two circles, and two moons datasets, depicted in Figure \ref{fig: synthetic_datasets}. The results of algorithms are shown in Figure \ref{fig: synthetic three-layer}, \ref{fig: synthetic two-layer}, and \ref{fig: synthetic in order}.

\begin{figure}[t]
    \centering
    \includegraphics[width=0.6\columnwidth]{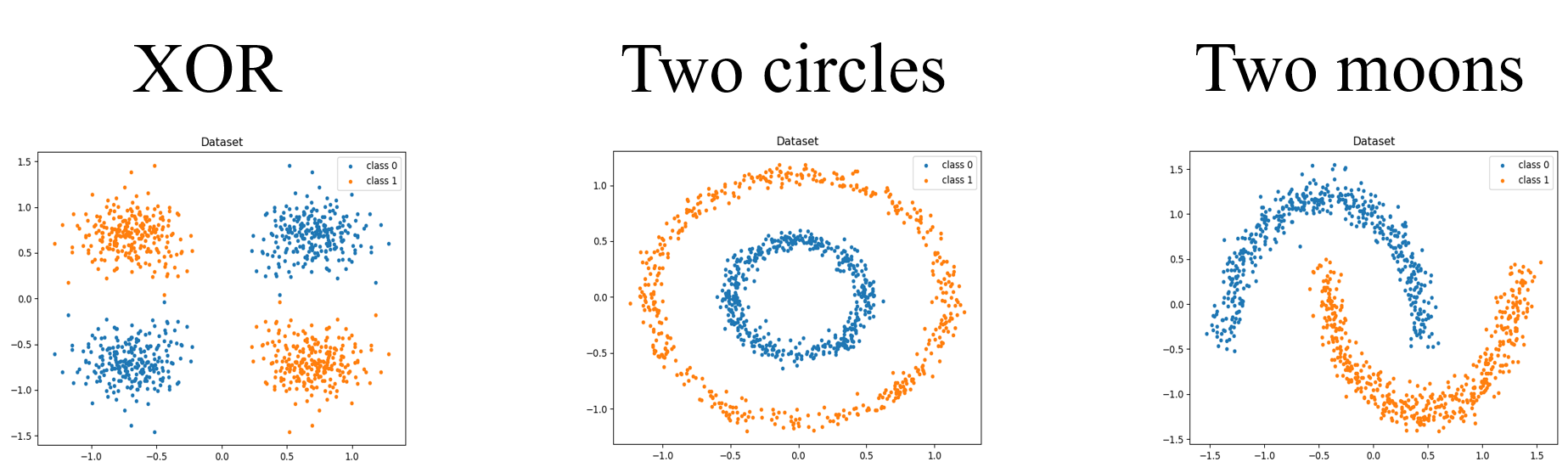}
    \caption{Synthetic datasets - XOR, two circles, and two moons.}
    \label{fig: synthetic_datasets}
\end{figure}

\begin{figure}
    \centering
    \includegraphics[width=0.8\textwidth]{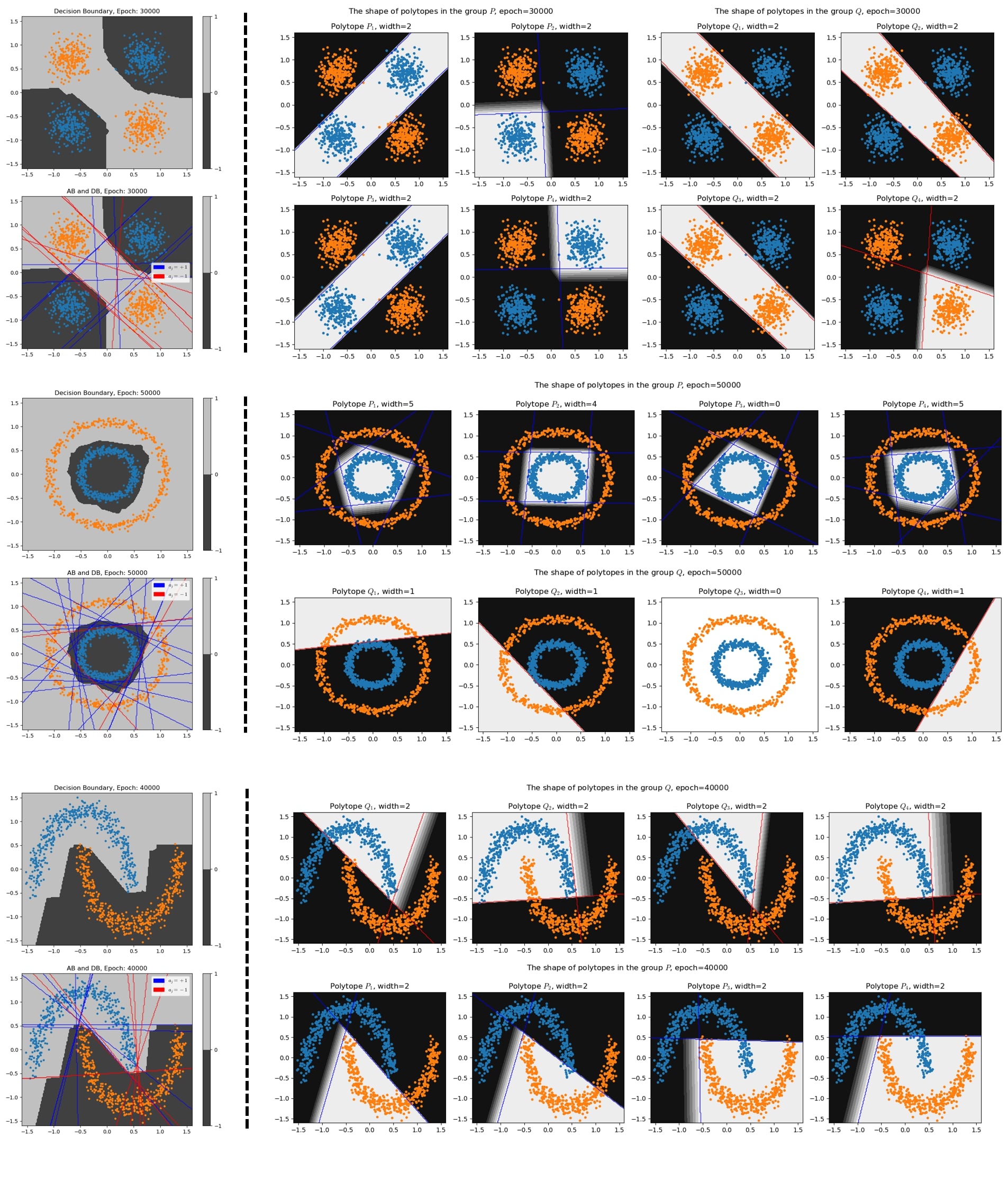}
    \caption{Visualization of Algorithm \ref{alg: three-layer} on the synthetic datasets. These polytope-basis covers are derived from trained three-layer ReLU networks \eqref{eq: three-layer} with the architecture $\ReLUthree{2}{80}{8}{1}$ (i.e., a combination of eight two-layer networks $\Tc_j$ with $\width=10$ neurons)
    }
    \label{fig: synthetic three-layer}
\end{figure}
\begin{figure}
    \centering
    \includegraphics[width=0.85\textwidth]{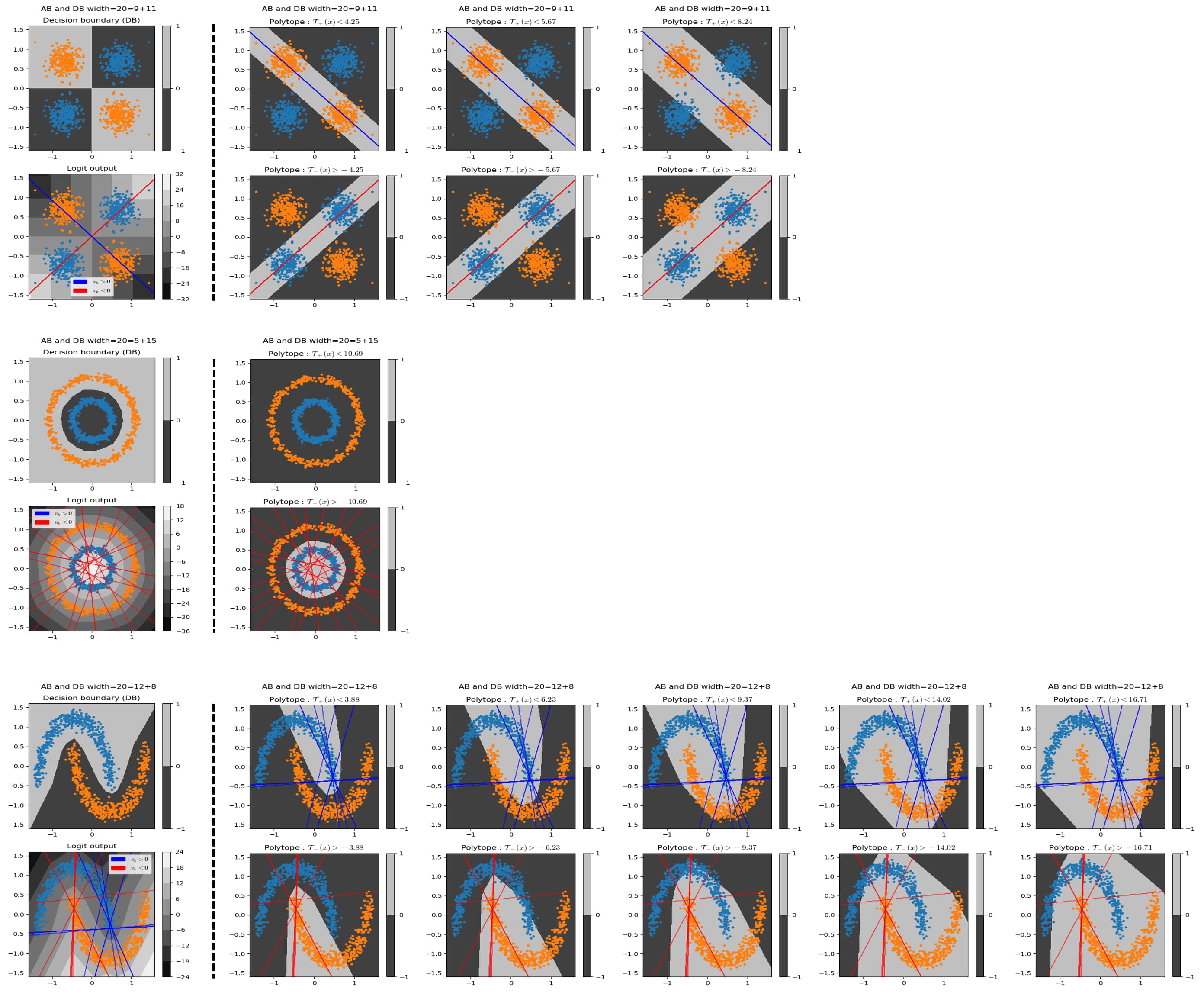}
    \caption{Visualization of Algorithm \ref{alg: two-layer} on the synthetic datasets. These polytope-basis covers are derived from trained two-layer ReLU networks with the architecture $\ReLUtwo{2}{20}{1}$.
    }
    \label{fig: synthetic two-layer}
\end{figure}
\begin{figure}
    \centering
    \includegraphics[width=0.6\textwidth]{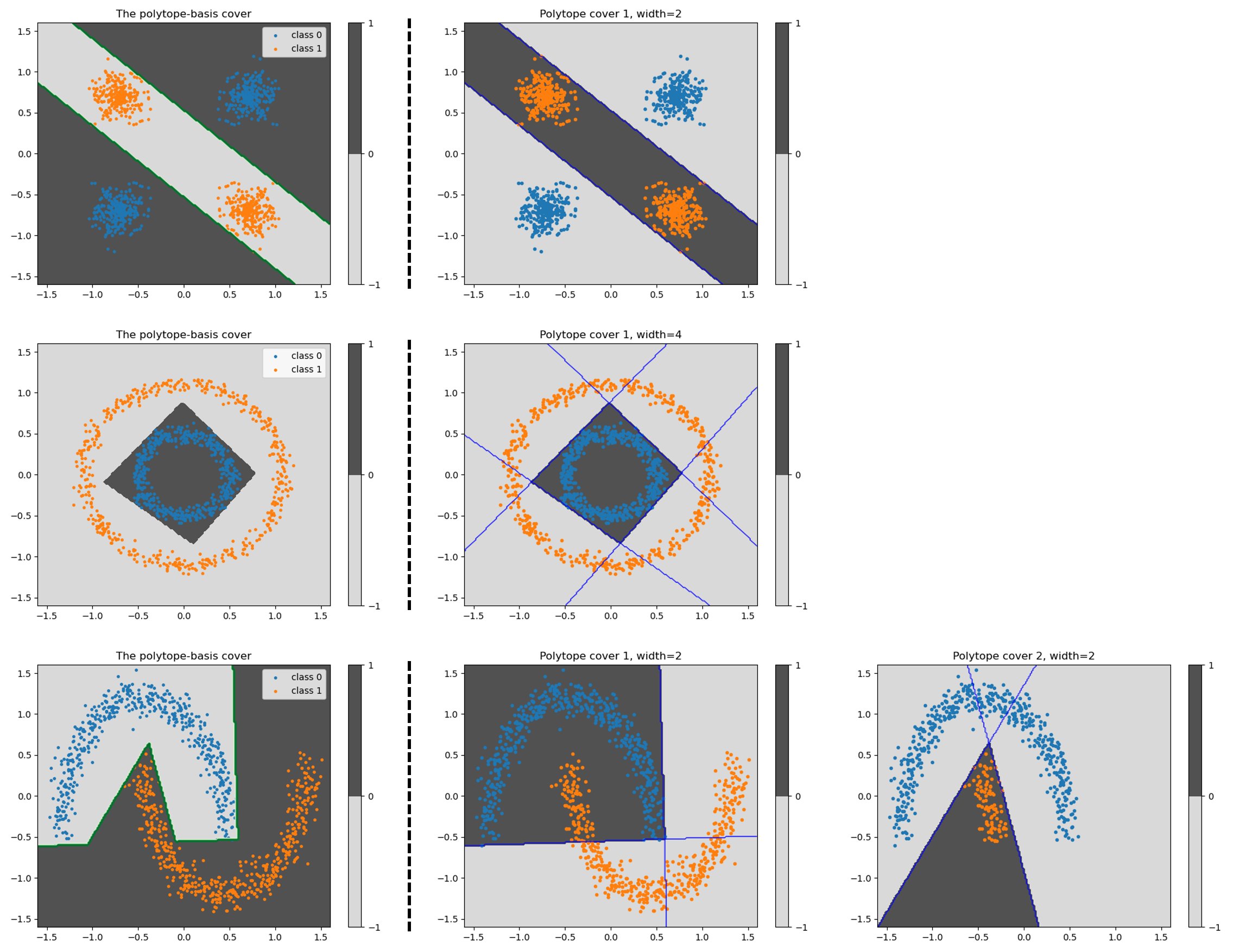}
    \caption{Visualization of Algorithm \ref{alg: polytopes in order} on the synthetic datasets. Empirically, this algorithm provides the smallest number of polytopes and their faces. The obtained polytope-basis covers can be applied to conclude the \exact of neural networks (Remark \ref{rmk: synthetic minimal}).
    }
    \label{fig: synthetic in order}
\end{figure}

It is easily checked that our proposed algorithms indeed generate polytope-basis covers of the given datasets. In each subfigure, the leftmost column represents the decision boundary and activation boundaries of the obtained networks, and the other columns represent each polytope in the obtained polytope-basis cover. 
The obtained polytope-basis covers exhibit the geometric characteristics of datasets, and provide \exacts of neural networks.

\begin{remark} \label{rmk: synthetic minimal}
    Figure \ref{fig: synthetic in order} demonstrates that 'XOR' and `two circles' datasets have single polytope covers, and `two moons' dataset can be covered by two polytopes. From the obtained polytope-basis covers, the \exacts of these datasets are given by 
    \begin{align*}
        \text{XOR : } &\quad \ReLUtwo{2}{2}{1} \\
        \text{Two circles : } &\quad \ReLUtwo{2}{4}{1} \\
        \text{Two moons : } &\quad \ReLUthree{2}{4}{2}{1}.
    \end{align*}
\end{remark}

\newpage
\section{Convergence on the Proposed Networks} \label{sec: convergence}

In this section, we investigate whether gradient descent can converge to the networks we proposed in the main text (cf. Theorem \ref{thm: compact}). Specifically, we focus on two-layer ReLU networks, which are the basic building blocks of the constructions. 
Let $\Nc$ be a two-layer ReLU neural network defined in \eqref{eq: two layer relu}, where $\Theta:=\{v_0\}\cup\{v_k, \wb_k, b_k\}_{k\in[\width]}$ denotes the set of parameters of $\Nc$. For the given dataset $\Dc=\{(\xb_i, y_i)\}_{i=1}^n$, we consider a binary classification problem under the following two loss functions: the mean squared error (MSE) loss and binary cross entropy (BCE) loss. They are defined by 
\begin{align}
    L_{MSE} (\Theta) &:= \frac{1}{2n} \sum_{i=1}^n \Big( \sigma\circ\Nc(\xb_i) - y_i \Big)^2,  \label{eq: MSE loss} \\ 
    L_{BCE} (\Theta) &:= -\frac{1}{n} \sum_{i=1}^n \ell \Big(\SIG \circ \Nc(\xb_i), \; y_i\Big)
    \label{eq: BCE loss}
\end{align}
where $\ell(\Nc,y):= \Nc y + (1-\Nc)(1-y)$. 
Note that we introduce additional activation function $\sigma$ and $\SIG$ to define both loss functions. 
Specifically, we adopt additional ReLU activation on the output layer for the existence of the zero-loss solution in \eqref{eq: MSE loss}.

We now employ the notion of `polyhedrally separable' dataset from the learning theory \citep{astorino2002polyhedral, manwani2010learning}, which is a special case of polytope-basis cover; when a given dataset can be separated by only one convex polytope as depicted in Figure \ref{fig: assumptions} (a).

\begin{definition} 
    We say that the dataset $\Dc=\{(\xb_i, y_i)\}_{i \in [n]}$ is \emph{polyhedrally separable by $C$} if there exists a convex polytope $C$ such that $\xb_i \in C$ if and only if $y_i = 1$ for all $i\in[n]$. 
\end{definition}

We further introduce two notations. First, for a convex polytope $C$ composed of $\width$ faces, we denote its $k$-th face by $\partial C_k$. 
Similarly, $\partial^2 C_k$ denotes the boundary of $\partial C_k$, which refers to  the `edge' part of $C$. 
Second, for a set $A \subset \Rd^d$, $\#(A):= |\{\xb_i\in\Dc ~|~ \xb_i \in A\}|$ denotes the number of data points $\xb_i \in \Dc$ in $A$. 
We further need the following assumptions on the dataset $\Dc$ and network initialization.

\begin{figure*}[t]
    \centering
    \hfill
    \subfigure[]{\includegraphics[width=0.3\textwidth]{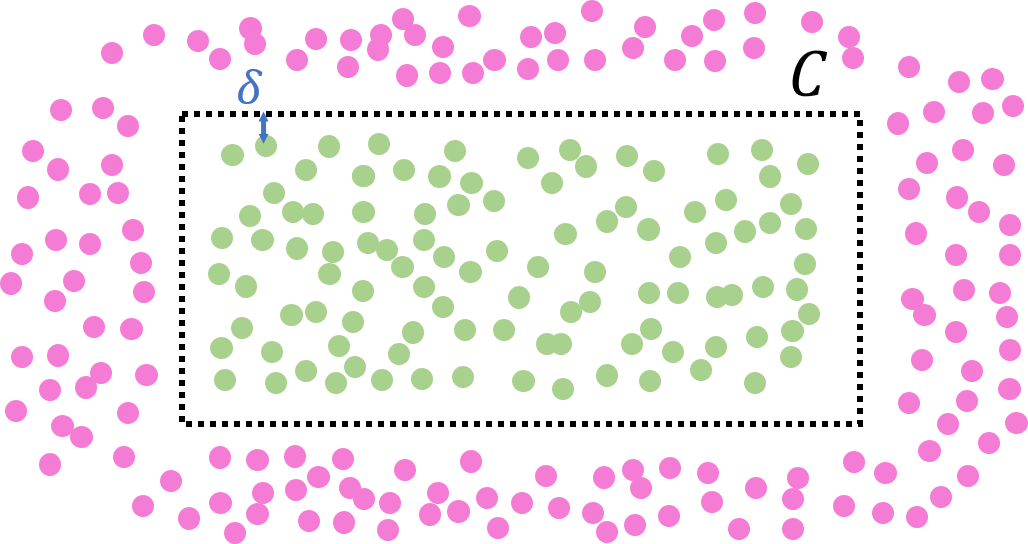}}
    \hfill
    \subfigure[]{\includegraphics[width=0.3\textwidth]{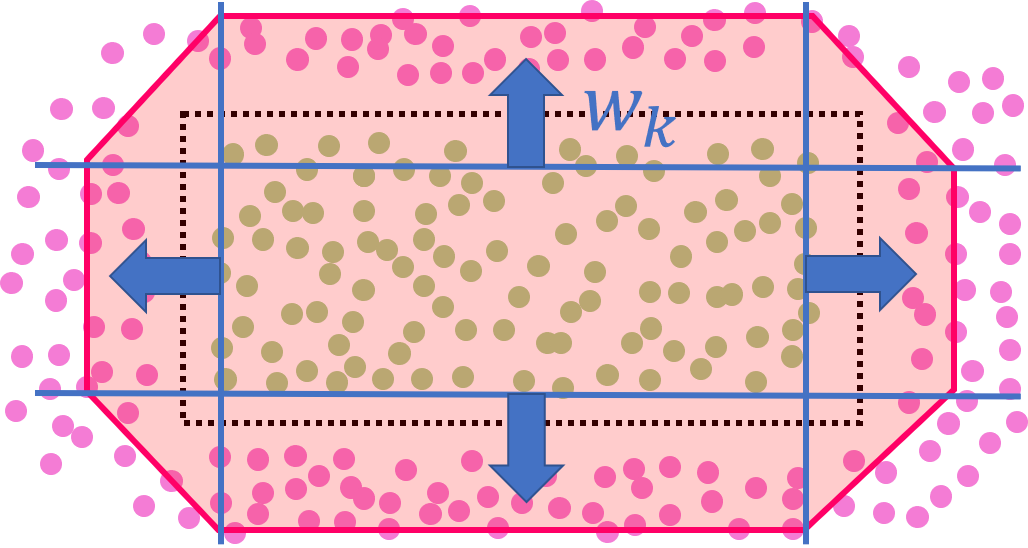}}
    \hfill
    \caption{Assumptions for the dataset and the network initialization.
    (a) Dataset $\Dc$ and a convex polytope $C$ satisfy the Assumption \ref{asmp: dataset}. 
    (b) One example of network initialization satisfying Assumption \ref{asmp: dataset}. The red line displays the decision boundary of $\Nc$.
    }
    \label{fig: assumptions}
\end{figure*}

\begin{assumption}[Dataset and initialization assumptions] \label{asmp: dataset}
    Suppose the dataset $\Dc$ is polyhedrally separable by a convex polytope $C$, which consists of $\width$ faces and strictly contains the origin point. Let $\delta>0$ be the minimum distance between $\xb_i$ and $\partial C$, and $l_k$ be the distance between $\partial C_k$ and the origin point.
    Then, there exist constants $\rho, R>0$ such that for any $k\in[\width]$ and $\delta< r <R$,
    \begin{align} \label{eq: dataset assumption}
        \#\Big(\Bc_{2r}(\partial^2 C_k)\Big) &\le \rho \; \#\Big(\Bc_{r-\delta}(\partial C_k)\Big).\vspace{2mm}
    \end{align}
    Furthermore, the parameters $\{(\wb_k,b_k,v_k)\}_{k\in[\width]}$ of a two-layer ReLU network $\Nc$ defined in \eqref{eq: two layer relu} are initialized such that $\wb_k$ are normal to $\partial C_k$ with outward direction, and satisfying
    \begin{align} \label{eq: assumption 1}
         l_k-R ~<~ l_k +\frac{v_0}{v_k \norm{\wb_k}} ~<~ -\frac{b_k}{\norm{\wb_k}} ~<~ l_k .
    \end{align}
\end{assumption}

The dataset assumption \eqref{eq: dataset assumption} suggests that the data points in the set $\Bc_r(\partial C)$ for small $r$ are predominantly located in close proximity to the faces of the polytope $C$, rather than its corners (Figure \ref{fig: assumptions}(a)). 
The network initialization assumption implies that every neuron $(\wb_k, b_k)$ of $\Nc$ is initialized near $\partial C_k$ as described in Figure \ref{fig: assumptions}(b). 
With these assumptions, we can prove the existence of a discrete path that strictly decrease the loss value to zero.

\begin{theorem} \label{thm: convergence}
    Suppose the dataset $\Dc$ and the two-layer network $\Nc$ in \eqref{eq: two layer relu} satisfy Assumption \ref{asmp: dataset}. Then, 
    \vspace{-2mm}
    \begin{enumerate}
        \item for the MSE loss defined in \eqref{eq: MSE loss}, suppose $v_0$ is initialized such that 
        \begin{align} \label{eq: v0 init MSE}
            \frac{\rho}{1-\rho} \frac{4\width\rho R^2}{\delta^2} <  v_0 < 1 .
        \end{align}
        Then, with step size $ \eta < \min \left\{ \frac{2}{\delta}, \;\frac{2}{\width R}, \;\frac{4\rho \width}{(1-\rho)R} \right\}$,
        there exists a discrete path that the loss value \eqref{eq: MSE loss} strictly decreases to zero. 

        \item For the BCE loss defined in \eqref{eq: BCE loss}, suppose $v_0$ is initialized such that
        \begin{align} \label{eq: v0 init BCE}
            0< v_0 < \log \left(\frac{(1-\rho) \delta}{4\rho R} - 1\right).
        \end{align}
        Then, with step size $\eta < \min \left\{ 1, \;\frac{4\rho R}{(1-\rho)\delta^2}\right\}$, 
        there exists a discrete path that the loss value \eqref{eq: BCE loss} strictly decreases to zero. 
    \end{enumerate}
\end{theorem}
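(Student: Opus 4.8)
\emph{Proof plan.} The plan is to produce, for each of the two losses, an explicit one-parameter family of parameters $\{\Theta(s)\}_{s\in[0,1]}$ joining the initialization $\Theta(0)$ of Assumption~\ref{asmp: dataset} to a global minimizer $\Theta(1)$, along which the loss is strictly decreasing, and then to discretize it with step size $\eta$ and check that each discrete step still strictly decreases the loss. The target $\Theta(1)$ is read off from the constructive proof of Proposition~\ref{prop: convex polytope}: it is the two-layer network of the form \eqref{eq: two layer relu} with $v_k<0$, each $\wb_k$ a (scaled) outward normal to $\partial C_k$, each hyperplane $\{\wb_k^\top\xb+b_k=0\}$ flush with $\partial C_k$, and the slopes $|v_k|$ taken large (together with $v_0=1$ in the MSE case). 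Under \eqref{eq: assumption 1} the initial hyperplanes already lie inside $C$ with $\wb_k$ normal to $\partial C_k$, so the family can be taken to slide each hyperplane monotonically outward to $\partial C_k$ while steepening $|v_k|$ (and, for the MSE loss, increasing $v_0$ toward $1$); geometrically this retracts the decision region onto $C$, removing any label-$0$ datum caught in the thin outer shell between $C$ and the initial decision boundary, while keeping every label-$1$ datum strictly inside.

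The core estimate is the change of the loss under one small step of this family. Because $\Dc$ is polyhedrally separable by $C$ and, along the whole path, the combinatorial activation pattern is preserved (which is exactly what the step-size bounds enforce), perturbing the $k$-th hyperplane and its slope alters $\Nc(\xb_i)$ only for data on the far side of hyperplane $k$; these split into (i) the currently misclassified points in the slab between hyperplane $k$ and $\partial C_k$, whose residuals shrink --- the \emph{gain} --- and (ii) points near the edge set $\partial^2 C_k$ together with nearby label-$0$ points, whose residuals may grow --- the \emph{cross terms}. The covering hypothesis \eqref{eq: dataset assumption}, $\#(\Bc_{2r}(\partial^2 C_k))\le\rho\,\#(\Bc_{r-\delta}(\partial C_k))$, bounds the number of ``corner'' points by $\rho$ times the number of ``face'' points, so that, summed over $k\in[\width]$, the cross terms are at most a $\rho$-fraction of the gain and the net change is a strict decrease of order $(1-\rho)\times(\text{gain})$ per step. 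The explicit thresholds $\eta<\min\{2/\delta,\,2/(\width R),\,4\rho\width/((1-\rho)R)\}$ (MSE) and $\eta<\min\{1,\,4\rho R/((1-\rho)\delta^2)\}$ (BCE) are precisely what is needed so that no hyperplane overshoots $\partial C_k$, the activation pattern of every datum is stable across the step, and the piecewise-linear kinks of $\sigma$ cannot overturn the first-order decrease.

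The role of $v_0$ differs between the losses. For the MSE loss \eqref{eq: MSE loss} the output ReLU makes the value $0$ attainable --- it forces $\Nc(\xb_i)=1$ on label-$1$ points and $\Nc(\xb_i)\le0$ on label-$0$ points, which is exactly what the endpoint $\Theta(1)$ achieves --- and the window \eqref{eq: v0 init MSE}, $\tfrac{\rho}{1-\rho}\,\tfrac{4\width\rho R^2}{\delta^2}<v_0<1$, is chosen so that the initial output $\Nc=v_0$ lies below the target value $1$ (upper bound) while being large enough that the accumulated cross-term budget never exceeds the accumulated gain (lower bound), giving monotone convergence of every residual to $0$. For the BCE loss \eqref{eq: BCE loss} the value $0$ is only the infimum, approached as $\Nc\to+\infty$ on label-$1$ and $\Nc\to-\infty$ on label-$0$; rewriting \eqref{eq: v0 init BCE} equivalently as $1-\SIG(v_0)>\tfrac{4\rho R}{(1-\rho)\delta}$ shows it keeps the initial logistic confidence moderate, so the derivatives $\SIG(\Nc)(1-\SIG(\Nc))$ on the relevant data stay bounded below and dominate the cross terms; the path then drives $v_k\to-\infty$ and the hyperplanes onto $\partial C_k$ from inside, with the BCE loss decreasing monotonically to $0$. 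In both cases one chains the per-step strict decreases and passes to the limit, using that the gain is bounded below whenever some datum is still misclassified.

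The main obstacle I anticipate is controlling the interaction between the non-differentiability of the ReLU and the bookkeeping underlying the estimate above: one must verify, uniformly in $k\in[\width]$ and over all $n$ residuals and all (infinitely many) steps, that no datum ever crosses a hyperplane in a way that breaks the monotone outward motion, that the weight directions $\wb_k$ do not drift far enough from the normals of $\partial C_k$ to spoil the face/corner decomposition, and that the loss genuinely decreases at \emph{every} step rather than merely in the limit. Making this quantitative --- and checking that the stated bounds on $v_0$ and $\eta$ are indeed sufficient --- is the technical heart; the remaining ingredients (estimating how far each hyperplane may be moved per step, and bounding the resulting change in $\sigma\circ\Nc(\xb_i)$ and $\SIG\circ\Nc(\xb_i)$) are routine. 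Once the monotone decrease is established, the resulting sequence of parameters is the claimed ``discrete path,'' and, together with Theorem~\ref{thm: compact}, it shows that the networks constructed there are reachable from the prescribed initialization.
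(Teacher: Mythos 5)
Your proposal follows essentially the same route as the paper's proof: an explicit discrete path that slides each activation hyperplane outward onto $\partial C_k$ while steepening the slopes (with $v_0$ updated separately once no datum activates any neuron), and a per-step loss change split into the gain on data near the faces versus the cross terms on data near the edge set $\partial^2 C_k$, the latter dominated via \eqref{eq: dataset assumption} by a $\rho$-fraction of the former --- exactly the paper's $L_1+L_2+L_3$ decomposition over $A_0$, $B_k$, and $A_k\setminus B_k$. One small caveat: the combinatorial activation pattern is \emph{not} preserved along the path (data points must eventually leave every $A_k$ for the loss to reach zero), so that phrase should be replaced by the weaker statement that each step's effect on every datum is controlled; this does not change the strategy.
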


The proof of this theorem can be found in Appendix \ref{app: proof 4}. Theorem \ref{thm: convergence} asserts that the loss landscape has no local minima within this initialization region. If local minima did exist in this region, it would contradict the presence of a loss-decreasing path from the local minima to the global minima. Consequently, this theorem provides strong evidence for the convergence of gradient descent to the global minima.

However, it is important to note that there may still be saddle points where gradient descent could potentially get stuck. In such cases, we believe that stochastic (noisy) gradient descent may help in escaping these saddle points and eventually converging to the global minimum, which has zero error on the training dataset $\Dc$. Therefore, the initialization conditions described in Assumption \ref{asmp: dataset}, \eqref{eq: v0 init MSE}, and \eqref{eq: v0 init BCE} can be understood as necessary conditions for ensuring that the gradient method converges to the global minimum.

Lastly, we mention that Theorem \ref{thm: convergence} can be easily extended to the three-layer network \eqref{eq: three-layer condition} proposed in Theorem \ref{thm: compact}. For such a three-layer network $\Nc$, Theorem \ref{thm: convergence} can be applied to each two-layer subnetwork $\Tc_j$ to generate the loss-decreasing path. 
By combining all these paths, a unified loss-decreasing path for $\Nc$ is formed. This extension underscores the robustness and generality of the convergence properties demonstrated, ensuring that even more complex network architectures retain the desirable characteristics of gradient descent convergence.

\section{Proofs} \label{app: proofs}
\subsection{Proof of Proposition \ref{prop: convex polytope}.} \label{app: proof 1}
The proof of Proposition \ref{prop: convex polytope} is divided into two parts. Firstly, we prove the upper bound by constructing the desired neural network. Secondly, we show the lower bound of widths.
\subsubsection{The upper bound in Proposition \ref{prop: convex polytope}.}

For the given convex polytope $\Xc$, let $h_1,\cdots,h_\width$ be its $\width$ hyperplanes enclosing $C$. Let $\wb_k$ be the unit normal vector of the $k$-th hyperplane $h_k$ oriented inside $C$, as illustrated in Figure \ref{fig: convex polytope}(a).
Then the equation of the $k$-th hyperplane $h_k$ is given by $h_k : \{\xb ~|~\wb_k^\top\xb+b_k=0\}$ for some $b_k\in\Rd$. Let $A_k$ be the intersection of the hyperplane $h_k$ and $C$, which is a face of the polytope $C$. 
Let $\xb$ be any point strictly contained in $C$. Since $\wb_k$ is a unit normal vector, $\wb_k^\top\xb+b_k$ refers the distance between the hyperplane $h_k$ and the point $\xb$. Therefore, the $d$-dimensional Lebesgue measure of $C$ is computed by
\begin{align} \label{eq: volume}
    \mu_d(C) = \frac{1}{d} \sum_{k=1}^\width  (\wb_k^\top\xb+b_k) \cdot\mu_{d-1}(A_k)
\end{align}
where $\mu_{d-1}$ and $\mu_{d}$ refer the $(d-1)$ and $d$-dimensional Lebesgue measures, respectively. Note that \eqref{eq: volume} comes from the volume formula of a convex polytope, which states that the volume is the sum of volume of $\width$ pyramids.
Then LHS of \eqref{eq: volume} is constant, which does not depend on the choice of $\xb\in\Rd^d$. Now, we define a two-layer ReLU network $\Tc$ with the architecture $\ReLUtwo{d}{\width}{1}$ by
\begin{align} \label{eq: polytope}
    \Tc(\xb) := 1 + M\left( \mu_d(C) -\sum_{k=1}^\width \frac{1}{d} \mu_{d-1}(A_k) \cdot \sigma(\wb_k^\top\xb + b_k) \right)
\end{align}
where $M>0$ is a constant would be determined later. 
Note that we have $\Tc(\xb) = 1$ for $\xb \in C$ from the construction. 
However, considering the negative sign, it is worth noting that the equation \eqref{eq: volume} also holds for $\xb\not\in C$. In particular, for $\xb\not\in C$, \eqref{eq: polytope} deduces
\begin{align*}
    \Tc(\xb) &= 1 + M\left(\mu_d(C) - \sum_{k=1}^\width \frac{1}{d} \mu_{d-1}(A_k) \cdot \sigma(\wb_k^\top\xb+b_k)\right) \\
    &= 1 + M\left(\mu_d(C) - \sum_{k=1}^\width \frac{1}{d} \mu_{d-1}(A_k) \cdot (\wb_k^\top\xb+b_k) 
    + \sum_{\{k~:~\wb_k^\top\xb + b_k < 0\}} \frac{1}{d} \mu_{d-1}(A_k) \cdot (\wb_k^\top\xb+b_k)\right) \\
    &= 1 + M \sum_{\{k~:~\wb_k^\top\xb + b_k < 0\}} \frac{1}{d} \mu_{d-1}(A_k) \cdot (\wb_k^\top\xb+b_k) \\
    &<1 .
\end{align*}
Therefore, we conclude that
\begin{align*}
    \Tc(\xb) &= 1 \qquad \text{if } \xb\in C, \\
    \Tc(\xb) &< 1 \qquad \text{otherwise}.
\end{align*}
Lastly, we determine the constant $M$ in $\Tc$ to satisfy the remained property. For the given $\varepsilon>0$, consider the closure of complement of the $\frac{\varepsilon}{2}$-neighborhood of $C$;  $D:=\overline{ \left(\Bc_{{\varepsilon}/{2}}(C)\right)^c}$. Then the previsous result shows that
\begin{align} \label{eq: u}
    \frac{1}{M}(\Tc(\xb) - 1) = \mu_d(C) - \sum_{k=1}^\width \frac{1}{d} \mu_{d-1}(A_k) \cdot \sigma(\wb_k^\top\xb+b_k)
\end{align}
is bounded above by $0$. Furthermore, \eqref{eq: u} is continuous piecewise linear, and has the maximum $0$ if and only if $\xb\in C$. Since $D$ is closed and \eqref{eq: u} is strictly bounded above by $0$ on $D$, \eqref{eq: u} has the finite maximum $M'<0$ on $D$.
\begin{align*}
    \frac{1}{M}(\Tc(\xb) - 1) \le M' < 0 \qquad \text{for } \xb \in D.
\end{align*}

Now, choose $M$ to satisfy $M>-\frac{1}{M'}$. Then if $\xb \not\in B_{\varepsilon}(C)$, we have $\xb \in D$, thus
\begin{align*}
    \Tc(\xb) &= 1 + M\left(\mu_d(C) - \sum_{k=1}^\width \frac{1}{d} \mu_{d-1}(A_k) \cdot \sigma(\wb_k^\top\xb+b_k)\right) \\
    &\le 1+ M \cdot M' \\
    &< 0.
\end{align*}
Therefore, we have constructed a two-layer ReLU network $\Tc$ with the structure $\ReLUtwo{d}{\width}{1}$ such that
\begin{align*}
    \Tc(\xb) &= 1 \qquad \text{if } \xb\in C, \\
    \Tc(\xb) &< 1 \qquad \text{if } \xb\in C^c, \\
    \Tc(\xb) &< 0 \qquad \text{if } \xb\not\in B_\varepsilon(C).
\end{align*}
This completes the proof on the upper bound. 
Lastly, the minimality of depth comes from the fact that a linear function cannot be a \exact on $C$.
\hfill $\square$

\begin{figure}[t]
    \centering
    \subfigure[]{\includegraphics[width=0.3\textwidth]{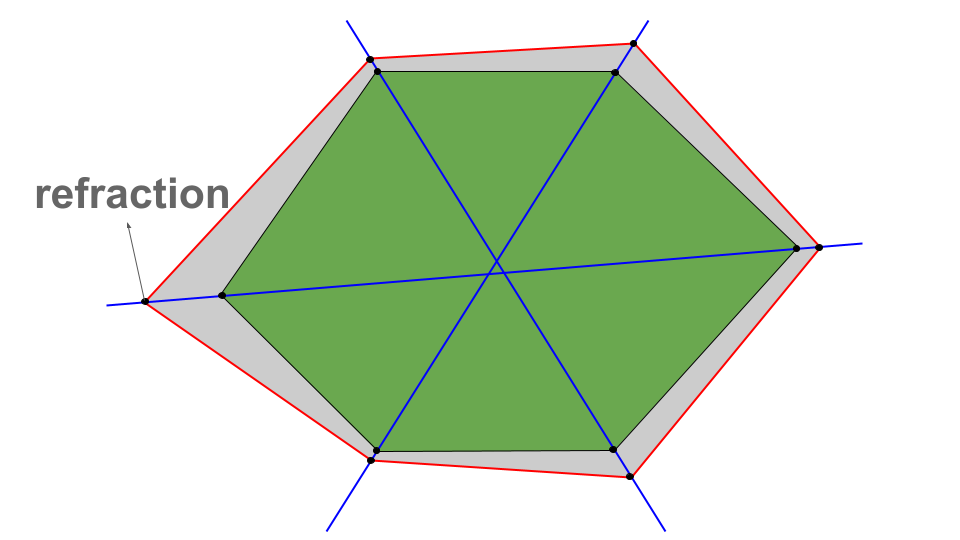}}
    \hfill
    \subfigure[]{\includegraphics[width=0.3\textwidth]{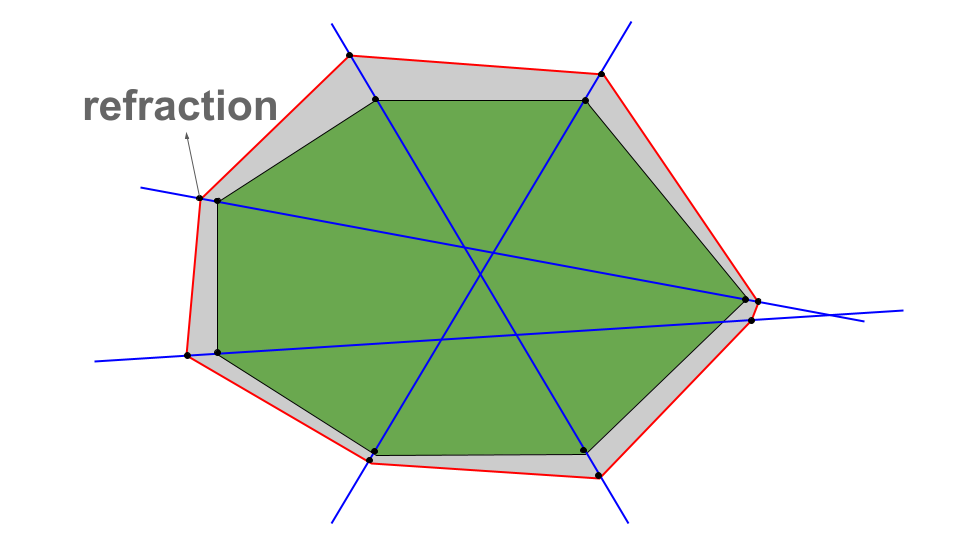}}
    \hfill
    \subfigure[]{\includegraphics[width=0.3\textwidth]{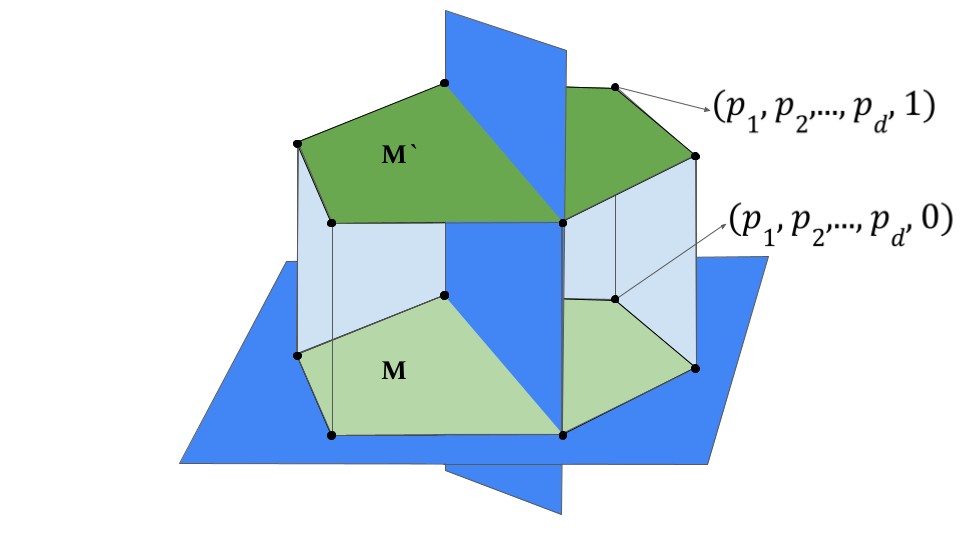}}
    \caption{Proof of Proposition \ref{prop: convex polytope}. 
    }
    \label{fig: convex polytope 2}
\end{figure}

\subsubsection{The lower bound in Proposition \ref{prop: convex polytope}.}
Before proving the lower bound, we introduce a definition on \emph{refraction points.} Let $\Nc(\xb) := \sigma(v_0 + \sum_{i=1}^{d_1} v_i\sigma(\wb_i^\top\xb +b_i))$ be a two-layer network with architecture $\ReLUTwo{d}{d_1}{1}$. Then the set of refraction point is defined by
\begin{align*}
    \{\xb\in\Rd^d ~|~ \Nc(\xb) =0 \quad \text{and} \quad \wb_i^\top\xb+b_i = 0 \quad \text{for some } i \in [l]. \}
\end{align*}
In other words, it is the point where the boundary of a linear partition is `refracted'. 

\textbf{Lower bound for $d=2$ using only refraction points for $k=1$.} Assume that we are given a convex $m$-gon to approximate. Considering the fact that we can approximate the neural network arbitrarily close, we can see that the approximated second layer, i.e. neural network should have at least $m$ refraction points in order to get the shape of polygon. However, if we look from the perspective of first layer neurons, each line has at most $2$ intersection with $m$-gon and it implies that each first layer neuron (or line) can contribute at most $2$ meaningful refraction points for the next layer. If we combine above two results, we can obtain that there should be at least $\ceil{\frac{m}{2}}$ number of neurons in the first layer, in other words $d_1 \geq \ceil{ \frac{m}{2}}$. For example, Figure \ref{fig: convex polytope 2}(a) and (b) demonstrates the refraction points along with potential first layer hyperplanes (blue lines) and converged polytope at the second layer (red polygon) for hexagon and heptagon, respectively. 

\textbf{Proof of the optimality when $d=2$ for $k=1$.} First of all, we should note that on $\Rd^2$, any two convex $m$-gon's given on the general position (i.e. assume sides are non-parallel mutually) can be approximated by the same neural network (same $d_1$ value) considering the fact that we can find an approximator for each given error value $\epsilon$. It implies that we can take optimal possible number of neurons in the first layer, which we will denote by $f(m)$ for any given convex $m$-gon. Let's prove that $f(m) = \lceil \frac{m}{2} \rceil$ for $m\geq5$ along with $f(3)=f(4)=3$. The cases $m=3, 4$ should be handled separately, because  we trivially need at least $d+1=3$ hyperplanes for any shape (Lemma \ref{lem: simplex}), so we start the base case from $m \ge 5$ for $d=2$. 

According to the Lemma \ref{lem: simplex}, it is appearent that for any $m$, the inequality $f(m)\ge 3$ should hold trivially. But if we consider the Figure \ref{fig: convex polytope 2}(a), we can observe that one can approximate any hexagon with $3$ hyperplanes. Apparently, for any pentagon, quadrilateral, and triangle, we can find a corresponding hexagon to include it as a subfigure and rest of the additional vertices of this hexagon can be shrinked to be almost non-exist. It implies that, same number of hyperplanes approximating hexagon can also approximate the polygons with $m\le 5$. This final result yields that $f(m)\le 3$ for $m\le 6$. If we combine these two findings we can get a nice optimality at the fundamental cases, in other words $f(m) = 3$ for $m\in \{3, 4, 5, 6\}$.

Now, assume the contrary that $f(m) \leq \lceil \frac{m}{2} \rceil - 1$, then it is apparent that there is at least one neuron which contributes to the refraction point of at least $2$ vertices (i.e. exactly $2$ vertices considering previous discussion). Now, if we remove the chosen neuron and the associated $2$ vertices and their edges, then the resulting $(m-2)$-gon will be approximated by $f(m)-1$ number of neurons, which implies that $f(m)-1 \geq f(m-2)$. 
Proceeding with the same argument, we can arrive at the conclusion that $f(5)$ or $f(6) \leq 2$; however, we have already proven that $f(5)$ and $f(6)$ are indeed $3$. So, the contradiction at the base case yields the result that $f(m) \geq \lceil \frac{m}{2} \rceil$. 

For the base cases $m=5, 6$, we have already demonstrated that $f(5)=f(6)=3$. Now, take any $m$-gon which has been approximated well with $f(m)=\lceil \frac{m}{2} \rceil$ neurons. Let's add two new vertices to form a new convex polygon with $(m+2)$ vertices, where the newly added vertices are not adjacent. 
Then if we add one new neuron which is the line passing through those two points, we can observe that if given $f(m)$ number of neurons approximate $m$-gon, then $f(m)+1$ can approximate $(m+2)$-gon by triggering $2$ new refraction points. This inductive argument $f(m+2) \leq f(m)+1$ yields the result that if we start from $f(5)=f(6)=3$, we can reach a conclusion that $f(m) \leq \lceil \frac{m}{2} \rceil$. However, we have already shown $f(m) \geq \lceil \frac{m}{2} \rceil$ in the proof above. Therefore, the result follows immediately that the optimal number of neurons in the first hidden layer to approximate any convex polygon with $m$ vertices is $\lceil \frac{m}{2} \rceil$ for $m\geq5$ and $f(3)=f(4)=3$.
\hfill $\square$

\textbf{Lower bound for arbitrary dimension $d$ for $k=1$.} Now we will apply simple induction on the dimensionality to prove the general case for lower bound on the number of first hidden layer neurons. Essentially, we will construct a $d$-dimensional object for $d \geq 2$ such that, one needs at least $d_1 \geq \lceil \frac{m}{2} \rceil + (d-2)$ number of neurons (hyperplanes) to approximate the convex polytope with $l$ faces. We will proceed with an inductive argument,we have already provided a proof for the base case of $d=2$ that $d_1 \geq \lceil \frac{m}{2} \rceil$. 
 
\emph{Inductive step.} Suppose that we have a $d$-dimensional convex polytope $M$ with $v$ number of vertices and $m$ number of faces such that the following inequality should hold: $d_1 \geq \lceil \frac{m}{2} \rceil + (d-2)$. Let's consider the object on $(d+1)$-dimensional space by adding new entry at the end of each coordinate, i.e. any point $(p_1, p_2, ..., p_d)$ on the object will be replaced by the point $(p_1, p_2, ..., p_d, 0)$. Then consider the new shape $M^{\prime}$ formed by considering the extension of convex polytope $M$ on $(d+1)$-dimensional space with all the points from $\{p=(p_1, p_2, ..., p_d, x)~|~\forall x=[0, 1] \text{ and } (p_1, p_2, ..., p_d) \in M\}$. 
Then $M^{\prime}$ will lie on $(d+1)$-dimensional space and it will have $2v$ vertices and $(m+2)$ number of faces, of which $m$ will be determined by the extensions of faces of polytope $M$ along with two faces from $M$ and its duplicate $M^{\prime}$. We can also observe the inductive incrementing idea through the Figure \ref{fig: convex polytope 2}(c), in which polytope $M$ at $d=2$ with $6$ faces has been extended to the $3$-dimensional polytope with $6+2=8$ faces.

If we take a closer look at this construction, we can observe that if we take the intersection of each hyperplane from $d_1$ neurons designed for the approximation of $M^{\prime}$ and polytope $M$, then those intersections will be hyperplane for $d$-dimensional polytope $M$. It implies that in order to approximate $m$ faces of new polytope, the intersections themselves should approximate the $m$ faces of $M$. Furthermore, other than those $m$ faces formed by faces of previous polytope $M$, we should also consider the other $2$ faces, namely $M$ and its duplicate $M^{\prime}$. Those two hyperplanes will require additional $2$ neurons to trigger new refraction points for their approximation. Therefore, there should be at least $d_1 \geq \lceil \frac{m}{2} \rceil + (d-2) + 2$ number of neurons, in which right-hand-side can be equivalently written as $ \lceil \frac{m}{2} \rceil + d = \lceil \frac{m+2}{2} \rceil + (d+1-2)$. So, we were able to prove that to have a neural network of the form $\ReLUTwo{d}{d_1}{1}$ to approximate the convex polytopes with $\width$ faces arbitrarily close, then universally the value of $d_1$ should at least $\lceil \frac{m}{2} \rceil + (d-2)$. 

The result can be also stated that for all $m \geq 2d+1$ one can find a $d$-dimensional convex polytope with $l$ faces such that the minimum required number neurons in the first hidden layer is at least  $\lceil \frac{m}{2} \rceil + (d-2)$. For $m=2d-1$ and $l=2d$, the lower bound becomes $d_1 \geq 2d-1$ as we have already described that $f(3)=f(4)=3$. The lower bound on $m$ comes from the fact that the construction has an inductive fashion to create a new object from previous one by adding $2$ new faces in each step. For the rest of the values of number of faces $m$, i.e. $m < 2d-1$, one can consider the trivial bound of $d+1$. More strongly, in case of $2$-dimensional space, the statement has been proven for all convex polygons that optimal value is indeed $d_1 = \lceil \frac{m}{2} \rceil$.

\textbf{Generalization to arbitrary dimension $d$ and depth $k$.} In the context of manifold representations shaped as convex polytopes with varying depths, we employ an inductive approach to establish lower bounds. Leveraging prior findings on two-layer neural networks, we derive insights applicable to arbitrary dimensions $d$. For any given hyperplane in this setting, a maximum of two distinct refraction points can be identified, a premise that underpins our assumption that each second-layer neuron constitutes a polytope comprised of faces, with no more than twice the number of hyperplanes as the first layer. This result has also been used in the proof of Theorem \ref{thm: betti numbers} and we can observe the trend from the Figure \ref{fig: lower betti}(c).

We transform the general case by considering the facets of second or higher-layer neurons as first-layer neurons (hyperplanes), which represent potential refraction points. This transformation allows us to reduce the problem to a two-layer network by decreasing the depth while augmenting the number of hyperplanes in the first layer. 
More precisely, for a given feasible architecture of the form $\ReLUFour{d}{d_1}{d_2}{\cdots}{d_k}\rightarrow1$, each of $d_2$ number of second layer neurons can contribute at most $2d_1$ hyperplanes along with the $d_1$ hyperplanes in the first layer, which implies total of $d_1+2d_1d_2 = d_1(2d_2+1)$ hyperplanes. In other words, we can transform the above network to another network $\ReLUFour{d}{d_1(2d_2+1)}{d_3}{\cdots}{d_k}\rightarrow1$ by reducing the depth by $1$. By applying the similar process as above, we assert that initial architecture can be effectively transformed into a more robust architecture, $\ReLUTwo{d}{d_1(2d_2+1)(2d_3+1)\dots(2d_k+1)}{1}$.

Consequently, we can generalize lower bounds for convex polytope representations of varying depths, drawing on the insights gained from our two-layer formulation. The ultimate result yields a powerful lower bound as 
$$
d_1 \cdot \prod_{j=2}^k (2d_j+1) \ge
\begin{cases}
    \ceil{\frac{m}{2}} + (d-2), \qquad &\text{ if }\quad  m \geq 2d+1, \\
    2d-1, \qquad &\text{ if }\quad m=2d-1, 2d, \\
    d+1, \qquad &\text{ if }\quad m<2d-1.
\end{cases}
$$
Moreover, the above result is particularly optimal for the case of convex polygons in two dimensions, where $d=2$ and $k=1$, as previously discussed.
\hfill $\square$\\

\begin{remark}
    Rigorously, the lower bound on the network width proposed in Proposition 3.1 can also be understood as the maximum number of faces that a given network can approximate with its polytope. Conversely, to achieve the UAP and approximate any polytope with $\width$ faces, the width of the first hidden layer must be greater than or equal to $\width$. This is precisely explained in Proposition \ref{prop: faces}, which proves that the upper bound proposed in Proposition 3.1 is tight and sufficient to satisfy the UAP. 
\end{remark}

\subsection{Proof of Theorem~\ref{thm: compact}}
\label{app: proof compact}
    By Proposition \ref{prop: convex polytope}, for each set $A\in \Cc=\{ P_1 ,\cdots, P_{n_P}, Q_1, \cdots, Q_{n_Q}\}$, we can construct a two-layer ReLU network $\Tc_A$ with the architecture $\ReLUTwo{d}{\width_A}{1}$ such that $\Tc_A(\xb) = 1$ for $\xb\in A$ and $\Tc_A(\xb)=0$ for $\xb\not\in B_\varepsilon(A)$, where $\width_A$ denotes the number of faces of $A$. Let $a_i := \Tc_{P_i}$ for $i\in [n_P]$ and $b_j := \Tc_{Q_j}$ for $j\in [n_Q]$. 
    Define the output layer by 
    \begin{align*}
        \Nc(\xb) = \sum_{i=1}^{n_P} a_i - \sum_{j=1}^{n_Q} b_j - \frac12.
    \end{align*}
    Then, we obtain the desired network $\Nc$ which has the architecture $\ReLUthree{d}{\width}{(n_P+n_Q)}{1}$.
       \hfill $\square$

\subsection{Proof of Theorem \ref{thm: simplicial complex}} \label{app: proof 2}
    Let $X_1, X_2, \cdots, X_k$ be the $k$ facets of $\Xc$. For each facet $X_i$, we can construct a two-layer ReLU network $\Tc_i$ such that $\Tc_i(\xb)=1$ for $\xb\in X_i$ and $\Tc_i(\xb)<0$ for $\xb\not\in \Bc_\varepsilon(X_i)$ by Lemma \ref{lem: simplex}.
    Then Theorem \ref{thm: compact} gives a neural network $\Nc$ with the architecture $\ReLUthree{d}{d_1}{k}$ with $d_1 = k(d+1)$, therefore, it is a \exact on $\Xc$.
    The remaining goal is to reduce the width of the first layer $d_1$. 
    
    From the construction, we recall that $d_1\le k(d+1)$ comes from the fact where each simplex $X_i$ is covered by a $d$-simplex which has $(d+1)$ hyperplanes. 
    Now consider two $j$-simplices in $\Rd^d$. If $2j+2 \le d+1$, then we can connect all points of the two $j$-simplices in $\Rd^d$, and it becomes a $(2j+2)$-simplex $\Delta^{2j+2}$. Now construct a $d$-simplex $\Delta^{d+1}$ by choosing $(d+1)-(2j+2)$ points in $\Bc_\varepsilon(\Delta^{2j+2})$, whose base is this $(2j+2)$-simplex. Then, by adding two distinguishing hyperplanes at last, we totally consume $(d+3)$ hyperplanes to separate two $j$-simplices. 
    
    Now we apply this argument to each pair of two simplices. The above argument shows that two $j$-simplices separately covered by $2(d+1)$ hyperplanes can be re-covered by $(d+3)$ hyperplanes if $j\le\floor{\frac{d-1}{2}}$, which reduces $(d-1)$ number of hyperplanes.
    In other words, we can save $(d-1)$ hyperplanes for each pair of two $j$-simplices whenever $j\le\floor{\frac{d-1}{2}}$. This provides one improved upper bound of $d_1$:
    \begin{align} \label{eq: a bound of d1}
        d_1 \quad\le\quad k(d+1) - (d-1)\floor{\frac{1}{2} \sum_{j=0}^{\floor{\frac{d-1}{2}}} k_j}.
    \end{align}
    
    Now, we consider another pairing. For $0\le j \le J$, $\Xc$ has $k_j$ $j$-simplex facets. Since each $j$-simplex has $(j+1)$ points, in particular, a $d$-simplex consists of $(d+1)$-points. Therefore, all points in $\floor{\frac{d+1}{j+1}}$ many $j$-simplices can be contained in one $d$-simplex. In this case, these $j$-simplices can be covered by adding $\floor{\frac{d+1}{j+1}}$ hyperplanes more. Thus if we have $k_j$ many $j$-simplices, then the required number of hyperplanes to separately encapsulate the $j$-simplices is less than or equal to
    
    \begin{align}
        \# \left(\text{the number of }d\text{-simplices}\right) &\cdot \# \left( \text{the required number of hyperplanes in each $d$-simplex} \right) \notag \\
        &=\left( \floor{\frac{k_j}{\floor{\frac{d+1}{j+1}}}}+1 \right) \cdot \left( d+1+\floor{\frac{d+1}{j+1}} \right) \notag \\
        &\le \left( k_j \frac{j+1}{d-j}+1\right) \cdot \left( d+1 + \frac{d+1}{j+1} \right) \notag \\
        &< (d+1) \left(\frac{j+2}{j+1}\right) \left( k_j \frac{j+1}{d-j}+1\right) \notag \\
        &= (d+1) \left( k_j \frac{j+2}{d-j} + \frac{j+2}{j+1} \right) \label{eq: however}
    \end{align}
    
    where the inequality is reduced from the property of the floor function: $ a-1 < \floor{a} \le a<\floor{a}+1$ for any $a\in\Rd$. Then another upper bound of $d_1$ is obtained by applying \eqref{eq: however} for all $j \le J$. However, further note that \eqref{eq: however} is greater than the known upper bound $k(d+1)$ if $J>\frac{d}{2}$ ; the sharing of covering simplex is impossible in this case. Therefore, the upper bound of $d_1$ is given by 
    \begin{align}
        d_1  \quad&\le (d+1) \sum_{j\le \frac{d}{2}} \left(  k_j \frac{j+2}{d-j} + \frac{j+2}{j+1} \right) + (d+1)\sum_{j>\frac{d}{2}} k_j \notag\\
        &=(d+1) \left[\sum_{j\le \frac{d}{2}} \left(  k_j \frac{j+2}{d-j} + \frac{j+2}{j+1} \right) + \sum_{j>\frac{d}{2}} k_j \right]
         \label{eq: another bound of d1}
    \end{align}
    
    To sum up, from \eqref{eq: a bound of d1} and \eqref{eq: another bound of d1}, we get the desired result
    \begin{align*}
        d_1 \le
        \min\left\{ k(d+1) - (d-1)\floor{\frac{1}{2} \sum_{j=0}^{\floor{\frac{d-1}{2}}} k_j} 
        ,\; 
        (d+1) \left[\sum_{j\le \frac{d}{2}} \left(  k_j \frac{j+2}{d-j} + \frac{j+2}{j+1} \right) + \sum_{j>\frac{d}{2}} k_j \right] \right\}.
    \end{align*}
    \hfill $\square$

\begin{figure*}
    \centering
    \subfigure[]{\includegraphics[width=0.3\textwidth]{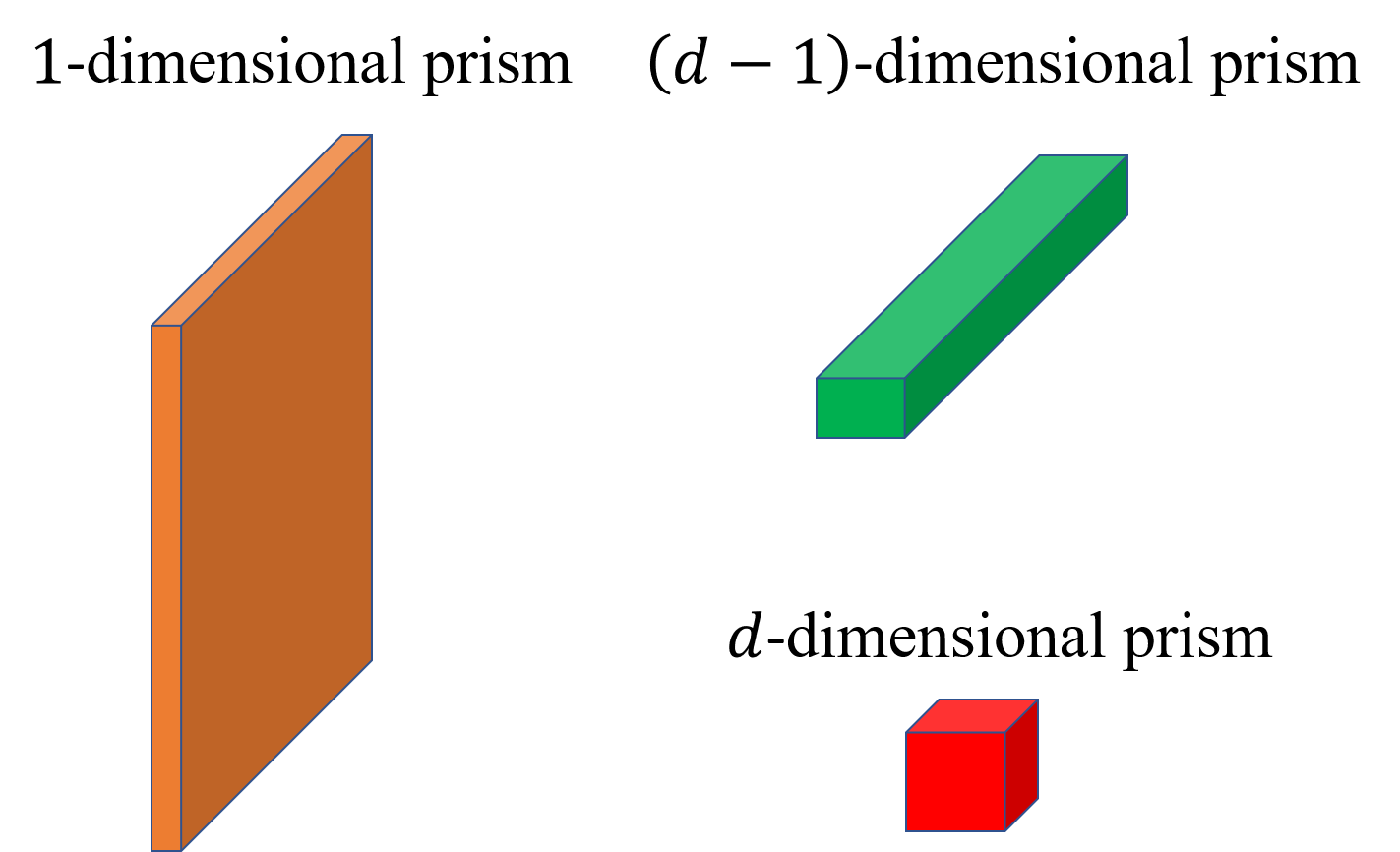}}
    \hfill
    \subfigure[]{\includegraphics[width=0.3\textwidth]{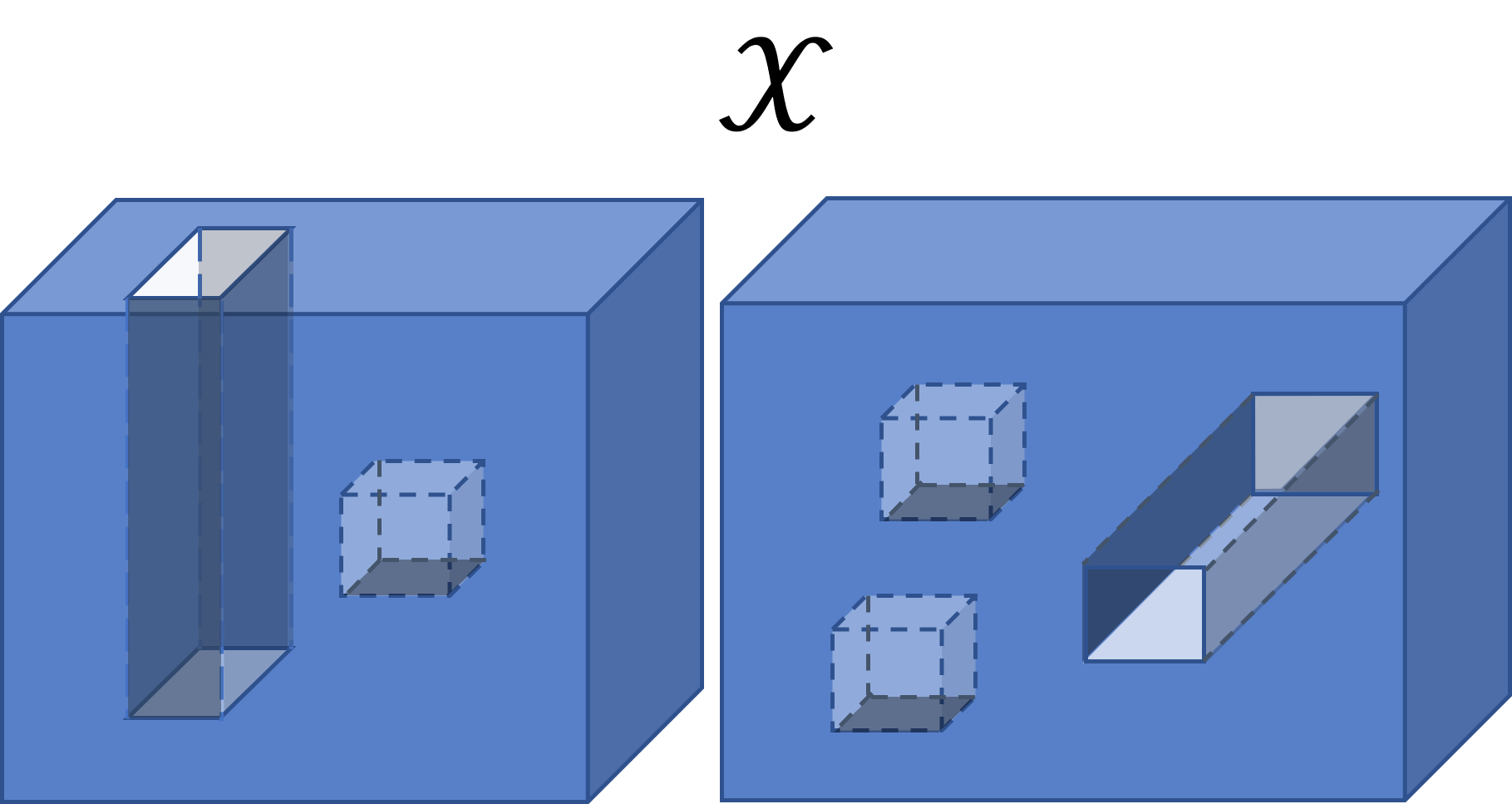}}
    \hfill
    \subfigure[]{\includegraphics[width=0.3\textwidth]{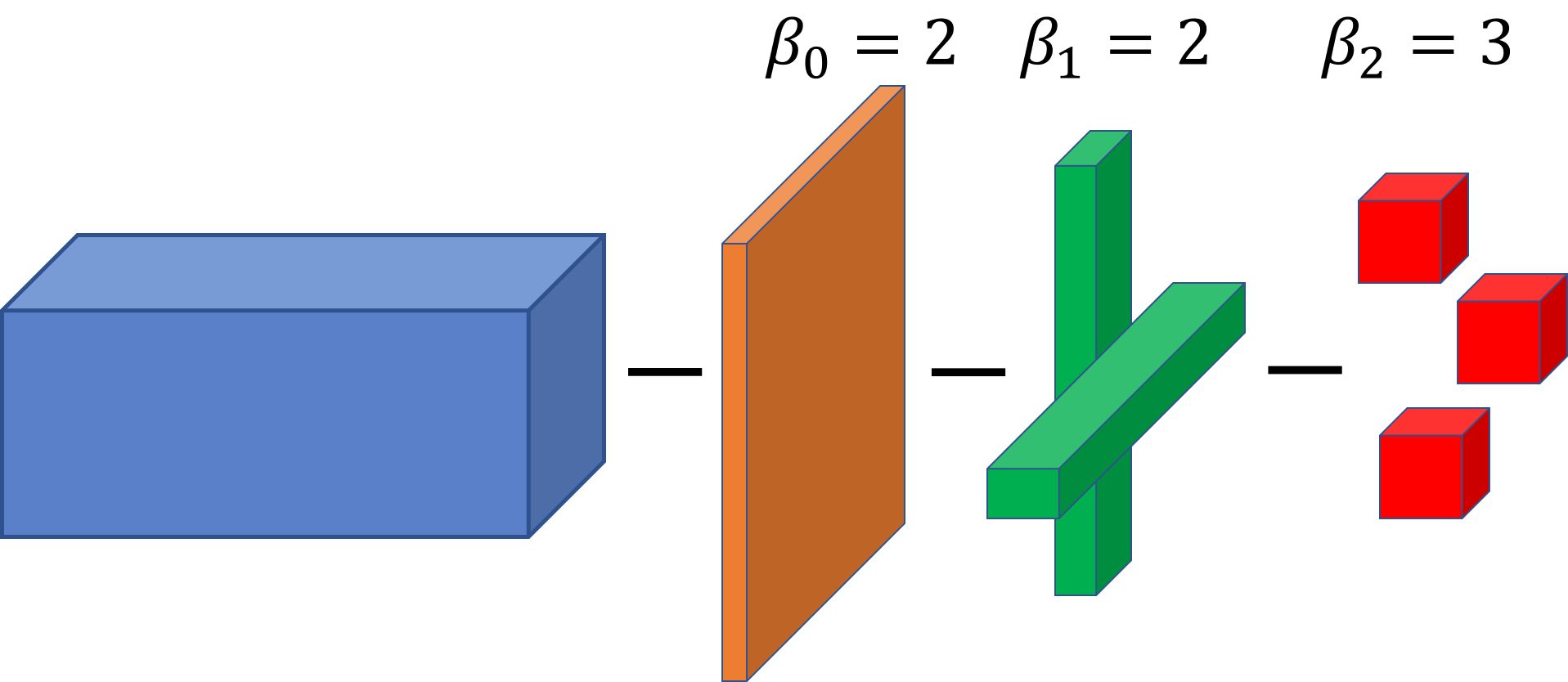}}
    \caption{The Proof of upper bounds in Theorem \ref{thm: betti numbers}.
    (a) Some examples of prismatic polytopes.
    (b) $\Xc$ is a topological space satisfying the assumption in Theorem \ref{thm: betti numbers}.
    (b) The removed prismatic polytopes from $\Xc$ are displayed.
    Theorem \ref{thm: betti numbers} demonstrates that $\ReLUthree{3}{34}{7}{1}$ is a \exact on $\Xc$. 
    }
    \label{fig: betti numbers}
\end{figure*}

\subsection{Proof of Theorem \ref{thm: betti numbers}.} \label{app: proof 3}
The proof consists of two parts: we prove the upper bound first, and second, we show the lower bound.

\subsubsection{The upper bound in Theorem \ref{thm: betti numbers}.}
We establish a terminology about the shape of prismatic polytopes. A prism in $\Rd^3$ consists of a `base' and `height' dimensions, and we generalize it to high dimensional prisms. We define a \textbf{$k$-dimensional prismatic polytope} in $\Rd^d$ as a topological space homeomorphic to $K \times \Rd^{d-k}$, where $\times$ denotes the Cartesian product an $K\subset \Rd^k$ is a compact set which is the `base' of the prism. A \textbf{bounded $k$-dimensional prismatic polytope} is a intersection of a $k$-dimensional prismatic polytope and a bounded convex polytope.

Roughly speaking, an $1$-dimensional prismatic polytope is a thick `hyperplane' in $\Rd^d$, $(d-1)$-dimensional prismatic polytope is a long 'rod,' and a $d$-dimensional prismatic polytope is just a convex polytope, as shown in Figure \ref{fig: betti numbers}(a). 
Then, for $k=1,2,\cdots,d$, removing a $k$-dimensional prism from $\Xc$ generates a $k$-dimensional hole, which increases the $(k-1)$-th Betti number $\beta_{k-1}$. 
In Figure \ref{fig: betti numbers}(b), we provide an example of such cover $\Xc$ in $\Rd^3$. $\Xc$ is described by subtracting six prismatic polytopes from a large bunoid in $\Rd^3$. In this case, the subtracted prismatic polytopes can be understood as a bounded prismatic polytopes with six faces.

Now, we prove the theorem. Recall that the polytope-basis cover $\Xc$ can be described as a subtraction of $\sum_{k=0}^{d-1}\beta_k$ convex polytopes from a sufficiently large convex polytope with $m$ faces. Applying Theorem \ref{thm: compact}, we get 
$$ \ReLUthree{d}{d_1}{\left( \sum_{k=0}^{d-1}\beta_k \right)}{1}
$$
is an upper bound of a \exact, where 
\begin{align*}
    d_1 &\le m + m \cdot \left(\sum_{k=0}^{d-1} \beta_k-1\right) \\
    &= \width\left(\sum_{k=0}^{d-1} \beta_k\right).
\end{align*}

This upper bound of $d_1$ can be further reduced.
For $1 \le k < d$, $\beta_k$ means the number of $k$-dimensional holes in $\Xc$, which was made by punching out a $k$-dimensional prismatic polytope. 
Since $k$-dimensional prisms have $2k$ faces that penetrate $\Xc$, we can reduce $2(d-k-1)$ number of hyperplanes that cover the hole. When $k=0$, it is easy to check that $2(\beta_0-1)$ hyperplanes are required to separate $\beta_0$ connected components. For instance, Figure \ref{fig: betti numbers}(c) shows this process for a topological space given in Figure \ref{fig: betti numbers}(b).
Then, the required total number of hyperplanes is bounded by
\begin{align*}
   d_1 &\le \width + 2(\beta_0-1) + \sum_{k=1}^{d-1} \left(\width-2(d-k-1)\right)\beta_k
\end{align*}
which completes the proof.
\hfill $\square$

\subsubsection{The lower bound in Theorem \ref{thm: betti numbers}.}
\begin{figure}[t]
    \centering
    \subfigure[]{\includegraphics[width=0.23\textwidth]{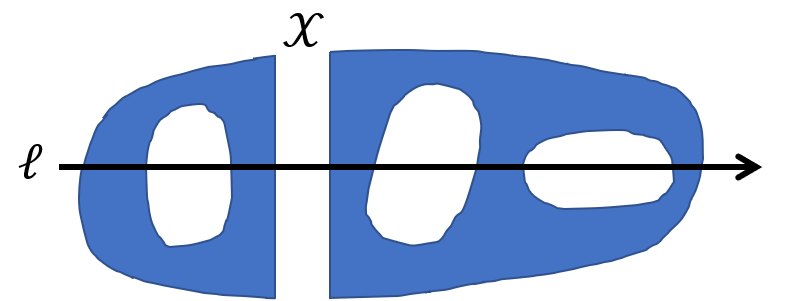}} \hfill
    \subfigure[]{\includegraphics[width=0.23\textwidth]{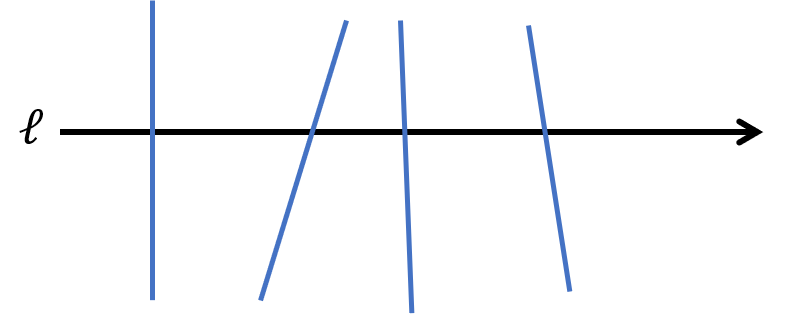}} \hfill
    \subfigure[]{\includegraphics[width=0.23\textwidth]{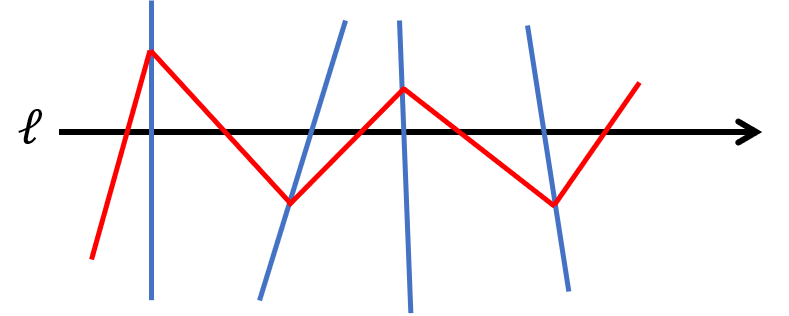}} \hfill
    \subfigure[]{\includegraphics[width=0.23\textwidth]{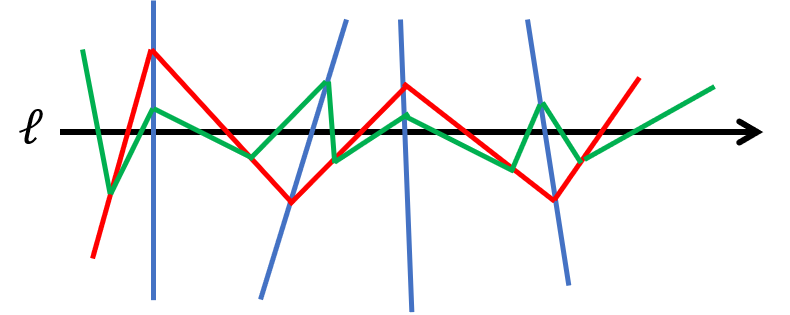}}
    \caption{Proof of lower bounds in Theorem \ref{thm: betti numbers}. 
    (a) Consider a topological space $\Xc$ whose holes intersect with a straight line $\ell$.
    (b) $d_1$ neurons in the first hidden layer of $\Nc$ (blue color) have at most $d_1$ intersection points with $\ell$.
    (c) A neuron in the second layer (red color) has at most $(d_1+1)$ intersection points with $\ell$.
    (d) Similarly, a neuron in the third layer (green color) has at most $d_2(d_1+1)+1$ intersection points with $\ell$.
    }
    \label{fig: lower betti}
\end{figure}

Suppose the given architecture $\ReLUFour{d}{d_1}{d_2}{\cdots}{d_k}\rightarrow1$ is a universally feasible architecture on any topological space $\Xc$ satisfying the assumptions stated in Theorem \ref{thm: betti numbers}. Then, it is enough to consider the `worst' case of dataset to prove a lower bound. We will use the same idea in the proof of Proposition \ref{prop: no topology}. Specifically, for the given Betti numbers $\beta_k$, we consider a topological space $\Xc$ such that every `hole' intersects with a straight line, say $\ell$. Since each hole intersects with $\ell$ at least two points, we conclude that $\Nc$ has at least $2\sum_{k=0}^d\beta_k$ piecewise linear regions on $\ell$ (Figure \ref{fig: lower betti}(a)).

Now we introduce one terminology: from the piecewise linearity of deep ReLU networks, we define a \emph{linear partition region} to be a maximum connected component where the network is affine on. Note also that the boundary of each linear partition region is non-differentiable points of $\Nc$ in $\Rd^d$, which are vanished points of some hidden layers. 

We establish the proof by computing the upper bounds of number of linear partition regions on the straight line $\ell$ made by $\Nc$. 
For $d_1$ neurons in the first hidden layer, the set of vanishing points are $d_1$ hyperplanes in $\Rd^d$, thus it can intersect with $\ell$ at most $d_1$ times (Figure \ref{fig: lower betti}(b)).
Then, consider the vanishing points of the second hidden layer. These points form a bent hyperplane in $\Rd^d$, which is refracted on the intersection with a vanishing hyperplane of the first layer (Figure \ref{fig: lower betti}(c)). Therefore, a vanishing hyperplane of the second hidden layer can intersect with $\ell$ at most $(d_1+1)$ times for each neuron. This concludes that the number of vanishing hyperplanes of the second hidden layers can intersect with $\ell$ at most $d_2(d_1+1)$ times. 
By the same argument, after the third layer, the number of maximum partitions on $\ell$ is bounded by $d_3(d_2(d_1+1)+1)+1$ (Figure \ref{fig: lower betti}(d)), and so on. Then, for the given architecture $\ReLUFour{d}{d_1}{d_2}{\cdots}{d_k}\rightarrow1$, the number of linear partition regions on $\ell$ is bounded by
\begin{align*}
    1+ &d_k + d_k d_{k-1} + d_k d_{k-1} d_{k-2} + \cdots + d_k \cdots d_1 \\
    &= 1 + \sum_{i=1}^k \prod_{j=i}^k d_j .
\end{align*}
Therefore, to be a \exact on $\Xc$, we get
$$  1 + \sum_{i=1}^k \prod_{j=i}^k d_j \ge 2\sum_{k=0}^d \beta_k -1, $$
which completes the proof.
\hfill $\square$

\subsection{Proof of Theorem~\ref{thm: three-layer polytope cover}} \label{app: proof 5}
Recall that $\Tc_j$ satisfies that $\sigma(\Tc_j(\xb_i))=0$ or $\lambdabias$ for all $\xb_i \in \Dc, j \in [J]$. 
From \eqref{eq: two-layer constant bias} and Lemma \ref{lem: two-layer convex},
we know that $C_j := \{\xb~|~\Tc_j(\xb)=\lambdabias\}$ is a convex polytope for each $j\in[J]$. Then, we get
\begin{align*}
    \Nc(\xb) &= -\frac12\lambdabias + \sum_{j=1}^J a_j \sigma(\Tc_j(\xb)) \\
    &= -\frac12\lambdabias + \sum_{j=1}^J a_j \indicator{\xb \in C_j}(\xb).
\end{align*}
Now, we define $\Cc_P := \{ C_j \in \Cc ~|~ a_j =+1\}$ and $\Cc_Q := \{ C_j \in \Cc ~|~ a_j =-1\}$. Then, we get
\begin{align*}
    \Nc(\xb_i) >0  \qquad\Longleftrightarrow\qquad 
    \sum_{C\in\Cc_P} \indicator{\xb_i \in C} > \sum_{C\in\Cc_Q} \indicator{\xb_i \in C}
\end{align*}
for all $i \in [n]$. Therefore, Definition \ref{def: convex polytope cover} establishes that $\Cc$ is a polytope-basis cover of $\Dc$, ensuring its accuracy matches that of $\Nc$.
\hfill $\square$

\subsection{Proof of Proposition \ref{prop: algorithms}} \label{app: proof 6}
Here, we present proofs for each statement. 
\begin{enumerate}        
    \item 
    Firstly, we establish the validity of \eqref{eq: two-layer constant bias}, ensuring $v_{jk}<0$. This condition holds at initialization as the network adheres to \eqref{eq: initialization}. Throughout the algorithm, consisting of neuron removal and neuron scaling, neither action compromises \eqref{eq: two-layer constant bias}. Proposition \ref{prop: balanced property} assures the persistence of \eqref{eq: initialization} under gradient flow. Consequently, \eqref{eq: two-layer constant bias} remains satisfied throughout.
    
    Secondly, we scrutinize the condition \eqref{eq: three-layer condition}. Suppose there exists $\xb_i \in \Dc$ such that $0<\Tc(\xb_i)<\lambdabias$. From Definition \ref{eq: two-layer constant bias}, it implies that 
    \begin{align*}
        -\lambdabias < \Tc(\xb_i)-\lambdabias = \sum_{k\in [m]} v_k\sigma(\wb_k^\top\xb_i+b_k) < 0
    \end{align*}
    Recall that all $v_k$ are negative, by the preceded proof, and ReLU is positive homogeneous. Therefore, by scaling neurons $(v_k, \wb_k, b_k)$ by $(\lambda_{scale} v_k, \lambda_{scale} \wb_k, \lambda_{scale} b_k)$ such that $\wb_k^\top\xb_i+b_k >0$, the network output changes from
    $$ \Tc(\xb_i)-\lambdabias  \qquad \rightarrow\qquad \lambda_{scale}^2 (\Tc(\xb_i)-\lambdabias). $$
    Therefore, by repeating this scaling sufficiently many times, given $\lambda_{scale} > 1$, $(\Tc(\xb_i)-\lambdabias)$ decreases under $-1$. 
    This process applies to all such $\xb_i$ in the dataset, eventually leading to $\sigma(\Tc(\xb_i))=0$ or $\lambdabias$ for all $\xb_i\in\Dc$.
    
    \item 
    In Algorithm \ref{alg: three-layer}, first for loop must terminate after $Epochs$ repetition. Then, the following repeat loop makes all $\Tc_j$ to satisfy $\sigma(\Tc_j(\xb_i))$ is either 0 or 1, for all $\xb_i \in \Dc$. However, by the previous proof, we know Algorithm \ref{alg: compressing} provides a network satisfying both \eqref{eq: two-layer constant bias} and \eqref{eq: three-layer condition} in finite time. Therefore, this algorithm is guaranteed to terminate in finite time. 
    Lastly, since each $\Tc_j$ has binary output $0$ or 1, convex polytopes defined by $C_j:=\{\Tc_j>0\}$ forms a polytope-basis cover, which has the same accuracy with the produced network $\Nc$. 

    \item 
    To prove finite-time termination of Algorithm \ref{alg: two-layer}, it is enough to show that the \textbf{repeat} loop in the algorithm must terminate in finite time. Specifically, we prove the following two statements: 1. the incorrectly covered data $\hat\xb$ is correctly covered after one process in the \textbf{repeat} loop by adding two polytopes, and 2. these added polytopes do not hurt other correctly covered data.

    First, let $\Cc$ be a (constructing) polytope-basis cover and let $\hat\xb$ be an incorrectly covered data by $\Cc$. Then, it means the sign of $\Nc_+(\hat\xb) - \Nc_-(\hat\xb)$ and the sign of $\hat o_i := \sum_{c\in\Cc_P} \indicator{\hat\xb\in C} - \sum_{c\in\Cc_Q} \indicator{\hat\xb\in C} + \frac12$. Let $c$ be the value between $\Nc_-(\hat\xb)$ and $\Nc_+(\hat\xb)$, and define two convex polytopes
    \begin{align*}
        C_+ &:= \{\xb~|~\Nc_+(\xb)<c\} \\
        C_- &:= \{\xb~|~\Nc_-(\xb)>-c\}.
    \end{align*}
    Then, by the definition, $\hat\xb$ is only contained in either $C_+$ or $C_-$, determined by the sign of $\Nc(\hat\xb)$. Therefore, adding these two polytopes to the polytope-basis cover $\Cc$, by $C_- \in \Cc_P$ and $C_+ \in \Cc_Q$, $\hat\xb$ is now correctly classified by the cover $\Cc$.

    Second, we claim that adding above two polytopes do not disrupt other correctly covered data. Recall the approximation of convex functions discussed in Appendix \ref{app: two-layer}. Let $f:\Rd^d \rightarrow \Rd$ be a convex function, and let $P:=\{p_0, p_1, \cdots, p_{J+1}\}$ be a finite partition of real number by
    $$ -\infty = p_0 \le p_1 \le p_2 \le \cdots \le p_J \le p_{J+1}=+\infty
    $$
    Then, $f$ can be approximated by
    \begin{align*}
        f(\xb) \approx p_1 + \sum_{j=1}^J (p_{j+1} - p_j) \indicator{f(\xb)<p_j}.
    \end{align*}
    This approximation can be understood as quantization of the function $f$ by values in $P$. Then, elementary analysis \citep{rudin1976principles} shows that refinement of the partition $P$ only increases the accuracy of the above approximation. I.e., as polytopes added in the constructing polytope-basis cover $\Cc$, the number of incorrectly covered data by $\Cc$ strictly decreases. Since there is finite data points in the training dataset $\Dc$, Algorithm \ref{alg: two-layer} must terminate in finite time. More precisely, the \textbf{repeat} loop in the algorithm must be halted in $n=|\Dc|$ times. 
    
    \item 
    Let $\Cc=\{C_1, C_2, C_3, \cdots, C_J\}$ be the output of Algorithm \ref{alg: polytopes in order}. Then, from its construction described in the algorithm, it implies that
    \begin{center}
    \begin{minipage}[t]{0.7\textwidth} 
        $C_1$ contains all points in $\Dc_0$. \\
        $C_2$ contains all points in $\Dc_1 \cap C_1$. \\
        $C_3$ contains all points in $\Dc_0 \cap C_2$. \\
        \-\ \hspace{2cm} $\vdots$ \\
        $C_{J-1}$ contains all points in $\Dc_{\frac{1+(-1)^{J-1}}{2}} \cap C_{J-2}$. \\
        $C_J$ contains all points in $\Dc_\frac{1+(-1)^J}{2}\cap C_J$, and does not contain the another class.
    \end{minipage}
    \end{center}
    Now, define 
    \begin{align}
        \Nc(\xb) := -\frac12 + \sum_{j=1}^J (-1)^j \sigma(\Tc(\xb)).
    \end{align}
    Then, $\Nc$ is a three-layer ReLU network of the form \eqref{eq: three-layer}. Furthermore, $\Tc_j(\xb_i)$ is either 0 or 1 for all $i\in [n]$ and $j \in [J]$, satisfying the condition \eqref{eq: three-layer condition}. Therefore, Theorem \ref{thm: three-layer polytope cover} verifies that $\Cc := \{C_j\}_{j\in [J]}$ becomes a polytope-basis cover of the dataset. 
\end{enumerate}
\vspace{-0.5cm}\hfill $\square$

\subsection{Proof of Theorem \ref{thm: convergence}.} \label{app: proof 4}
\subsubsection{Proof for the MSE loss \texorpdfstring{\eqref{eq: MSE loss}}{function}.}
The proof is divided into several steps. 
First, for $k\in[\width]$, we define the following sets:
\begin{align}
    A_k &:= \{ \xb\in\Rd^d ~|~ \wb_k^\top\xb+b_k>0 \} \label{eq: ak} \\ 
    B_k &:= \{ \xb\in\Rd^d ~|~ \wb_k^\top\xb+b_k>0 \text{ and }\wb_j^\top\xb+b_j>0 \text{ for } j \neq k \} \label{eq: bk}
\end{align}
I.e., $A_k$ is the region where $k$-th neuron is alive, and $B_k$ is the region where only $k$-th neuron is alive (see Figure \ref{fig: MSE loss convergence dynamics}(b,c)). Similarly, we define 
$$ A_0 :=\{ \xb\in\Rd^d ~|~ \wb_k^\top\xb+b_k<0 \} 
$$
which is the region where all neurons are dead, except the last bias term $v_0$. 
Now, we define the following values for every $k\in[\width]$:
\begin{align}
    l_k &:= \text{the distance between $O$ and }\partial C_k, \nonumber \\
    s_k &:= - \frac{b_k}{\norm{\wb_k}},     \label{eq: sk} \\ 
    t_k &:= - \frac{v_0}{v_k\norm{\wb_k}},  \label{eq: tk} \\
    t &:=\max_{k\in[\width]} \{t_k, \delta \}. \nonumber
\end{align}
Then, the network initialization condition \eqref{eq: assumption 1} gives
\begin{align*}
    0 &< t_k < R, \\
    0 &< s_k < l_k < s_k+t_k.
\end{align*}
In other words, $s_k$ is the distance between the origin point $O$ and the hyperplane $\{\wb_k^\top \xb+b_k =0 \}$. $t_k$ is the length of `height' of the region $B_k$ as depicted in Figure \ref{fig: MSE loss convergence dynamics}(c). To be familiar for these notations, we demonstrate the output $\Nc$ in Figure \ref{fig: MSE loss convergence dynamics}(d) with respect to $\norm{\wb_k}$.

It is clear that $\Nc(\xb)=v_0$ if $\xb \in A_0$, and it linearly decreases to zero for $\xb \in B_k$. When $\xb_i \in B_k$ satisfies $\xb_i^\top \frac{\wb_k}{\norm{\wb_k}} = s_k+t_k$, $\Nc(\xb_i)=0$.
Now we are ready to prove the theorem.

\begin{figure}[t]
    \centering
    \subfigure[]{\includegraphics[width=0.2\textwidth]{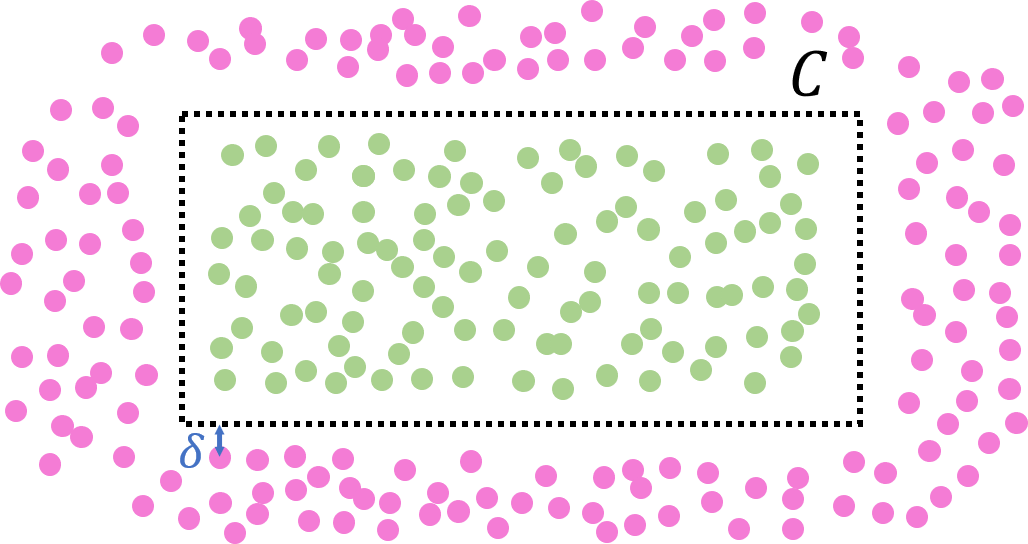}}
    \hfill
    \subfigure[]{\includegraphics[width=0.2\textwidth]{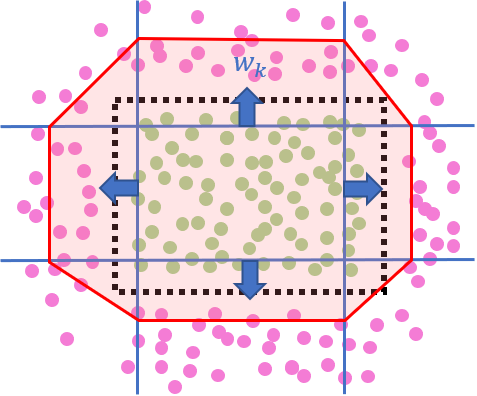}}
    \hfill
    \subfigure[]{\includegraphics[width=0.25\textwidth]{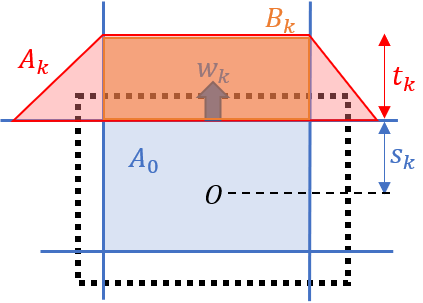}}
    \hfill
    \subfigure[]{\includegraphics[width=0.28\textwidth]{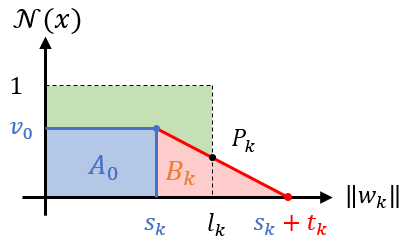}}    
    \caption{Proof of Theorem \ref{thm: convergence}. 
    (a) The given dataset $\Dc$ is polyhedrally separable by a black dashed rectangle $C$.
    (b) Initialization of a two-layer ReLU network $\Nc$.
    (c) For $k\in[\width]$, sets $A_k$ and $B_k$ defined in \eqref{eq: ak} and \eqref{eq: bk} are illustrated. 
    (d) A sideview of the function $\Nc$ with respect to $\norm{\wb_k}$. $s_k$ and $t_k$ are defined in \eqref{eq: sk} and \eqref{eq: tk}. Note that the intersection point $P_k$ is invariant after the update of parameters.
    }
    \label{fig: MSE loss convergence dynamics}
\end{figure}

For the previously defined sets $A_k$ and $B_k$, the MSE loss \eqref{eq: MSE loss} is computed by 
\begin{align}
    L_{MSE} &= \frac{1}{2n}\sum_{i=1}^n (\Nc(\xb_i)-y_i)^2 \nonumber \\
    &= \frac{1}{2n}\sum_{\xb_i \in A_0} (\Nc(\xb_i)-y_i)^2 + \frac{1}{2n}\sum_{\xb_i \in \cup_k B_k} (\Nc(\xb_i)-y_i)^2 + \frac{1}{2n}\sum_{\xb_i \in \cup_k (A_k\backslash B_k)} (\Nc(\xb_i)-y_i)^2 \nonumber \\
    &=: L_1 + L_2 + L_3. \label{eq: L123}
\end{align}
Note that we omitted $\Theta$ notation, the set of all learnable parameters.
We will observe the change of these loss values with respect to one update of parameters. We add prime ($'$) for the updated parameter. 
For the given step size $\eta$, we explicitly provide the update of parameters by
\begin{align*}
    v_0 \quad&\rightarrow\quad v_0' :=v_0 + \Delta v_0, \\
    s_k \quad&\rightarrow\quad s_k' := s_k + \Delta s_k, \\
    t_k \quad&\rightarrow\quad t_k' := t_k + \Delta t_k 
\end{align*}
for all $k\in[\width]$, where 
\begin{align}
    \Delta v_0 &:= 
    \begin{cases}
        \qquad 0 \qquad &\text{if }\quad  \#(\cup_{k \in [l]} A_k) > 0, \\
        -\frac12 (v_0-1)t\eta, \qquad & \text{otherwise},
    \end{cases} 
    \label{eq: dv0} \\
    \Delta s_k &:= 
    \begin{cases}
        \frac{\eta t_k^2}{v_0 + \eta t_k} \cdot \frac{l_k-s_k}{l_k+t_k-s_k}
        \qquad &\text{if }\quad  \#(A_k) > 0, \\
        \qquad 0 &\text{otherwise}, 
    \end{cases}
    \label{eq: dsk} \\
    \Delta t_k &:=
    \begin{cases}
        \Delta s_k - \frac{\eta t_k^2}{v_0 + \eta t_k}
        \qquad &\text{if }\quad  \#(A_k) > 0, \\
        \frac{t_k \Delta v_0 - \eta t_k^2}{v_0 + \eta t_k} &\text{otherwise}.
    \end{cases}
     \label{eq: dtk}
\end{align}
Specifically, $v_0$ is updated if and only if $\cup_{k\in[l]}A_k$ contains a data point, where $s_k$ and $t_k$ are updated exclusively.
The given update terms are proposed to have some invariant quantity. In Figure \ref{fig: MSE loss convergence dynamics}(d), we set $P_k$ to be the output value of $\Nc$ at $l_k$, and update equations in \eqref{eq: dv0}$\sim$\eqref{eq: dtk} are determined to keep this value $P_k$. 
Furthermore, it satisfies that the change of the slope is exactly $-\eta$, i.e.,
\begin{align*}
    \Delta\left(-\frac{v_0}{t_k}\right) &:= -\frac{v_0+\Delta v_0}{t_k + \Delta t_k - \Delta s_k} + \frac{v_0}{t_k}   \\
    &= -\eta.
\end{align*}
Note also that $v_0<1$ and $s_k<l_k$ are increasing, where $t_k>0$ is decreasing.

In the subsequent steps, we examine the change of each loss value. 
The main idea of the proof is computing lower bounds on the reduction of the loss value resulting from one-step update given by \eqref{eq: dv0}$\sim$\eqref{eq: dtk}. 
It is divided into four steps.

\textbf{STEP 1.} First, we consider when $\#(\cup_{k \in [l]} A_k)=0$. In this case, since $L_2=L_3=0$ from the definition \eqref{eq: L123}, it is enough to investigate the change of $L_1$. Recall that
$$ L_1 := \frac{1}{2n}\#(A_0) (v_0-1)^2 . $$
By one-step update of parameters, it becomes $L_1 ~ \rightarrow ~  L_1' := L_1 + \Delta L_1$. Then, 
\begin{align*} 
    \Delta L_1 &= L_1' - L_1 \\ 
    &= \frac{1}{2n} \sum_{\xb_i \in A_0} (v_0+\Delta v_0 -1)^2 - \frac{1}{2n} \sum_{\xb_i \in A_0} (v_0 - 1)^2 \\ 
    &= \frac{1}{2n} \cdot (2v_0-2+\Delta v_0) \Delta v_0 \cdot \#(A_0) \\
    &= -\frac{\#(A_0)}{n} (1-v_0)\Delta v_0 +\frac{\#(A_0)}{2n} (\Delta v_0)^2 \\
    &= -\frac{\#(A_0)}{2n}(1-v_0)^2(t\eta - \frac14 t^2\eta^2) \\
    &< -\frac{\#(A_0)}{2n}(1-v_0)^2 \cdot \frac12 t \eta.
\end{align*}
Note that we use $\eta < \frac{2}{t} < \frac{2}{\delta}$ on the last inequality.
Then, we get
\begin{align}
    L_1' &= L_1 + \Delta L_1 \nonumber \\
    &= \left(1 + \frac{\Delta L_1}{L_1}\right)L_1 \nonumber \\
    &< \left( 1 - \frac{\frac{1}{2n} \#(A_0)(1-v_0)^2 \cdot \frac12 t \eta}{\frac{1}{2n}\#(A_0) (1-v_0)^2}\right) L_1 \nonumber \\ 
    &= \left(1-\frac12 t\eta\right) L_1 \nonumber \\
    &\le \left(1 - \frac12 \delta\eta\right) L_1 \label{eq: the first loss} 
\end{align}
which states that $L_1$ strictly decreases after the update.

\textbf{STEP 2.} 
Now, we consider when $\#(\cup_{k \in [l]} A_k)>0$.
We investigate the second term in \eqref{eq: L123}, defined by
$$ L_2 := \frac{1}{2n}\sum_{k\in[l]} \sum_{\xb_i \in B_k} (\Nc(\xb_i)-y_i)^2 .
$$
Recall that the update of parameters given in \eqref{eq: dv0}$\sim$\eqref{eq: dtk} are chosen to keep $P_k$ value and increasing the absolute value of the slope $-\frac{v_0}{t_k}$ by $\eta$. Therefore, for any $\xb_i \in B_k$, it $\Nc(\xb_i)$ increases (or decreases) if and only if $\xb_i \in B_k \cap C$ (or $\xb_i \in B_k \backslash C$). Therefore, $|\Nc(\xb_i) - y_i|$ always strictly decreases after the update, which implies that 
\begin{align}  \label{eq: the second loss}
    \Delta L_2 := L_2' - L_2 <0 .
\end{align}

\textbf{STEP 3.} 
We observe the last term in \eqref{eq: L123} when $\#(\cup_{k \in [\width]} A_k)>0$, which is the most technical part in this proof. Recall that
$$ L_3 := \frac{1}{2n} \sum_{\xb_i \in \cup_k (A_k \backslash B_k)} (\Nc(\xb_i) - y_i)^2. $$
The goal of this step is showing that the absolute change of $L_3$ is less than it of $L_2$, i.e., $|\Delta L_3| < \Delta L_2$. The idea is based on the sparsity of the data distribution in $\Bc_r (\partial^2 C)$; near the neighborhood of `edge' parts of the polytope $C$. 

Note that for each $k\in [\width]$, obviously we have $(A_k \backslash B_k ) \subset \Bc_t (\partial^2 C_k)$ from the linearity of $\Nc$ (see Figure \ref{fig: MSE loss convergence dynamics}(c) and (d)). It is also worth noting that if $t_k\le \delta$, then $L_3=0$ because there is no $\xb_i$ in $\cup_{k\in[\width]} (A_k\backslash B_k)$, and we have nothing to do. Thus we mostly consider $t_k>\delta$ cases.

Let $\Nc'$ be the network after the one-step update from $\Nc$. The difference of output is $\Delta \Nc(\xb):= \Nc'(\xb) - \Nc(\xb)$. 
Recall that parameters $v_0, s_k, t_k$ follow the updated rule \eqref{eq: dv0} $\sim$ \eqref{eq: dtk} such that network have a constant output on $\partial C_k \cap B_k$ (Figure \ref{fig: MSE loss convergence dynamics}(c) and (d)). This implies that both networks $\Nc$ and $\Nc'$ have fixed outputs for $\partial C_k \cap B_k$, and then the affine space connecting those fixed points also has the fixed output which comes from the piecewise linearity of $\Nc$.

\textbf{STEP 3-1} First, we compute an upper bound of $|\Delta L_3|$. 
Since $\Nc(\xb)$ is piecewise linear, we consider the input space partition in $A_k \backslash B_k$ where $\Nc$ is linear on. Observing the `corner' parts of the polytope $C$ (see Figure \ref{fig: MSE loss convergence dynamics}(c) and (d)), each partition is intersection of some neurons of $\Nc$. Choose one partition $P \subset A_k \backslash B_k$, and let $J_P \subset [\width]$ be the index set of $P$ that $\wb_j$ is activated on $P$ if and only if $j \in J_P$, or namely, $P = \bigcap_{j \in J_P} A_j$. 
Then obviously, the partition $P$ is contained in a ball with radius $\max_{j\in J_P} t_j \le t < R$. On the contrary, any partition $P$ is contained in $t_k$-radius ball from $\partial_k$ for some $k$. Using this, we can disjointly separate the partitions to $\Qc_k$ such that
\begin{enumerate}
    \item $Q_k \subset ( A_k \backslash B_k )$
    \item Every $P \in \cup_{k\in[\width]}( A_k \backslash B_k )$ is exactly contained in one of $Q_k$.
    \item Every $P \in Q_k $ can be bounded by a ball with radius $t_k$.
\end{enumerate}
Note that $\Qc_k$ is a collection of partitions, which can be empty. 
Using this, we decompose $L_3$ by the following way. This is just rearranging the terms in $L_3$.
\begin{align*}
    L_3 &= \frac12 \sum_{\xb_i \in \cup_{k \in [\width]} (A_k \backslash B_k)} (\Nc(\xb_i) - y_i)^2 \\
    &= \frac12 \sum_{k \in [\width]} \sum_{\xb_i \in \Qc_k} (\Nc(\xb_i) - y_i)^2 \\
    &=: \frac12 \sum_{k \in [\width]} L_{{3,k}}. 
\end{align*}

Now, we bound the change of network output $\Delta\Nc(\xb_i)$ for $\xb_i \in P \in \Qc_k$.
\begin{align*}
    |\Delta \Nc (\xb_i) | &= \left| \Nc(\xb_i) - \Nc'(\xb_i) \right| \\
    &\le \left| \sum_{j \in J_P} \Delta\left( -\frac{v_0}{t_j} \right) t_k \right| \\
    &= \sum_{j \in J_P} \eta R \\
    &\le l R \eta.
\end{align*}

Above inequalities come from the fact that, the change of linear value is bounded by product of the change of slope and the maximum diameter of the set.
Finally, for a $k \in [\width]$, we compute an upper bound of the loss variation of $L_{3,k}$.
\begin{align} 
    |\Delta L_{3, k}| 
    &= \left|\frac{1}{2n} \sum_{\xb_i \in  (A_k \backslash B_k)}  (\Nc(\xb_i) + \Delta \Nc(\xb_i)-y_i)^2 - \frac{1}{2n} \sum_{\xb_i \in  (A_k \backslash B_k)}  (\Nc(\xb_i)-y_i)^2\right| \nonumber \\
    &= \left| \frac{1}{n} \sum_{\xb_i \in  (A_k \backslash B_k)} \left(\Nc(\xb_i)-y_i + \frac12 \Delta \Nc(\xb_i)\right) \cdot \Delta \Nc(\xb_i) \right| \nonumber \\
    &\le  \frac{1}{n} \sum_{\xb_i \in (A_k \backslash B_k)} \left( |\Nc(\xb_i)-y_i| \cdot | \Delta \Nc(\xb_i) | + \frac12|\Delta \Nc(\xb_i)|^2 \right) \nonumber \\ 
    &\le  \frac{1}{n} \sum_{\xb_i \in (A_k \backslash B_k)} \left( 1\cdot | \Delta \Nc(\xb_i) | + \frac12|\Delta \Nc(\xb_i)|^2 \right) \nonumber \\
    &\le \frac1n  \#\left( \Bc_{t_k}(\partial^2 C_k) \right) \cdot (\width R\eta + \frac12 \width^2R^2\eta^2) \nonumber \\
    &\le \frac2n \#\Big( \Bc_{t_k}(\partial^2 C_k) \Big) \cdot \width R\eta \nonumber \\
    &\le \frac{2\width R\eta}{n} \cdot \rho \; \#(\Bc_{t_k}(\partial C_k)) . \label{eq: Delta L3}
\end{align}
Note that we used $\eta < \frac{2}{\width R}$ to bound the quadratic term $\eta^2$.

\textbf{STEP 3-2.} Now, we compute a similar bound for $\Delta L_2$. It can be decomposed to the sum on each $B_k$.
\begin{align*}
    L_2 &= \frac{1}{2n} \sum_{k \in [\width]} \sum_{\xb_i \in B_k} (\Nc(\xb_i) - y_i)^2 \\
    &=: \sum_{k \in [\width]} L_{2,k}.
\end{align*} 

We use the fact that each data point $\xb_i$ is far from $\partial C$ at least $\delta$. 
From the definition, we get $\Delta \Nc(\xb_i) > \delta \eta$.
Note that if $t_k < 2\delta$, then $L_{3,k}$ strictly decreases and we have nothing to do. Otherwise, when $t_k > 2\delta$, we have $R>2\delta$ and there is data far from $\delta$ distance from $\partial B_k$. For such data point $\xb_i$, we get
\begin{align*}
    \Nc(\xb_i) - 0 &= v_0 - \frac{v_0}{t_k}(t_k - \delta) \\
    &= v_0 \frac{\delta}{t_k} \\
    &> \frac{v_0\delta}{R}
\end{align*}
and
\begin{align*}
    1 - \Nc(\xb_i) &= 1- \left(v_0- \frac{v_0}{t_k}(t_k - \delta)\right) \\
    &= 1- \frac{v_0}{t_k}\delta \\
    &> 1 - \frac12 v_0 \\
    &> \frac12 v_0 \\
    &> \frac{v_0\delta}{R}.
\end{align*}
Therefore, we have shown that
\begin{align*}
    \min_{ s_k+\delta \le~ \norm{\xb_i} ~\le s_k+t_k-\delta} 
    |\Nc(\xb_i) - y_i| \ge \frac{v_0 \delta}{R}.
\end{align*}
Now we induce the lower bound of $\Delta L_{2,k}$.
\begin{align}
    \Delta L_{2,k}  &= \frac{1}{2n} \sum_{\xb_i \in B_k} \left( (\Nc(\xb_i)+\Delta \Nc(\xb_i) - y_i)^2 - (\Nc(\xb_i) - y_i)^2\right) \nonumber \\
    &= \frac1n \sum_{\xb_i \in B_k} \left( \Nc(\xb_i)-y_i + \frac12 \Delta \Nc(\xb_i) \right) \cdot \Delta \Nc(\xb_i) \nonumber \\
    &= \frac1n \sum_{\xb_i \in B_k} \left( -|(\Nc(\xb_i)-y_i)| \cdot \left|\Delta \Nc(\xb_i)\right| + \frac12 \left|\Delta \Nc(\xb_i)\right|^2 \right) \nonumber \\
    &\le \frac1n \sum_{\xb_i \in B_k} \left( -\min_{ \delta \le~ \norm{\xb_i}-s_k ~\le t_k-\delta} |\Nc(\xb_i) - y_i| \cdot \eta\delta + \frac12 R^2\eta^2 \right)
    \nonumber \\
    &\le -\frac1n \#\left(\Bc_{\frac{t_k}{2}-\delta}(\partial C_k) - \Bc_{t_k}(\partial^2 C_k)\right) \cdot \left(\frac{v_0 \delta}{R} \cdot \delta\eta - \frac12 R^2 \eta^2 \right)  \nonumber \\
    &\le - \frac1n (1-\rho) \#(\Bc_{t_k}(\partial C_k)) \cdot \left(\frac{v_0 \delta^2\eta}{R} - \frac12 R^2\eta^2 \right) \nonumber \\
    &< \frac1n \frac{(1-\rho) v_0 \delta^2 \eta}{2R} \; \#(\Bc_{t_k}(\partial C_k)) . \label{eq: Delta L2}
\end{align} 
Note that Assumption \ref{asmp: dataset} on dataset $\Dc$ is used to induce this inequality. Now, we compare \eqref{eq: Delta L2} and \eqref{eq: Delta L3} with initialization condition \eqref{eq: v0 init MSE} for $v_0$. Then, we finally get
\begin{align*}
    \left| \Delta L_{3,k}\right| 
    &\le \frac{2lR\eta\rho}{n} \; \#(\Bc_{t_k}(\partial C_k)) \\
    &< \frac{(1-\rho) v_0 \delta^2}{2nR}\eta  \cdot \#(\Bc_{t_k}(\partial C_k)) \\
    &< -\Delta L_{2,k}
\end{align*}
for every $k \in [\width]$. By summing up, we conclude $ |\Delta L_3 | < -\Delta L_2$ or,
\begin{align} \label{eq: the third loss}
   \Delta L_2 + \Delta L_3 <0.
\end{align}

\textbf{STEP 4.}
Finally, we combine all results in the previous steps.
When $\#(\cup_{k\in[\width]}A_k) >0$, only $L_2$ and $L_3$ are changed, then one step update gives
\begin{align*}
    L' &= L + \Delta L \\
    &= L_1 + L_2 + L_3 + \Delta L_1 + \Delta L_2 + \Delta L_3 \\
    &< L_1 + L_2 + L_3
\end{align*}
from \eqref{eq: the second loss} and \eqref{eq: the third loss}.
Furthermore, since $s_k<l_k$ increases and $t_k>0$ decreases, the updated parameters satisfy the assumption \eqref{eq: assumption 1} again. 
Using mathematical induction, we can repeat above steps until $\#(\cup_{k\in[\width]}A_k) =0$. After achieving $\#(\cup_{k\in[\width]}A_k) =0$, we get $L_2 = L_3 =0$ from their definition \eqref{eq: L123}. Then, the remained loss $L_1$ exponentially decreases to zero because
\begin{align*}
    L' &= L_1 + \Delta L_1 \\
    & \le \left(1 - \frac12 \delta \eta\right) L_1 \\
    & \le \left(1 - \frac12 \delta \eta\right) L
\end{align*}
from \eqref{eq: the first loss}. 
This 
completes the proof.
\hfill $\square$

\subsubsection{Proof for the BCE loss \texorpdfstring{\eqref{eq: BCE loss}}{function}.}
The proof idea is similar with the previous proof. We use the same definitions for $A_k, B_k, s_k, t_k, l_k$, and other notations. The BCE loss \eqref{eq: BCE loss} is rearranged by

\begin{align}
    L_{BCE} &= -\frac1n \sum_{i=1}^n \left(y_i \log \SIG \circ \Nc(\xb_i) + (1-y_i) \log (1 - \SIG \circ \Nc(\xb_i)) \right)  \nonumber \\
    &= -\frac1n \sum_{\xb_i \in A_0}  \log \SIG \circ \Nc(\xb_i) \nonumber \\
    &  -\frac1n \sum_{k \in [\width]} \sum_{\xb_i \in B_k} \left(y_i \log \SIG \circ \Nc(\xb_i) + (1-y_i) \log (1 - \SIG \circ \Nc(\xb_i)) \right) \label{eq: BCELoss123} \\
    & -\frac1n \sum_{k \in [\width]} \sum_{\xb_i \in (A_k \backslash B_k)} \left(y_i \log \SIG \circ \Nc(\xb_i) + (1-y_i) \log (1 - \SIG \circ \Nc(\xb_i)) \right) \nonumber \\
    &=: L_1 + L_2 + L_3. \nonumber 
\end{align}

Before we start, we compute the derivative and its bound of some functions. For $\zeta \in \Rd$, define
\begin{align*}
    f(\zeta) &:= \log \SIG (\zeta) , \\
    g(\zeta) &:= \log (1 - \SIG (\zeta) ).
\end{align*}
Then their derivatives are given by
\begin{align*}
    \frac{d}{d\zeta} f(\zeta) &:= 1 - \SIG (\zeta), \\
    \frac{d}{d\zeta} g(\zeta) &:= - \SIG (\zeta).
\end{align*}
From the mean value theorem (MVT), we get
\begin{align*}
    f(\zeta+\Delta \zeta) &= f(\zeta) + f'(\zeta) \Delta \zeta + \frac12 f''(\tilde\zeta) (\Delta \zeta)^2 \\
    &\ge f(\zeta) + (1-\SIG(\zeta)) \Delta \zeta - \frac12 (\Delta \zeta)^2
\end{align*}
and
\begin{align*}
    f(\zeta +\Delta \zeta) - f(\zeta) 
    &= (1-\SIG(\tilde\zeta))\Delta \zeta \\
    &\le \Delta \zeta.
\end{align*}
Now we defin the update of parameters. 
For $k\in[l]$, the update of $v_0$ is given by
\begin{align}
        \Delta v_0 &:= 
    \begin{cases}
        \qquad 0 \qquad &\text{if }\quad \#(\cup_{k \in [l]} A_k) > 0, \\
        (1-\SIG(v_0))\eta. \qquad & \text{otherwise}
    \end{cases} \label{eq: dv0 bce}
\end{align}
For $\Delta s_k$ and $\Delta t_k$, we adopt the same update defined in \eqref{eq: dsk} and \eqref{eq: dtk}. 
Namely, the update of parameters preserves the value of $\Nc$ on $l_k$ and the change of slope is set to $-\eta$. We repeat the analogous arguments in the previous proof.

\textbf{STEP 1.} 
Firstly, we consider the first loss term $L_1$ in \eqref{eq: BCELoss123} when $\#(\cup_{k \in [\width]} A_k) = 0$. It is changed by
\begin{align*}
    \Delta L_1 &= L_1' - L_1 \\
    &= - \frac1n \sum_{\xb_i \in A_0} \log \SIG (v_0 + \Delta v_0) 
    +\frac1n \sum_{\xb_i \in A_0} \log \SIG (v_0) \\
    &= -\frac{\#(A_0)}{n} (f(v_0+\Delta v_0) - f(v_0)) \\
    &\le -\frac{\#(A_0)}{n} \left( (1-\SIG (v_0)) \Delta v_0 - \frac12 (\Delta v_0 )^2\right) \\
    &= -\frac{\#(A_0)}{n} \left( (1 - \SIG(v_0))^2\eta - \frac12 (1 - \SIG(v_0))^2\eta^2 \right) \\
    &< -\frac{\#(A_0)}{n} \frac12 (1 - \SIG(v_0))^2\eta. 
\end{align*}
Therefore, $L_1$ strictly decreases. Note that we used $\eta <1$ to bound the $\eta^2$ term. 

\textbf{STEP 2.}
Secondly, we consider when $\#(\cup_{k \in [\width]} A_k) > 0$.
As discussed in the previous subsection, $\Nc(\xb_i)$ strictly increases (or decreases) if and only if $y_i = 1$ (or 0, respectively) because the slope $-\frac{v_0}{t_k}$ changes $-\eta$. This shows that $\Delta L_2 < 0$.

\textbf{STEP 3.}
Thirdly, we observe $\Delta L_2$ and $|\Delta L_3|$ when $\#(\cup_{k \in [\width]} A_k) > 0$.
We compute a bound of $\Delta L_3$ first. For any $k \in [\width]$,
\begin{align*}
    |\Delta L_3| 
    &= \frac1n\bigg| \; \sum_{\xb_i \in (A_k \backslash B_k)} y_i(f(\Nc(\xb_i)+\Delta \Nc(\xb_i)) - f(\Nc(\xb_i))) \\
    &\hspace{3cm} + (1-y_i)(g(\Nc(\xb_i)+\Delta \Nc(\xb_i)) - g(\Nc(\xb_i)))\; \bigg| \\
    &\le \frac1n \sum_{\xb_i \in (A_k \backslash B_k)} 
    \Big|(f(\Nc(\xb_i)+\Delta \Nc(\xb_i)) - f(\Nc(\xb_i)))\Big| 
    +\Big|(g(\Nc(\xb_i)+\Delta \Nc(\xb_i)) - g(\Nc(\xb_i)))\Big| \\
    &< \frac1n \sum_{\xb_i \in (A_k \backslash B_k)}  2|\Delta \Nc(\xb_i)| \\
    &< \frac{2}{n} \; \#(\Bc_{t_k}(\partial^2 C_k)) \cdot \max_{\xb_i \in (A_k \backslash B_k)} |\Delta \Nc(\xb_i)| \\ 
    &< \frac{2R\eta}{n} \; \#(\Bc_{t_k}(\partial^2 C_k)). 
\end{align*}

We obtain a similar bound for $\Delta L_{2,k}$. Let $V_0:=\log\left(\frac{(1-\rho)\delta}{4\rho R} - 1\right)$ be the upper bound of initialization of $v_0$. Note also that $\SIG(V_0) = 1-\frac{4\rho R}{(1-\rho)\delta}$ and $\eta < \frac{1-\SIG(v_0)}{\delta}$. Then,
\begin{align*}
    \Delta L_{2,k}
    &= -\frac1n \sum_{\xb_i \in B_k} \bigg( y_i(f(\Nc(\xb_i)+\Delta \Nc(\xb_i)) - f(\Nc(\xb_i))) \\
    &\hspace{1cm} + (1-y_i)(g(\Nc(\xb_i)+\Delta \Nc(\xb_i)) - g(\Nc(\xb_i)))\bigg) \\
    &= -\frac1n \sum_{\xb_i \in B_k, y_i=1} \bigg( (f(\Nc(\xb_i)+\Delta \Nc(\xb_i)) - f(\Nc(\xb_i))) \\
    &\hspace{1cm} -\frac1n \sum_{\xb_i \in B_k, y_i=0} (g(\Nc(\xb_i)+\Delta \Nc(\xb_i)) - g(\Nc(\xb_i))) \bigg) \\
    &< -\frac1n \sum_{\xb_i \in B_k, y_i=1} \left((1-\SIG \circ \Nc(\xb_i)) \Delta \Nc(\xb_i) - \frac12 (\Delta \Nc(\xb_i))^2\right) \\
    &\hspace{1cm} -\frac1n \sum_{\xb_i \in B_k, y_i=0} \left( -\SIG \circ \Nc(\xb_i) \cdot \Delta \Nc(\xb_i) - \frac12 (\Delta \Nc(\xb_i))^2\right)  \\
    &< -\frac1n \sum_{s_k-l_k+\delta<h_i<-\delta} \left((1-\SIG \circ \Nc(\xb_i)) \Delta \Nc(\xb_i) - \frac12 (\Delta \Nc(\xb_i))^2\right) \\
    &\hspace{1cm} -\frac1n \sum_{\delta<h_i<s_k+t_k-\delta} \left(-\SIG \circ \Nc(\xb_i) \cdot \Delta \Nc(\xb_i) - \frac12 (\Delta \Nc(\xb_i))^2\right) \\
    &< -\frac1n \sum_{s_k-l_k+\delta<h_i<-\delta} \left((1-\SIG(V_0)) \delta \eta - \frac12 \delta^2 \eta^2 \right) \\
    &\hspace{1cm} -\frac1n \sum_{\delta<h_i<s_k+t_k-\delta} \left(\SIG(0) \cdot \delta \eta - \frac12 \delta^2 \eta^2 \right) \\
    &< -\frac1n \; \#\left(\Bc_{\frac{t_k}{2}-\delta}(\partial C_k) - \Bc_{t_k}(\partial^2 C_k)\right) \cdot  \left( (1-\SIG(V_0)) \delta \eta -\frac12 \delta^2 \eta^2 \right) \\
    &< -\frac1n (1-\rho) \#\left( \Bc_{t_k}(\partial C_k)\right) \cdot \frac12 (1-\SIG(V_0)) \delta \eta.
\end{align*}

Therefore, 
\begin{align*}
    |\Delta L_{3,k}|
    &< \frac{2R\eta}{n} \rho\; \#(\Bc_{t_k}(\partial C_k)) \\
    &< \frac1n (1-\rho) \#\left( \Bc_{t_k}(\partial C_k)\right) \cdot \frac12 (1-\SIG(V_0)) \delta \eta \\
    &< -L_{2,k}
\end{align*}
and we get $\Delta L_{2} + \Delta L_3 <0$. 

\textbf{STEP 4.} Finally, we combine results in the previous steps.
When $\#(\cup_{k \in [\width]} A_k) > 0$, $v_0$ is bounded by $V_0$ and we get $\Delta L_2 + \Delta L_3 <0$ from \textbf{STEP 3}. After update, since $s_k<l_k$ increases and $t_k>0$ decreases, the updated parameters satisfy Assumption \ref{asmp: dataset} again. It is repeated with strictly decreasing loss until reaching $\#(\cup_{k \in [\width]} A_k) = 0$.
After that, $v_0$ begins to strictly increase, which strictly decreases all $L_1$, $L_2$, and $L_3$.
Further, the update equation \eqref{eq: dv0 bce} provides $v_0$ goes to infinity. Therefore, $\Nc(\xb_i) \rightarrow \infty$ if and only if it label $y_i=1$, concludes $L_{BCE}$ converges to zero. 

This completes the whole proof of Theorem \ref{thm: convergence}. 
\hfill $\square$
\section{Additional Propositions and Lemmas} \label{app: rebuttal}

\begin{lemma} \label{lem: simplex}
    Let $0\le m \le d$ be integers, and $\Delta^m$ be an $m$-simplex in $\Rd^d$. For a given $\varepsilon>0$, there exists a two-layer ReLU network $\Tc: \Rd^d \rightarrow \Rd$ with the architecture $\ReLUtwo{d}{(d+1)}{1}$ such that
    \begin{align*}
        \Tc(\xb) &= 1       \qquad \text{if } \xb \in \Delta^m, \\
        \Tc(\xb) &\le 1     \qquad \text{if } \xb \in B_{\varepsilon}(\Delta^m), \\
        \Tc(\xb) &< 0       \qquad \text{if } \xb \not\in B_\varepsilon(\Delta^m).
    \end{align*}
    Furthermore, the minimal width of such two-layer ReLU networks with the architecture $\ReLUtwo{d}{d_1}{1}$ is exactly $d_1=d+1$.
\end{lemma}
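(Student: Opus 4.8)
The statement has two halves: (i) an explicit two-layer network with $d+1$ hidden neurons realizing the three listed inequalities exists, and (ii) no two-layer ReLU network with $d$ or fewer hidden neurons can do this. For (i) the plan is to reduce to Proposition~\ref{prop: convex polytope} by \emph{thickening} $\Delta^m$ into a full-dimensional $d$-simplex. Concretely, let $\qb_0,\dots,\qb_m$ be the vertices of $\Delta^m$, let $A$ be its $m$-dimensional affine hull, pick $\pb$ in the relative interior of $\Delta^m$, and choose an orthonormal family $\nb_1,\dots,\nb_{d-m}$ spanning a complement of the direction space of $A$; set $\qb_{m+j}:=\pb+s_j\nb_j$ with $0<s_j<\varepsilon/2$. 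The points $\qb_0,\dots,\qb_d$ are affinely independent, so $\Delta^d:=\mathrm{conv}(\qb_0,\dots,\qb_d)$ is a $d$-simplex, i.e.\ a convex polytope with exactly $d+1$ faces, and $\Delta^m\subseteq\Delta^d$ by construction. A short estimate (in any convex combination of the $\qb_i$, replace each added vertex $\qb_{m+j}$ by $\pb\in\Delta^m$) shows $\Delta^d\subseteq\Bc_{\varepsilon/2}(\Delta^m)$; when $m=d$ one simply takes $\Delta^d=\Delta^m$.

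I would then apply the constructive proof of Proposition~\ref{prop: convex polytope} to the polytope $C=\Delta^d$ with tolerance $\varepsilon/2$, obtaining a network $\Tc$ with architecture $\ReLUtwo{d}{(d+1)}{1}$ such that $\Tc\equiv 1$ on $\Delta^d$, $\Tc<1$ off $\Delta^d$, and $\Tc<0$ off $\Bc_{\varepsilon/2}(\Delta^d)$. Since $\Delta^m\subseteq\Delta^d$, this gives $\Tc\equiv 1$ on $\Delta^m$ and $\Tc\le 1$ everywhere, in particular on $\Bc_\varepsilon(\Delta^m)$. Finally, if $\xb\notin\Bc_\varepsilon(\Delta^m)$ then $\mathrm{dist}(\xb,\Delta^m)\ge\varepsilon$, and because every point of $\Delta^d$ lies within $\varepsilon/2$ of $\Delta^m$ we get $\mathrm{dist}(\xb,\Delta^d)>\varepsilon/2$, hence $\xb\notin\Bc_{\varepsilon/2}(\Delta^d)$ and so $\Tc(\xb)<0$. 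This settles (i) and also identifies the roles of the coordinate change and the tolerance $\varepsilon/2$.

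For (ii), observe that any network meeting the three conditions has $\{\Tc>0\}$ nonempty and open (it contains $\Delta^m$ and $\Tc$ is continuous) and contained in the bounded set $\overline{\Bc_\varepsilon(\Delta^m)}$. Write $\Tc(\xb)=v_0+\sum_{k=1}^{d_1}v_k\sigma(\wb_k^\top\xb+b_k)$. Step one: if $\wb_1,\dots,\wb_{d_1}$ do not span $\Rd^d$, there is a unit vector $\ub$ with $\wb_k^\top\ub=0$ for all $k$, so $\Tc$ is constant along $\ub$; then $\{\Tc>0\}$ contains a whole line through any of its points, contradicting boundedness. Hence $d_1\ge d$. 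Step two: suppose $d_1=d$; then the $\wb_k$ form a basis, so the invertible affine change of coordinates $y_k=\wb_k^\top\xb+b_k$ turns $\Tc$ into the separable function $v_0+\sum_{k=1}^d v_k\sigma(y_k)$, whose positive set is still bounded and nonempty. If some $v_j>0$, sending $y_j\to+\infty$ with the other coordinates fixed and negative drives $\Tc\to+\infty$ along an unbounded ray — contradiction; so every $v_k\le 0$, whence $\Tc\le v_0$ everywhere, forcing $v_0>0$ for $\{\Tc>0\}$ to be nonempty; but then $\Tc\equiv v_0>0$ on the unbounded orthant $\{y_k<0\ \forall k\}$ — contradiction again. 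Therefore $d_1\ge d+1$, and with (i) the minimal width is exactly $d+1$.

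The main obstacle will be the lower bound, precisely the exclusion of $d_1=d$: the elementary spanning argument only yields $d_1\ge d$, and the extra neuron has to be squeezed out by a finer analysis. The key simplification is the observation that once the $d$ weight directions are made orthonormal the network is \emph{separable}, reducing the question to a one-dimensional remark per coordinate; everything else (the thickening construction, the distance bookkeeping between $\Delta^m$, $\Delta^d$, and their neighborhoods, and matching the strict inequality $\Tc<0$ exactly outside $\Bc_\varepsilon(\Delta^m)$) is routine once the $\varepsilon/2$ bookkeeping is fixed.
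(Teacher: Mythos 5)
Your proposal is correct and follows essentially the same route as the paper's proof: for existence, thicken $\Delta^m$ to a $d$-simplex inside a neighborhood of $\Delta^m$ and invoke Proposition~\ref{prop: convex polytope}; for minimality, use the spanning argument to force $d_1\ge d$ and then rule out $d_1=d$ by sign casework on $v_0$ and the $v_k$, exhibiting an unbounded region where $\Tc\ge 0$. Your separable-coordinates phrasing of the $d_1=d$ case is just a repackaging of the paper's explicit unbounded sets $A$ and $B$ (intersections of half-spaces in the original coordinates), and your $\varepsilon/2$ bookkeeping is a slightly more careful version of the paper's construction.
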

\begin{proof}
    We prove the existence part first.
    For the given $m$-simplex $\Delta^m$, pick $(d-m)$ distinct points in $\Bc_\varepsilon(\Delta^m)$. By connecting all these points with the points of $\Delta^m$, we obtain a $d$-simplex contained in $\Bc_\varepsilon(\Delta^m)$, which is a convex polytope.
    By Proposition \ref{prop: convex polytope}, there exists a neural network $\Tc:\Rd^d \rightarrow \Rd$ with the architecture $\ReLUtwo{d}{d_1}{1}$ that satisfies the desired properties.

    Now, we prove the minimality part. For every $\varepsilon>0$, suppose there exists a two-layer ReLU network $\Tc(\xb) := \sum_{i=1}^{d_1} v_i \sigma( \wb_i^\top \xb+b_i) + v_0$ with $d_1 \le d$ such that $\Tc(\xb)=1$ for $\xb\in\Delta^m$ and  $\Tc(\xb)<0$ for $\xb \not\in \Bc_\varepsilon(\Delta^m)$.
    First, we claim that the set of weight vectors $\{\wb_1, \cdots, \wb_{d_1}\}$ spans $\Rd^d$. If the set cannot span $\Rd^d$, then there exists a nonzero vector $\ub \in \Rd^d - \text{span}<\wb_1, \cdots, \wb_{d_1}>$. Then, from $\Tc(\xb)=1$ for $\xb\in \Delta^m$, we get
    \begin{align*}
        \Tc(\xb+t\ub) &= \sum_{i=1}^{d_1} v_i \sigma(\wb_i^\top(\xb+t\ub)+b_i) + v_0 \\
        &= \sum_{i=1}^{d_1} v_i \sigma(\wb_i^\top\xb+b_i) + v_0 \\
        &= \Tc(\xb) \\
        &=1
    \end{align*}
    for any $t\in\Rd$. This contradicts to the condition $\Tc(\xb)<0$ for $\xb \not \in \Bc_\varepsilon(\Delta^m)$. Therefore, the set of weight vectors must span $\Rd^d$.

    From the above claim, we further deduce that $d_1 \ge d$. Since we start with the assumption $d_1 \le d$, thus $d_1=d$. Then, we conclude that the set of weight vectors $\{\wb_1, \cdots, \wb_{d_1}\}$ is a basis of $\Rd^d$.
    Now, we focus on the sign of $v_0$. Suppose $v_0 \ge 0$. Define 
    \begin{align*}
        A:= \bigcap_{i=1}^{d_1} \{\xb ~|~ \wb_i^\top\xb+b_i <0 \},
    \end{align*}
    which is an unbounded set since the set $\{\wb_i\}$ is linearly independent. Then for $\xb\in A$, we get $\Tc(\xb)=v_0 \ge 0$. This contradicts to the assumption $\Tc(\xb)<0$ for all $\xb \not \in \Bc_\varepsilon(\Delta^m)$. 
    Therefore, $v_0<0$. 
    
    Lastly, we consider the sign of $v_i$. Since $\Tc(\xb)=1>0$ for $\xb\in \Delta^m$ and $v_0<0$, there exists some positive $v_i>0$, say, $v_1>0$.
    Similar to the above argument, we define
    \begin{align*}
        B:= \left\{\xb ~|~ v_1\wb_1^\top\xb+b_1 +v_0 >0 \right\}\; \bigcap_{i=2}^{d_1} \left\{\xb ~|~ \wb_i^\top\xb+b_i <0 \right\},
    \end{align*}
    which is also nonempty and unbounded. Then, for $\xb\in B$, we have 
    \begin{align*} 
        \Tc(\xb) &= \sum_{i=1}^{d_1} v_i \sigma(\wb_i^\top\xb+b_i)+v_0 \\
        &= v_1 \wb_1^\top\xb+b_1 + v_0 \\
        &>0.
    \end{align*}
    Since $B$ is unbounded, this implies that $\Tc(\xb)>0$ over the unbounded subset in $\Rd^d$, which contradicts to the condition $\Tc(\xb)<0$ for all $\xb\not\in \Bc_\varepsilon(\Delta^m)$. This completes the whole proof, which shows that the minimum width of two-layer ReLU network is exactly $d+1$.
\end{proof}

\begin{proposition} \label{prop: no topology} 
    Let $\Xc\subset \Rd^d$ be a topological space and $\Ac$ be a neural network architecture that is a \exact on $\Xc$. Then, there exists a topological space $\Xc'$ which is homeomorphic to $\Xc$, but $\Ac$ is not a \exact on ${\Xc'}$.
\end{proposition}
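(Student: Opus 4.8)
The plan is to exploit a mismatch between two quantities: a ReLU network of a \emph{fixed} architecture can only produce a bounded amount of piecewise-linear complexity along any line, whereas a homeomorphic copy of $\Xc$ can be ``folded'' so that its two classes interleave along a line as many times as we wish.

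First I would record the complexity bound. Write $\Ac$ as $d \stackrel{\sigma}{\rightarrow} d_1 \stackrel{\sigma}{\rightarrow} \cdots \stackrel{\sigma}{\rightarrow} d_k \rightarrow 1$ and set $N := 1 + \sum_{i=1}^{k}\prod_{j=i}^{k} d_j$. The counting carried out in the lower-bound part of the proof of Theorem~\ref{thm: betti numbers} shows that for every ReLU network $\Nc$ with architecture $\Ac$ and every affine line $\ell\subset\Rd^d$, the restriction $\Nc|_\ell$ is continuous piecewise linear with at most $N$ affine pieces, and hence changes sign at most $N$ times. Therefore, if $q_1,\dots,q_{N+2}$ are points lying on $\ell$ in this order with $\Nc(q_i)>0$ for odd $i$ and $\Nc(q_i)<0$ for even $i$, we obtain a contradiction, since this sign pattern forces at least $N+1$ sign changes.

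Next I would reduce the proposition to producing such a configuration inside a homeomorphic copy. Since $\Ac$ is a \exact on $\Xc=\Xc_+\cup\Xc_-$, both $\Xc_+$ and $\Xc_-$ are nonempty; I will use (as holds for every space treated in this paper: manifolds, simplicial complexes, $\varepsilon$-neighborhoods, differences of prismatic polytopes) that each of them contains a nondegenerate arc, say $\gamma_+\subset\Xc_+$ and $\gamma_-\subset\Xc_-$. The aim is to build a homeomorphism onto its image $\phi:\Xc\to\Rd^d$ together with an affine line $\ell$ that meets $\phi(\gamma_+)$ and $\phi(\gamma_-)$ alternately at $N+2$ points occurring in this order along $\ell$ --- that is, two disjoint interlocking ``comb'' arcs whose teeth cross $\ell$ in the pattern $+,-,+,-,\dots$. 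Such a pair of disjoint arcs clearly exists inside a $2$-plane of $\Rd^d$; it then remains to extend $\phi$ over all of $\Xc$ using a closed collar of $\gamma_\pm$ inside $\Xc_\pm$, sending $\Xc\setminus(\gamma_+\cup\gamma_-)$ into a region of $\Rd^d$ disjoint from $\ell$ so that $\phi(\Xc_+)$ and $\phi(\Xc_-)$ remain disjoint. Putting $\Xc':=\phi(\Xc)$ and $\Xc'_\pm:=\phi(\Xc_\pm)$ yields a space homeomorphic to $\Xc$ on which, by the previous paragraph, no network of architecture $\Ac$ can be positive on $\Xc'_+$ and negative on $\Xc'_-$; hence $\Ac$ is not a \exact on $\Xc'$.

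The main obstacle is turning the informal ``folding'' into an honest label-preserving homeomorphism $\phi:\Xc\to\Rd^d$, which requires some care when $\Xc$ is unbounded or not arcwise connected. The cleanest route I foresee: first re-embed $\Xc$ by a homeomorphism of $\Rd^d$ into a bounded region; choose $\gamma_\pm$ together with small closed collar neighborhoods inside $\Xc_\pm$; define $\phi$ on those collars as the interlocking combs near $\ell$ and on the complementary closed pieces by any embedding into a far-away region agreeing with $\phi$ on the collar boundaries; the resulting map is a continuous bijection with continuous inverse, hence the desired homeomorphism. In all the concrete settings appearing elsewhere in the paper this construction is immediate, which is exactly what is needed to certify that topological type alone cannot determine a \exact.
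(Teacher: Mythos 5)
Your proposal is correct and rests on the same two pillars as the paper's proof: (i) a network with a fixed architecture $\Ac$ has a bounded number of linear pieces — the paper bounds the number of linear regions in $\Rd^d$ by a constant $M$, you bound the pieces along a line by $N=1+\sum_i\prod_{j\ge i}d_j$ exactly as in the lower-bound argument of Theorem~\ref{thm: betti numbers} — and (ii) a homeomorphic re-embedding can force a line to cross the two classes more times than that bound allows. Where you differ is in how the homeomorphic copy is built: the paper takes a connected sum $\Xc'=\Xc\#\Yc$ with a contractible zig-zag band $\Yc$ and counts intersections of a line with $\Xc'$, whereas you fold an arc of $\Xc_+$ and an arc of $\Xc_-$ into two interlocking combs so that the line meets the two classes \emph{alternately}. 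Your variant is actually the tighter argument for the statement as written: under the \exact definition the network is only constrained to be positive on $\Xc'_+$ and negative on $\Xc'_-$, so what forces many linear pieces is alternation of the two labels along $\ell$, not merely many intersections with $\Xc'$; the paper's zig-zag gives the latter and leaves the former implicit. Both proofs share the same unavoidable (and unstated in the proposition) nondegeneracy hypothesis — that each class contains a nondegenerate arc (or that a band can be glued on without changing the homeomorphism type) — which you at least flag explicitly; for, say, $\Xc_\pm$ each a single point the proposition is false, so this caveat is not removable. Your remaining work (extending the comb embedding of the collars to a label-preserving homeomorphism on all of $\Xc$) is routine and no less rigorous than the paper's informal connected-sum construction.
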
 
\begin{proof}
We use the similar technique introduced in \cite{telgarsky2015representation}.
Before we start, recall that a network $\Nc$ with the architecture $\Ac$ is a piecewise linear function on $\Rd^d$. Thus $\Rd^d$ can be partitioned into finitely many regions, where $\Nc$ is linear on each region. Let $M$ be the maximum number of such regions, that networks with the architecture $\Ac$ can partition. I.e., any network with the architecture $\Ac$ has linear regions at most $M$ partitions in $\Rd^d$. 

Now, we consider a contractible topological space $\Yc$ which has zig-zag shape as described in Figure \ref{fig: no topology}(b), where the number of sawtooths is greater than $M + 2$. We define another topological space $\Xc':=\Xc\#\Yc$, where $\#$ denotes the connected sum. Note that we can glue $\Yc$ to $\Xc$ preserving the number of sawtooths in $\Yc$, because $\Xc$ is bounded. Then $\Xc'$ is homeomorphic to $\Xc$ since $\Yc$ is contractible.

Finally, we prove the proposition using contradiction. Suppose there exists a deep ReLU network $\Nc'$ with the same architecture $\Ac$, which can approximate $\indicator{\Xc'}$ under the given error bound $\varepsilon>0$. 
Then, by the $\Yc$ part in $\Xc'$, there exists a straight line $\ell$ that intersects $\Xc'$ more than $M+3$ times. Therefore, to approximate $\indicator{\Xc'}$ sufficiently close, $\Nc'$ must have at least $M+1$ linear regions on $\ell$. However, $\Nc'$ can have at most $M$ linear regions in $\Rd^d$ from the definition of $M$. This contradiction completes the proof.
\end{proof}

\begin{theorem} \label{thm: regression}
    Let $d_x,d_y \in \Nd$ and $p\ge1$.
    Then, the set of three-layer ReLU networks is dense in $L^p(\Rd^{d_x}, [0,1]^{d_y})$.
    Furthermore, let $f : \Rd^{d_x} \rightarrow [0,1]^{d_y}$ be a compactly supported function whose Lipschitz constant is $L$.
    Then, for any $\varepsilon>0$, there exists a three-layer ReLU network $\Nc$ with the architecture 
    \begin{align*}
        d_x \;\stackrel{\sigma}{\rightarrow}\;\ReLUtwo{(2n d_x d_y)\;}{\;(n d_y)\;}{d_y}
    \end{align*}
    such that $\norm{\Nc - f}_{L^p(\Rd^{d_x})}<\varepsilon$. 
    Here, $n=\varepsilon^{-d_x} \left( 1+ (\sqrt{d_x}L)^p \right)^{{d_x}/{p}} = O(\varepsilon^{-d_x})$.
\end{theorem}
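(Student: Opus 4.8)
The plan is to derive both assertions from Proposition~\ref{prop: convex polytope}: in its constructive (upper-bound) form it produces, for any bounded convex polytope $C$ with $\width$ faces and any tolerance $\varepsilon'>0$, a two-layer ReLU network $\Tc$ with architecture $\ReLUtwo{d}{\width}{1}$ satisfying $\Tc\equiv 1$ on $C$, $\Tc\le 1$ everywhere, and $\Tc<0$ outside $\Bc_{\varepsilon'}(C)$. Consequently $\sigma(\Tc)$ takes values in $[0,1]$, equals $1$ on $C$, equals $0$ outside $\Bc_{\varepsilon'}(C)$, and hence $|\sigma(\Tc)-\indicator{\xb\in C}|\le\indicator{\xb\in\Bc_{\varepsilon'}(C)\setminus C}$; since $\mathrm{Vol}\big(\Bc_{\varepsilon'}(C)\setminus C\big)\to 0$ as $\varepsilon'\to 0$, such $\sigma(\Tc)$ approximates $\indicator{\xb\in C}$ in $L^p$ to any precision \emph{without changing the width}.

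First I would prove the quantitative statement. Treating each of the $d_y$ output coordinates separately and stacking the resulting subnetworks in parallel, it suffices to handle $d_y=1$; this parallel stacking is exactly what multiplies the two hidden widths by $d_y$. After an affine normalization (absorbed into the constants) we may assume $\mathrm{supp}(f)\subseteq[0,1]^{d_x}$. Partition $[0,1]^{d_x}$ into $n$ congruent axis-aligned subcubes $C_1,\dots,C_n$ of side $n^{-1/d_x}$, choose $\xb_i\in C_i$, and let $g:=\sum_{i=1}^{n}f(\xb_i)\,\indicator{\xb\in C_i}$ be the piecewise-constant approximation. Each $C_i$ is a convex polytope with exactly $2d_x$ faces, so Proposition~\ref{prop: convex polytope} furnishes $\Tc_i$ of architecture $\ReLUtwo{d_x}{2d_x}{1}$ as above. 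Define $\Nc(\xb):=\sum_{i=1}^{n}f(\xb_i)\,\sigma\!\big(\Tc_i(\xb)\big)$: concatenating the $2d_x$ first-layer neurons of each $\Tc_i$ yields a first hidden layer of width $2nd_x$; the $n$ neurons $\sigma(\Tc_i(\xb))$ form the second hidden layer; and the output is the affine readout $\sum_i f(\xb_i)(\cdot)$ with no activation. Thus $\Nc$ has architecture $\ReLUthree{d_x}{2nd_x}{n}{1}$ (and is compactly supported, hence lies in $L^p$), and stacking $d_y$ such blocks gives the stated architecture.

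For the error, $\|\Nc-f\|_{L^p}\le\|\Nc-g\|_{L^p}+\|g-f\|_{L^p}$. The first term is at most $\sum_{i=1}^{n}|f(\xb_i)|\,\mathrm{Vol}\big(\Bc_{\varepsilon_i}(C_i)\setminus C_i\big)^{1/p}$, which can be forced below $\varepsilon/2$ \emph{after} $n$ is fixed by shrinking each per-cube tolerance $\varepsilon_i$ — this does not touch the widths. The second term is governed by Lipschitzness: on $C_i$ one has $|f-g|\le L\,\mathrm{diam}(C_i)=L\sqrt{d_x}\,n^{-1/d_x}$, so $\|g-f\|_{L^p}^{p}\le (L\sqrt{d_x})^{p}\,n^{-p/d_x}$, which is below $(\varepsilon/2)^p$ as soon as $n\gtrsim \varepsilon^{-d_x}(\sqrt{d_x}L)^{d_x}$; the explicit choice $n=\big\lceil\varepsilon^{-d_x}\big(1+(\sqrt{d_x}L)^{p}\big)^{d_x/p}\big\rceil$ is admissible and is $O(\varepsilon^{-d_x})$. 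The density statement then follows by precomposition with the standard fact that compactly supported Lipschitz functions valued in $[0,1]^{d_y}$ are dense in $L^p(\Rd^{d_x},[0,1]^{d_y})$ — truncate the support to a large ball and mollify, both operations preserving the coordinatewise range $[0,1]^{d_y}$ — and then applying the quantitative part to the approximating Lipschitz map.

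The substantive work is done by Proposition~\ref{prop: convex polytope}; what remains is essentially bookkeeping. The only care points are (i) verifying the exact architecture, i.e.\ that the two $\sigma$'s land on the first two layers and the output layer is purely affine, and the bias/constant terms of the $\Tc_i$ are carried correctly; and (ii) pinning down the constant in $n$, that is, keeping the normalization of $\mathrm{supp}(f)$ and the choice of the tolerances $\varepsilon_i$ mutually consistent so that the final width count depends on $f$ only through $L$. I do not expect a genuine obstacle beyond these.
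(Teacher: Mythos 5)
Your proposal is correct and follows essentially the same route as the paper: reduce to $d_y=1$, partition $[0,1]^{d_x}$ into $n$ cubes, realize each cube's (approximate) indicator with the $\ReLUtwo{d_x}{2d_x}{1}$ network from Proposition~\ref{prop: convex polytope}, form the Riemann-sum readout $\sum_i f(\xb_i)\sigma(\Tc_i)$, and balance the Lipschitz discretization error against the annulus volumes. Your explicit use of $\sigma$ on the second hidden layer and of an intermediate piecewise-constant function $g$ are only cosmetic refinements of the paper's direct per-cube integral bound.
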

\begin{proof}
    Fist we recall a result in real analysis: the set of compactly supported continuous functions is dense in $L^p(\Rd^{d_x})$ for $p \ge 1$ \citep[Theorem 3.14]{rudin1976principles}. 
    Therefore, it is enough to prove the second statement; which claims that any compactly supported Lipschitz function can be universally approximated by three-layer ReLU networks.
    
    We consider $d_y=1$ case first. 
    Let $f\in \Rd^{d_x} \rightarrow [0,1]$ be Lipschitz, and let $L$ be its Lipschitz constant. Without loss of generality, suppose the support of $f$ is contained in $[0,1]^{d_x}$.
    Let $\delta>0$ be the small number which will be determined. Now we partition $[0,1]^{d_x}$ by regular $d_x$-dimensional cubes with length $\delta$. Now, consider estimating the definite integral uses a Riemann sum over cubes.
    The total number of cubes are $n:=(\frac1\delta)^{d_x}$, and we number these cubes by $C_1, C_2, \cdots, C_n$. For each cube $C_i$, by Proposition \ref{prop: convex polytope}, we can define a two-layer ReLU network $\Tc_i$ with the architecture $\ReLUtwo{d_x}{2d_x}{1}$ such that $\Tc_i(\xb)=1$ in $C_i$ and $\Tc_i(\xb)=0$ for $\xb \not\in B_{r}(C_i)$ with $r:=\frac{1}{2d_x} \frac{\delta^{p+1}}{1+\delta^p}$.
    Then for any $\xb_i \in C_i$, we get
    \begin{align*}
        \int_{B_{r}(C_i)} |f-f(\xb_i)\Tc_i|^p \; d\mu
        &= \int_{C_i} |f-f(\xb_i)\Tc_i|^p \; d\mu + \int_{B_{r}(C_i) \backslash C_i} |f-f(\xb_i)\Tc_i|^p \; d\mu \\
        &\le \int_{C_i} (\sqrt{d_x}L\delta)^p \; d\mu  + \int_{B_{r}(C_i) \backslash C_i} 1^p \; d\mu \\
        &\le (\sqrt{d_x}L\delta)^p \cdot \delta^{d_x} + \left[ (\delta+2r)^{d_x} - \delta^{d_x} \right] \\
        &=  (\sqrt{d_x}L)^p \cdot \delta^{d_x+p} + \left[\left(1+\frac{2r}{\delta}\right)^{d_x} -1\right] \delta^{d_x}  \\
        &< \left[ (\sqrt{d_x}L)^p + 1 \right] \delta^{d_x+p}. 
    \end{align*}
    Note that we use two inequalities, $|f(\xb) - f(\xb_i)| \le L\sqrt{d_x}\delta$ for $\xb\in C_i$ and $(1+a)^k < \frac{1}{1-ak}$ for $0<a<\frac{1}{k}$. 
    Then, the above equation implies the $L^p$ distance between $f$ and $f(\xb_i)\Tc_i$ in $C_i$ is bounded by the above value. 
    Now we define a three-layer neural network $\Nc$ by
    \begin{align*}
        \Nc(\xb) := \sum_{i=1}^n f(\xb_i) \Tc_i(\xb),
    \end{align*} 
    which is a Riemann sum over the $n$ cubes partitions.
    Then $\Nc$ has the architecture ${d_x}\stackrel{\sigma}{\rightarrow}\ReLUtwo{(2n d_x)}{n}{1}$ and satisfies
    \begin{align*}
        \int_{\Rd^{d_x}} |f-\Nc|^p d\mu 
        &= \int_{B_r([0,1]^{d_x})} |f-\Nc|^p \; d\mu \\
        &< \sum_{i=1}^n \int_{B_r(C_i)} |f-f(\xb_i)\Tc_i|^p \; d\mu \\
        &\le \left[ (\sqrt{d_x}L)^p + 1 \right] n \delta^{d_x+p}. \\
        &= \left[ (\sqrt{d_x}L)^p + 1\right] \delta^p.
    \end{align*}
    Therefore, take $\delta <\varepsilon (1+(\sqrt{d_x}L)^p)^{-\frac1p}$ for given $\varepsilon$, we conclude that $\norm{f-\Nc}_{L^p([0,1]^{d_x})} < \varepsilon$. 
    From this choice of $\delta$, we get
    \begin{align*}
        n = \delta^{-d_x} > \varepsilon^{-d_x} \left( 1+ (\sqrt{d_x}L)^p \right)^{{d_x}/{p}} 
        = O(\varepsilon^{-d_x}).
    \end{align*}
    If $d_y>1$, we can obtain the desired network by concatenating $d_y$ networks, thus the architecture is 
    \begin{align*}
       {d_x}\stackrel{\sigma}{\rightarrow}\ReLUtwo{(2n d_x d_y)}{(n d_y)}{d_y}.
    \end{align*}
\end{proof}

\begin{figure*}[t]
    \centering
    \subfigure[]{\includegraphics[width=0.2\textwidth]{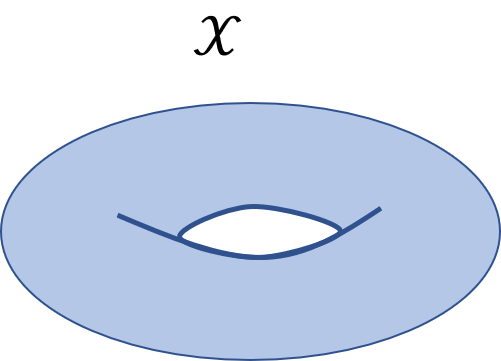}}
    \hfill
    \subfigure[]{\includegraphics[width=0.35\textwidth]{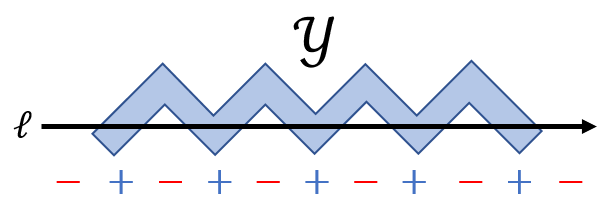}}
    \hfill
    \subfigure[]{\includegraphics[width=0.35\textwidth]{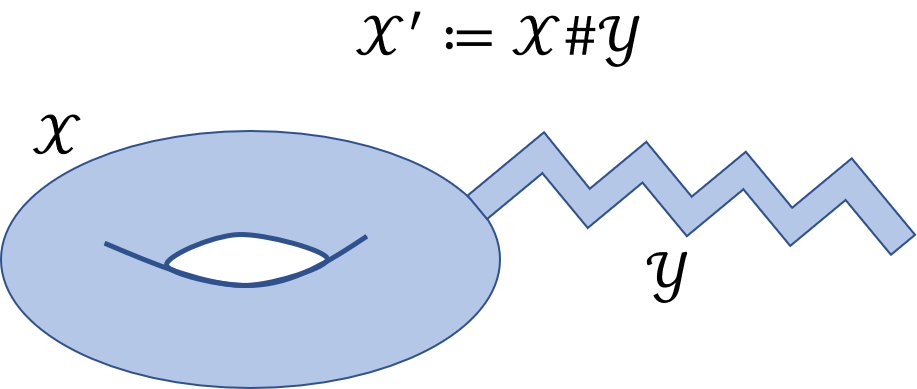}}
    \caption{Proof of Proposition \ref{prop: no topology}. 
    (a) $\Xc$ is a given topological space, and $\Ac$ is a \exact on $\Xc$. 
    (b) $\Yc$ is a zig-zag shaped long band, which is a contractible space. There exists a straight line $\ell$ such that $\Yc$ and $\ell$ has sufficiently many intersection points, so that $\Ac$ cannot approximate $\Yc$.
    (c) $\Xc'$ is the connected sum of $\Xc$ and $\Yc$, which is homeomorphic with $\Xc$. However, $\Ac$ is not a \exact on $\Xc'$.
    }
    \label{fig: no topology}
\end{figure*}

\begin{lemma} \label{lem: two-layer convex}
    Let $\Tc$ be a two-layer ReLU network defined in \eqref{eq: two-layer constant bias}. Then, the classification region $R := \{\xb\in\Rd^d ~|~ \Tc(\xb) > 0\}$ is a convex polytope. Specifically, if the subset $S:=\{\xb\in\Rd^d ~|~ \Tc(\xb)=\lambdabias\}$ is nonempty, then it is a convex polytope with $\width$ faces. 
\end{lemma}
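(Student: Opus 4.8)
The plan is to exploit the sign constraint $v_k<0$ to rewrite every ReLU term as a minimum of two affine functions; this makes $\Tc$ a concave, piecewise-affine function, after which both claims reduce to elementary polyhedral bookkeeping. First I would record the pointwise identity
\[
v_k\,\sigma(\wb_k^\top\xb+b_k)=\min\{0,\;v_k(\wb_k^\top\xb+b_k)\}\qquad(v_k<0),
\]
so that $\Tc(\xb)=\lambdabias+\sum_{k=1}^{\width}\min\{0,\,v_k(\wb_k^\top\xb+b_k)\}$ is a constant plus a sum of concave affine-minima, hence concave and piecewise affine on $\Rd^d$.

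For the classification region $R=\{\xb:\Tc(\xb)>0\}$, I would push the sum of minima through the distributive law $\sum_k\min\{0,a_k\}=\min_{A\subseteq[\width]}\sum_{k\in A}a_k$ (with the empty sum equal to $0$), obtaining the explicit representation $\Tc=\min_{A\subseteq[\width]}\ell_A$, where $\ell_A(\xb):=\lambdabias+\sum_{k\in A}v_k(\wb_k^\top\xb+b_k)$ is affine. Then $R=\bigcap_{A\subseteq[\width]}\{\xb:\ell_A(\xb)>0\}$ is an intersection of at most $2^{\width}$ half-spaces, i.e. a convex polyhedral region, and $\{\xb:\Tc(\xb)\ge0\}=\bigcap_{A}\{\xb:\ell_A(\xb)\ge0\}$ is its closed polytope counterpart. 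I would flag here that ``polytope'' is meant in the polyhedral sense, since $R$ is open and possibly unbounded, and one may instead state the claim for $\overline{R}$.

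For the level set $S=\{\xb:\Tc(\xb)=\lambdabias\}$, the key observation is that each summand $\min\{0,v_k(\wb_k^\top\xb+b_k)\}\le 0$, so $\Tc(\xb)\le\lambdabias$ for all $\xb$ (thus $S$ is precisely the set of maximizers of $\Tc$), with equality exactly when every summand vanishes; and since $v_k<0$ the $k$-th summand vanishes iff $\wb_k^\top\xb+b_k\le 0$. Hence
\[
S=\bigcap_{k=1}^{\width}\{\xb\in\Rd^d:\wb_k^\top\xb+b_k\le 0\},
\]
which, whenever it is nonempty, is by Definition~\ref{def: convex polytope} exactly a convex polytope with $\width$ faces.

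I do not anticipate a genuine obstacle; the argument is essentially mechanical. The only points requiring a little care are (i) checking the $\min$/$\sum$ distributive identity and that it really exhibits $\Tc$ as a finite minimum of affine maps (so the resulting description of $R$ is a bona fide finite intersection of half-spaces), and (ii) the mild abuse in calling the open set $R$ a ``convex polytope'' — one should either pass to $\overline{R}$ or read the statement as ``convex polyhedral region.'' One may additionally remark that each of the $\width$ hyperplanes $\{\wb_k^\top\xb+b_k=0\}$ supports a face of $S$, although in degenerate weight configurations some of them need not be facets.
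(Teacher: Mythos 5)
Your proof is correct and follows essentially the same route as the paper's: concavity of $\Tc$ from $v_k<0$, and the characterization of $S$ as the set where every ReLU summand vanishes, i.e.\ $\bigcap_k\{\wb_k^\top\xb+b_k\le 0\}$. Your explicit min-over-subsets representation of $\Tc$ merely makes rigorous the paper's briefer appeal to ``piecewise linearity implies $R$ is a convex polytope,'' and your caveat about $R$ being open/unbounded is a fair reading of the same statement.
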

\begin{proof}
    First, we prove that $\Tc$ is a concave function. Note that $\sigma$ is convex thus $v_k\sigma(\wb_k^\top\xb+b_k)$ is a concave function with respect to input $\xb$, and the sum of concave functions is again concave (we use all $v_k<0$ here).
    Therefore, $\Tc$ is a concave function, and the region $R:=\{\xb~|~\Tc(\xb)>0\}$ is convex. The piecewise linearity of $\Tc$ implies that $R$ forms a convex polytope.
    
    Now we consider the subset $S:= \{\xb~|~\Tc(\xb)=\lambdabias\}$. Since all $v_k<0$, $\xb \in S$ if and only if $\wb_k^\top\xb+b_k \le 0$ for all $k\in[\width]$. Then, $S$ is a convex polytope with $\width$ faces by Definition \ref{def: convex polytope}.
\end{proof}

\begin{proposition}[Theorem 2.1 in \cite{du2018algorithmic}, two-layer version]
\label{prop: balanced property}
    Let $\Nc(\xb) := v_0 + \sum_{k=1}^l v_k \sigma(\wb_k^\top\xb+b_k)$ be a two-layer ReLU network, and $L=\frac1n \sum_{i=1}^n \ell(\Nc(\xb_i), y_i)$ be the loss function.
    Then, on the gradient flow, for all $k\in[l]$, the quantity
    \begin{align} \label{eq: balanced}
        v_k^2 - \norm{\wb_k}^2 -b_k^2
    \end{align}
    is invariant.
\end{proposition}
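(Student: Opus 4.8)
The plan is to exploit the fact that the contribution $v_k\,\sigma(\wb_k^\top\xb+b_k)$ of neuron $k$ is linear in $v_k$ and positively $1$-homogeneous in the incoming parameters $(\wb_k,b_k)$, which forces the gradient-flow derivatives of $v_k^2$ and of $\norm{\wb_k}^2+b_k^2$ to coincide. Concretely, I would first record the gradient flow equations $\dot v_k=-\partial L/\partial v_k$, $\dot\wb_k=-\partial L/\partial\wb_k$, $\dot b_k=-\partial L/\partial b_k$, and then differentiate $L=\frac1n\sum_{i=1}^n\ell(\Nc(\xb_i),y_i)$ through the chain rule. Writing $g_i:=\partial_1\ell(\Nc(\xb_i),y_i)$ and $z_{ik}:=\wb_k^\top\xb_i+b_k$, and using $\partial\Nc(\xb_i)/\partial v_k=\sigma(z_{ik})$, $\partial\Nc(\xb_i)/\partial\wb_k=v_k\sigma'(z_{ik})\xb_i$, $\partial\Nc(\xb_i)/\partial b_k=v_k\sigma'(z_{ik})$, this gives
\begin{align*}
\frac{\partial L}{\partial v_k}&=\frac1n\sum_{i=1}^n g_i\,\sigma(z_{ik}), &
\frac{\partial L}{\partial\wb_k}&=\frac1n\sum_{i=1}^n g_i\,v_k\,\sigma'(z_{ik})\,\xb_i, &
\frac{\partial L}{\partial b_k}&=\frac1n\sum_{i=1}^n g_i\,v_k\,\sigma'(z_{ik}).
\end{align*}

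Next I would form the key combination
\[
\wb_k^\top\frac{\partial L}{\partial\wb_k}+b_k\,\frac{\partial L}{\partial b_k}
=\frac1n\sum_{i=1}^n g_i\,v_k\,\sigma'(z_{ik})\,(\wb_k^\top\xb_i+b_k)
=\frac1n\sum_{i=1}^n g_i\,v_k\,\sigma'(z_{ik})\,z_{ik},
\]
and then invoke the ReLU identity $z\,\sigma'(z)=\sigma(z)$, which holds for every $z$ once one fixes the convention $\sigma'(0)=0$ (for $z\ne0$ it is immediate since $\sigma$ is linear on each side of the origin). This collapses the right-hand side to $v_k\cdot\frac1n\sum_i g_i\,\sigma(z_{ik})=v_k\,\partial L/\partial v_k$. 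Consequently, along the gradient flow,
\begin{align*}
\frac{d}{dt}\big(v_k^2-\norm{\wb_k}^2-b_k^2\big)
&=2v_k\dot v_k-2\wb_k^\top\dot{\wb}_k-2b_k\dot b_k \\
&=-2\Big(v_k\,\frac{\partial L}{\partial v_k}-\wb_k^\top\frac{\partial L}{\partial\wb_k}-b_k\,\frac{\partial L}{\partial b_k}\Big) \\
&=0,
\end{align*}
so the quantity in \eqref{eq: balanced} is constant for each $k\in[l]$.

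The main obstacle is the non-differentiability of ReLU at the origin: $\sigma$ is not classically differentiable at $z=0$, so both the chain-rule computation above and the identity $z\,\sigma'(z)=\sigma(z)$ need a careful justification. I would handle this exactly as \citet{du2018algorithmic}: work with the convention $\sigma'(0)=0$ so that $z\,\sigma'(z)=\sigma(z)$ holds identically, observe that the relation $v_k\,\partial L/\partial v_k=\wb_k^\top\partial L/\partial\wb_k+b_k\,\partial L/\partial b_k$ is precisely Euler's homogeneity relation for the map $(\wb_k,b_k)\mapsto\sigma(\wb_k^\top\xb_i+b_k)$ (positively homogeneous of degree one) together with linearity in $v_k$, and note that this relation survives passing to Clarke subgradients on the measure-zero set where some $z_{ik}=0$. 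The remaining ingredients — existence of the gradient flow and absolute continuity of $t\mapsto v_k^2-\norm{\wb_k}^2-b_k^2$ along it — are standard for homogeneous networks and may be invoked rather than reproved.
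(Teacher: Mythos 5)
Your proposal is correct and follows essentially the same route as the paper: compute the three partial gradients, verify the Euler-type identity $v_k\,\partial L/\partial v_k=\wb_k^\top\partial L/\partial\wb_k+b_k\,\partial L/\partial b_k$ via the $1$-homogeneity of ReLU (the indicator $\indicator{z>0}$ in the paper playing the role of your $\sigma'(z)$ with $\sigma'(0)=0$), and differentiate the balanced quantity along the flow. Your extra remarks on the non-differentiability at the origin only make the argument more careful than the paper's version.
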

\begin{proof}
    The proof is written in \cite{du2018algorithmic}, and we provide here for completeness. 
    The gradient of each component is computed by
    \begin{align*}
        \frac{\partial L}{\partial v_k} &= \frac1n \sum_{i=1}^n \frac{\partial \ell}{\partial \Nc(\xb_i)} \cdot \sigma({\wb_k^\top\xb_i+b_k}), \\
        \frac{\partial L}{\partial \wb_k} &= \frac1n \sum_{i=1}^n \frac{\partial \ell}{\partial \Nc(\xb_i)} \cdot v_k \indicator{\wb_k^\top\xb_i+b_k>0} \xb_i, \\
        \frac{\partial L}{\partial b_k} &= \frac1n \sum_{i=1}^n \frac{\partial \ell}{\partial \Nc(\xb_i)} \cdot v_k \indicator{\wb_k^\top\xb_i+b_k>0}.
    \end{align*}
    Then, it is easy to check that 
    $$ v_k \frac{\partial L}{\partial v_k} = \wb_k^\top (\frac{\partial L}{\partial \wb_k}) + b_k \cdot \frac{\partial L}{\partial v_k}. $$
    Now, we differentiate \eqref{eq: balanced}. It gives
    \begin{align*}
        \frac{d}{dt} (v_k^2 - \norm{\wb_k}^2 -b_k^2)
        &= 2 v_k \frac{dv_k}{dt} - 2 \wb_k^\top (\frac{d\wb_k}{dt}) - 2 b_k \frac{db_k}{dt} \\
        &= 2\left( -v_k \frac{\partial L}{\partial v_k} + \wb_k^\top (\frac{\partial L}{\partial \wb_k}) + b_k\frac{\partial L}{\partial b_k} \right) \\
        &=0
    \end{align*}
    for all $t$. Therefore, \eqref{eq: balanced} is constant.
\end{proof}

\begin{proposition} \label{prop: faces}
    Consider the neural network architecture $\ReLUThree{d}{d_1}{d_2}\ReLUtwo{\cdots}{d_D}{1}$ and a convex polytope $C$ with $m$ faces. Then,
    \begin{enumerate}
        \item if $d_1 \ge m$ and $d_2 \ge 1$, then it is a \exact on $C$.
        \item if $\max_j d_j \le m-1$, then it may not be a \exact on some polytope $C$.
        \item if $d_1 \le m-2$, then it may not be a \exact on some polytope $C$.
    \end{enumerate}
\end{proposition}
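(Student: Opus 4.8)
For part~1 I would give an explicit construction. By Proposition~\ref{prop: convex polytope} there is a two-layer network $\Tc$ with architecture $\ReLUtwo{d}{m}{1}$ satisfying $\Tc(\xb)=1$ on $C$, $\Tc(\xb)<1$ off $C$, and $\Tc(\xb)<0$ off $\Bc_\varepsilon(C)$. Realize the hidden layer of $\Tc$ using $m$ of the $d_1\ge m$ first-layer neurons and give the remaining $d_1-m$ neurons zero outgoing weights. Let one neuron of the second layer (of width $d_2\ge 1$) compute $h(\xb):=\sigma(\Tc(\xb))$; since $\Tc\le 1$ everywhere, $h=1$ on $C$, $h=0$ off $\Bc_\varepsilon(C)$, and $h\in[0,1]$ in between, while all other second-layer neurons carry zero outgoing weight. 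Because $h\ge 0$, each subsequent hidden layer copies $h$ forward through a single neuron ($\sigma$ acts as the identity on a nonnegative input) and the output layer returns $h(\xb)-\tfrac12$, which is $\tfrac12>0$ on $C$ and $-\tfrac12<0$ off $\Bc_\varepsilon(C)$. Hence the architecture is a \exact on $C$. (For $D=1$ this is Proposition~\ref{prop: convex polytope} itself.)

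For part~3 I would take the ambient dimension to be $d=m-1$ and $C=\Delta^{m-1}$, an $(m-1)$-simplex in $\Rd^{m-1}$, which has exactly $m$ facets (this is the largest dimension in which an $m$-facet bounded polytope exists, since bounding a region in $\Rd^d$ requires at least $d+1$ half-spaces). Suppose some $\Nc$ with the stated architecture and $d_1\le m-2=d-1$ were a \exact on $C$ with $\Xc_-=\Bc_\varepsilon(C)^c$. Since $d_1<d$, the first-layer weight vectors span a proper subspace; choosing $\0\ne\ub$ orthogonal to all of them makes every first-layer pre-activation, hence $\Nc$, invariant under $\xb\mapsto\xb+t\ub$. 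Then $\{\Nc>0\}$ is $\ub$-invariant and contains the bounded set $C$, so it is unbounded and meets the complement of $\Bc_\varepsilon(C)$, contradicting $\Nc<0$ there. Thus $d_1\ge m-1$, so no architecture with $d_1\le m-2$ is feasible on this $C$. (This reuses the spanning step from the proof of Lemma~\ref{lem: simplex}.)

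For part~2 I would again use $d=m-1$ and $C=\Delta^{m-1}\subset\Rd^{m-1}$, and prove by induction on the depth $D$ that no ReLU network all of whose hidden widths are $\le m-1=d$ is a \exact on a $d$-simplex. The base case $D=1$ is the minimality assertion of Lemma~\ref{lem: simplex}: a feasible two-layer network needs width $d+1>d$. For the inductive step, the spanning argument of part~3 forces $d_1=d$ with $\wb_1,\dots,\wb_d$ a basis, so on the ``all-active'' cone $A_{[d]}=\{\xb:\wb_k^\top\xb+b_k>0\ \forall k\}$ the first layer restricts to an affine bijection onto the positive orthant. Using the unboundedness of every region in which some first-layer neuron is inactive together with boundedness of $\{\Nc>0\}$, one shows $\{\Nc>0\}\subseteq A_{[d]}$; transporting through the first layer, the remaining sub-network (the last $D-1$ hidden layers plus the output, a ReLU network of the same input dimension $d$ with hidden widths $\le d$) is positive exactly on the image of $C$ --- again a $d$-simplex, $d$ of whose facets lie on coordinate hyperplanes --- and nonpositive on the rest of the orthant, to which the inductive hypothesis applies.

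The inductive step of part~2 is the main obstacle. The difficulty is that the sub-network is constrained only on the positive orthant (positive exactly on the image simplex, nonpositive on the orthant boundary) and not on all of $\Rd^d$, so the induction must be phrased in this orthant-restricted form and one must verify it genuinely propagates --- in particular, ruling out that an intermediate layer of width $d$ whose weights fail to span could nonetheless manufacture the ``internal'' facet of the image simplex, which would force the positive region to be locally invariant along a direction crossing that facet while staying inside the orthant, contradicting that the positive region equals the simplex. Everything else (parts~1 and~3, and the base case of part~2) is routine given Proposition~\ref{prop: convex polytope} and Lemma~\ref{lem: simplex}.
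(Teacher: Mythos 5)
Your parts 1 and 3 are correct and essentially coincide with the paper's arguments: part 1 is the construction from Proposition \ref{prop: convex polytope} padded with zero weights (your handling of the deeper ReLU layers via $\sigma(h)=h$ for $h\ge 0$ is in fact more careful than the paper's terse ``take the identity for all other layers''), and part 3 is exactly the paper's orthogonal-direction invariance argument on a $d$-simplex with $d=m-1$.

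The genuine gap is part 2, and you have correctly diagnosed where it is: your induction on depth is not carried out, and the orthant-restricted inductive hypothesis you sketch is exactly the kind of statement that is delicate to propagate (an intermediate width-$d$ layer can still bend hyperplanes, so controlling the positive region only on the image of the all-active cone does not obviously close the induction). The paper avoids this entirely by invoking a known structural fact about narrow networks: if every hidden width is at most the input dimension $d$, then every connected component of every decision region of the network is unbounded (Lemma \ref{lem: decreasing widths}, which is \citet[Theorem 2]{beise2021decision}). Taking $C$ to be a $d$-simplex in $\Rd^d$, so $m=d+1$ and $\max_j d_j\le m-1=d$, the set $\{\Nc>0\}$ would have to contain the bounded simplex and avoid $\Bc_\varepsilon(C)^c$, hence have a bounded component of positive measure --- contradiction. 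Replacing your induction with this citation (or with a proof of that unboundedness lemma, which is itself a nontrivial topological fact about ReLU networks of non-increasing width) is what is needed to complete part 2; as written, your argument for part 2 is a plan rather than a proof.
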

\begin{proof}
    The proof is accomplished by two strategies: for a feasible architecture, we explicitly construct such neural networks. For the negative statements, we prove them by providing some counterexamples. 
    \begin{enumerate}
        \item 
        Proposition \ref{prop: convex polytope} shows that $\ReLUtwo{d}{m}{1}$ is a \exact. Therefore, for $d_1 \ge m$, then taking the identity for all other layers, it becomes a \exact.

        \item 
        When $d_1 \le \width-1$, there is a $\width$-faces convex polytope $C$ that cannot be approximated by the given network architecture. The simplest example is a half-space ($\width=1$). 
        
        Below, we provide another non-trivial example: 
        Let $C$ be a $d$-simplex in $\Rd^d$, thus $\width=d+1$.
        Suppose $\max_j d_j \le \width-1 = d$. Then, by Lemma \ref{lem: decreasing widths}, we conclude that the classified regions are always unbounded. Therefore, it cannot approximate a bounded polytope $C$.

        \item
        From the above proof, recall the $d$-simplex $C$ (thus $m=d+1$). 
        If $d_1 \le m-2 = d-1$, we provide a counterexample proving that it cannot be approximated by a ReLU network with the architecture $\ReLUfour{d}{(m-2)}{\cdots}{d_D}{1}$. 
        Let $\wb_1, \cdots, \wb_{\width-2}$ be the weight vectors of the first layer. Since the dimension of $span<\wb_k>$ is equal or less than $\width-2 = d-1$, there exists a nonzero vector $\hat\wb \in span<\{\wb_k\}_{k\in[m]}> ^ \perp$. In other words, $\hat \wb ^\top \wb_k = 0$ for all $k\in[m-2]$. Then, it implies $\Nc(\xb + t\hat\wb) = \Nc(\xb)$ for all $t\in\Rd$. Therefore, $\Nc$ cannot approximate the bounded polytope $C$. 
    \end{enumerate}
\end{proof}
\begin{lemma} \label{lem: decreasing widths}
    Let $\Nc$ be a deep ReLU network where all hidden dimension is equal or smaller than the input dimension $d$. Suppose $\mu(\{\Nc(\xb)>0 \})>0$. Then, $\mu(\{\Nc(\xb)>0 \})$ is either $0$ or $\infty$. In other words, the classification region is either measure-zero or unbounded.
\end{lemma}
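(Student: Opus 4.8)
The plan is to argue by induction on the depth $D$ of $\Nc$. Write $\Nc=\phi_D\circ\cdots\circ\phi_1$ with $\phi_i(\zb)=\sigma(W_i\zb+\bb_i)$ for $i<D$ and $\phi_D(\zb)=\wb^\top\zb+b$, where $W_1\in\Rd^{d_1\times d}$ and every hidden width satisfies $d_i\le d$. Since $\Nc$ is continuous, $U:=\{\xb\in\Rd^d:\Nc(\xb)>0\}$ is open, so $\mu(U)=0$ iff $U=\emptyset$ and $\mu(U)>0$ iff $U\neq\emptyset$; thus the claim is equivalent to ``$U\neq\emptyset\Rightarrow\mu(U)=\infty$'', and I would establish the stronger statement that \emph{every connected component of $U$ is unbounded} (infinitude of the measure then drops out, since in each case below the unboundedness is witnessed by a set of positive $d$-dimensional measure). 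For $D=1$ this is immediate: $\Nc$ is affine and $U$ is an open half-space (or $\emptyset$ or $\Rd^d$), connected and — when nonempty — unbounded of infinite measure.

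For the inductive step I would split on whether $W_1$ is injective. If $W_1$ is not injective, choose $0\neq\ub\in\ker W_1$; then $W_1\xb+\bb_1$, and hence $\Nc(\xb)$, is invariant under $\xb\mapsto\xb+t\ub$, so $U$ is a union of full lines in direction $\ub$: every component is unbounded, and, writing $U$ as a cylinder over an open subset of $\ub^\perp$, $\mu(U)\in\{0,\infty\}$. If $W_1$ \emph{is} injective, then $d\le d_1\le d$ forces $d_1=d$ and $W_1$ invertible; the affine bijection $\xb\mapsto W_1\xb+\bb_1$ preserves boundedness, measure, and the component structure, so it suffices to treat $\tilde\Nc(\zb):=\Phi(\sigma(\zb))$, where $\Phi:=\phi_D\circ\cdots\circ\phi_2:\Rd^d\to\Rd$ is a ReLU network of depth $D-1$ whose hidden widths $d_2,\dots,d_{D-1}$ are all $\le d$, so that the inductive hypothesis applies to it. The crucial structural facts are: (i) $\sigma$ is the identity on the closed orthant $\Rd^d_{\ge0}$, so $\tilde U:=\{\tilde\Nc>0\}$ agrees with $\{\Phi>0\}$ there; and (ii) $\sigma$ is \emph{not} proper — for $\yb$ in the relative interior $F_S^\circ$ of a proper face $F_S=\{\zb\ge0:z_i=0\iff i\notin S\}$, $S\subsetneq[d]$, the map $\sigma$ is constant equal to $\yb$ on the whole unbounded fiber $\sigma^{-1}(\yb)=\{\yb|_S\}\times\{z_i\le0:i\notin S\}$, so this fiber lies entirely inside $\tilde U$ whenever $\Phi(\yb)>0$.

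To close the induction, suppose $\tilde U$ had a bounded component $K$. By (ii), $K$ avoids all the unbounded fibers, so $K\subseteq\{\Phi>0\}\cap\mathrm{int}\,\Rd^d_{\ge0}$; let $C^\ast$ be the component of $\{\Phi>0\}$ containing $K$, which is unbounded by the inductive hypothesis. If $C^\ast\subseteq\mathrm{int}\,\Rd^d_{\ge0}$ then $C^\ast$ is a connected subset of $\tilde U$ through $K$, forcing $C^\ast=K$ — impossible. Otherwise, running a path in $C^\ast$ from a point of $K$ to a point outside the open orthant, let $\yb^\ast$ be the first point where the path meets $\partial\Rd^d_{\ge0}$; the path up to $\yb^\ast$ stays in $\{\Phi>0\}\cap\Rd^d_{\ge0}\subseteq\tilde U$ by (i), so $\yb^\ast$ lies in $K$, while $\yb^\ast\in F_S^\circ$ for a proper $S$ with $\Phi(\yb^\ast)>0$, so by (ii) $K$ contains the unbounded fiber $\sigma^{-1}(\yb^\ast)$ — again impossible. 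Hence $\tilde U$, and therefore $U$, has no bounded component. The same case analysis upgrades ``unbounded'' to ``$\mu=\infty$'': in the $\ker W_1$ case $U$ is a positive-measure cylinder; in the fiber case one uses a small ball $B\subseteq\{\Phi>0\}$ around $\yb^\ast$, whose preimage $\sigma^{-1}(B\cap F_S^\circ)\subseteq\tilde U$ is a genuinely $d$-dimensional unbounded set; and in the leftover case, where $\{\Phi>0\}$ meets no proper face, $\{\Phi>0\}\cap\mathrm{int}\,\Rd^d_{\ge0}$ is clopen in $\{\Phi>0\}$, hence a union of components to which the inductive hypothesis applies.

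The main obstacle is precisely this reduction step: once the invertible first layer is peeled off, one must understand $\sigma^{-1}$ of the positive region of the shallower network $\Phi$ and rule out a bounded ``blob'' of $\tilde U$ hiding strictly inside the open orthant. Naively one would like to induct on the restrictions of $\Phi$ to coordinate faces, but such a restriction is no longer a narrow network (its input dimension drops while its hidden widths do not), so it need not have unbounded positive region; the argument must instead go through the connectedness of the components of $\{\Phi>0\}$ together with fact (i), as above. The secondary technical point is the measure bookkeeping in the ``no proper face'' branch, i.e. making the clopen/component argument precise so that it yields $\mu(U)=\infty$ and not merely unboundedness.
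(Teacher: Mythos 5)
Your proposal proves the lemma from scratch, whereas the paper does not: the paper's entire proof is a citation of \citet[Theorem 2]{beise2021decision} for the fact that narrow ReLU networks have only unbounded decision-region components, followed by the (open-set) observation that $\mu(U)>0$ iff $U\neq\emptyset$. Your layer-peeling induction — dichotomizing on whether $W_1$ is injective, reducing the invertible case by an affine change of variables to $\Phi\circ\sigma$, and then exploiting that $\sigma$ is the identity on the closed orthant while having unbounded fibers over every proper face — is essentially a self-contained reconstruction of the cited result, and it is correct as far as the topological statement goes: the contradiction argument (a bounded component $K$ must sit in the open orthant, its ambient component $C^\ast$ of $\{\Phi>0\}$ is unbounded by induction, and either $C^\ast=K$ or a first-exit point $\yb^\ast\in K$ drags in an unbounded fiber) is sound. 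What your approach buys is independence from the external reference and an explicit mechanism (the fibers $\sigma^{-1}(\yb)$) that also yields the measure statement; what the citation buys the paper is brevity.

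One point you should make explicit, since it is the only place the argument could silently break: ``every component is unbounded'' does \emph{not} by itself imply infinite measure (an unbounded open set can have finite Lebesgue measure), and in your ``no proper face'' branch you conclude $\mu(\tilde U)=\infty$ by identifying $\tilde U$ with a \emph{union of some components} of $\{\Phi>0\}$. For that step the inductive hypothesis must be the per-component statement ``every connected component of $\{\Phi>0\}$ has infinite Lebesgue measure,'' not merely $\mu(\{\Phi>0\})\in\{0,\infty\}$; otherwise the infinite mass of $\{\Phi>0\}$ could live in components outside the orthant. Your three cases do deliver the per-component version (cylinders via Fubini in the kernel case; the set $\sigma^{-1}(B\cap F_S^\circ)$ is convex, contains the witnessing point, and has infinite $d$-measure in the fiber case; equality $K=C^\ast$ in the interior case), so the fix is pure bookkeeping — state the induction hypothesis as the per-component measure claim and check it in each branch — but as written the final paragraph proves only the total-measure upgrade while the induction quietly consumes the stronger one. (The paper's one-line deduction from Beise et al.\ has the same gap, so you are in good company.)
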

\begin{proof}
    \citet[Theorem 2]{beise2021decision} showed that if all hidden layers have width equal or smaller than the input dimension, then the connected components of every decision region are unbounded. Therefore, $\mu(\{\Nc(\xb)>0 \})$ is either $0$ or $\infty$, depends on whether $\{\Nc(\xb)>0 \}$ is empty or not.
\end{proof}


\begin{thebibliography}{65}
	\providecommand{\natexlab}[1]{#1}
	\providecommand{\url}[1]{\texttt{#1}}
	\expandafter\ifx\csname urlstyle\endcsname\relax
	\providecommand{\doi}[1]{doi: #1}\else
	\providecommand{\doi}{doi: \begingroup \urlstyle{rm}\Url}\fi
	
	\bibitem[Alfarra et~al.(2022)Alfarra, Bibi, Hammoud, Gaafar, and
	Ghanem]{alfarra2022decision}
	Alfarra, M., Bibi, A., Hammoud, H., Gaafar, M., and Ghanem, B.
	\newblock On the decision boundaries of neural networks: A tropical geometry
	perspective.
	\newblock \emph{IEEE Transactions on Pattern Analysis and Machine
		Intelligence}, 45\penalty0 (4):\penalty0 5027--5037, 2022.
	
	\bibitem[Amos et~al.(2017)Amos, Xu, and Kolter]{amos2017input}
	Amos, B., Xu, L., and Kolter, J.~Z.
	\newblock Input convex neural networks.
	\newblock In \emph{International Conference on Machine Learning}, pp.\
	146--155. PMLR, 2017.
	
	\bibitem[Astorino \& Gaudioso(2002)Astorino and
	Gaudioso]{astorino2002polyhedral}
	Astorino, A. and Gaudioso, M.
	\newblock Polyhedral separability through successive lp.
	\newblock \emph{Journal of Optimization theory and applications}, 112\penalty0
	(2):\penalty0 265--293, 2002.
	
	\bibitem[Balestriero et~al.(2022)Balestriero, Wang, and
	Baraniuk]{balestriero2022deephull}
	Balestriero, R., Wang, Z., and Baraniuk, R.~G.
	\newblock Deephull: Fast convex hull approximation in high dimensions.
	\newblock In \emph{ICASSP 2022-2022 IEEE International Conference on Acoustics,
		Speech and Signal Processing (ICASSP)}, pp.\  3888--3892. IEEE, 2022.
	
	\bibitem[Barannikov et~al.(2021)Barannikov, Trofimov, Sotnikov, Trimbach,
	Korotin, Filippov, and Burnaev]{barannikov2021manifold}
	Barannikov, S., Trofimov, I., Sotnikov, G., Trimbach, E., Korotin, A.,
	Filippov, A., and Burnaev, E.
	\newblock Manifold topology divergence: a framework for comparing data
	manifolds.
	\newblock \emph{Advances in Neural Information Processing Systems},
	34:\penalty0 7294--7305, 2021.
	
	\bibitem[Barannikov et~al.(2022)Barannikov, Trofimov, Balabin, and
	Burnaev]{barannikov2022representation}
	Barannikov, S., Trofimov, I., Balabin, N., and Burnaev, E.
	\newblock Representation topology divergence: A method for comparing neural
	network representations.
	\newblock \emph{Proceedings of Machine Learning Research}, 162:\penalty0
	1607--1626, 2022.
	
	\bibitem[Beise et~al.(2021)Beise, Da~Cruz, and Schr{\"o}der]{beise2021decision}
	Beise, H.-P., Da~Cruz, S.~D., and Schr{\"o}der, U.
	\newblock On decision regions of narrow deep neural networks.
	\newblock \emph{Neural Networks}, 140:\penalty0 121--129, 2021.
	
	\bibitem[Beltran-Royo et~al.(2024)Beltran-Royo, Llopis-Ibor, Pantrigo, and
	Ram{\'\i}rez]{beltran2024dc}
	Beltran-Royo, C., Llopis-Ibor, L., Pantrigo, J.~J., and Ram{\'\i}rez, I.
	\newblock Dc neural networks avoid overfitting in one-dimensional nonlinear
	regression.
	\newblock \emph{Knowledge-Based Systems}, 283:\penalty0 111154, 2024.
	
	\bibitem[Berzins(2023)]{berzins2023polyhedral}
	Berzins, A.
	\newblock Polyhedral complex extraction from relu networks using edge
	subdivision.
	\newblock In \emph{International Conference on Machine Learning}, pp.\
	2234--2244. PMLR, 2023.
	
	\bibitem[Bianchini \& Scarselli(2014)Bianchini and
	Scarselli]{bianchini2014complexity}
	Bianchini, M. and Scarselli, F.
	\newblock On the complexity of neural network classifiers: A comparison between
	shallow and deep architectures.
	\newblock \emph{IEEE transactions on neural networks and learning systems},
	25\penalty0 (8):\penalty0 1553--1565, 2014.
	
	\bibitem[Birdal et~al.(2021)Birdal, Lou, Guibas, and
	Simsekli]{birdal2021intrinsic}
	Birdal, T., Lou, A., Guibas, L.~J., and Simsekli, U.
	\newblock Intrinsic dimension, persistent homology and generalization in neural
	networks.
	\newblock \emph{Advances in Neural Information Processing Systems},
	34:\penalty0 6776--6789, 2021.
	
	\bibitem[Black et~al.(2022)Black, Sharkey, Grinsztajn, Winsor, Braun, Merizian,
	Parker, Guevara, Millidge, Alfour, et~al.]{black2022interpreting}
	Black, S., Sharkey, L., Grinsztajn, L., Winsor, E., Braun, D., Merizian, J.,
	Parker, K., Guevara, C.~R., Millidge, B., Alfour, G., et~al.
	\newblock Interpreting neural networks through the polytope lens.
	\newblock \emph{arXiv preprint arXiv:2211.12312}, 2022.
	
	\bibitem[Buchanan et~al.(2020)Buchanan, Gilboa, and Wright]{buchanan2020deep}
	Buchanan, S., Gilboa, D., and Wright, J.
	\newblock Deep networks and the multiple manifold problem.
	\newblock In \emph{International Conference on Learning Representations}, 2020.
	
	\bibitem[B{\"u}nning et~al.(2021)B{\"u}nning, Schalbetter, Aboudonia, de~Badyn,
	Heer, and Lygeros]{bunning2021input}
	B{\"u}nning, F., Schalbetter, A., Aboudonia, A., de~Badyn, M.~H., Heer, P., and
	Lygeros, J.
	\newblock Input convex neural networks for building mpc.
	\newblock In \emph{Learning for Dynamics and Control}, pp.\  251--262. PMLR,
	2021.
	
	\bibitem[Carlini et~al.(2019)Carlini, Liu, Erlingsson, Kos, and
	Song]{carlini2019secret}
	Carlini, N., Liu, C., Erlingsson, {\'U}., Kos, J., and Song, D.
	\newblock The secret sharer: Evaluating and testing unintended memorization in
	neural networks.
	\newblock In \emph{28th USENIX Security Symposium (USENIX Security 19)}, pp.\
	267--284, 2019.
	
	\bibitem[Carlini et~al.(2021)Carlini, Tramer, Wallace, Jagielski, Herbert-Voss,
	Lee, Roberts, Brown, Song, Erlingsson, et~al.]{carlini2021extracting}
	Carlini, N., Tramer, F., Wallace, E., Jagielski, M., Herbert-Voss, A., Lee, K.,
	Roberts, A., Brown, T., Song, D., Erlingsson, U., et~al.
	\newblock Extracting training data from large language models.
	\newblock In \emph{30th USENIX Security Symposium (USENIX Security 21)}, pp.\
	2633--2650, 2021.
	
	\bibitem[Carlsson(2019)]{carlsson2019geometry}
	Carlsson, S.
	\newblock Geometry of deep convolutional networks.
	\newblock \emph{arXiv preprint arXiv:1905.08922}, 2019.
	
	\bibitem[Chen et~al.(2022)Chen, Jiang, Liao, and Zhao]{chen2022nonparametric}
	Chen, M., Jiang, H., Liao, W., and Zhao, T.
	\newblock Nonparametric regression on low-dimensional manifolds using deep relu
	networks: Function approximation and statistical recovery.
	\newblock \emph{Information and Inference: A Journal of the IMA}, 11\penalty0
	(4):\penalty0 1203--1253, 2022.
	
	\bibitem[Chen et~al.(2020)Chen, Shi, and Zhang]{chen2020input}
	Chen, Y., Shi, Y., and Zhang, B.
	\newblock Input convex neural networks for optimal voltage regulation.
	\newblock \emph{arXiv preprint arXiv:2002.08684}, 2020.
	
	\bibitem[Cohen et~al.(2020)Cohen, Chung, Lee, and
	Sompolinsky]{cohen2020separability}
	Cohen, U., Chung, S., Lee, D.~D., and Sompolinsky, H.
	\newblock Separability and geometry of object manifolds in deep neural
	networks.
	\newblock \emph{Nature communications}, 11\penalty0 (1):\penalty0 746, 2020.
	
	\bibitem[Cybenko(1989)]{cybenko1989approximation}
	Cybenko, G.
	\newblock Approximation by superpositions of a sigmoidal function.
	\newblock \emph{Mathematics of control, signals and systems}, 2\penalty0
	(4):\penalty0 303--314, 1989.
	
	\bibitem[Dirksen et~al.(2022)Dirksen, Genzel, Jacques, and
	Stollenwerk]{dirksen2022separation}
	Dirksen, S., Genzel, M., Jacques, L., and Stollenwerk, A.
	\newblock The separation capacity of random neural networks.
	\newblock \emph{The Journal of Machine Learning Research}, 23\penalty0
	(1):\penalty0 13924--13970, 2022.
	
	\bibitem[Du et~al.(2018)Du, Hu, and Lee]{du2018algorithmic}
	Du, S.~S., Hu, W., and Lee, J.~D.
	\newblock Algorithmic regularization in learning deep homogeneous models:
	Layers are automatically balanced.
	\newblock \emph{Advances in neural information processing systems}, 31, 2018.
	
	\bibitem[Fan et~al.(2023)Fan, Huang, Zhong, Ruan, Zeng, Xiong, and
	Wang]{fan2023deep}
	Fan, F.-L., Huang, W., Zhong, X., Ruan, L., Zeng, T., Xiong, H., and Wang, F.
	\newblock Deep relu networks have surprisingly simple polytopes.
	\newblock \emph{arXiv preprint arXiv:2305.09145}, 2023.
	
	\bibitem[Fan et~al.(2021)Fan, Taghvaei, and Chen]{fan2021scalable}
	Fan, J., Taghvaei, A., and Chen, Y.
	\newblock Scalable computations of wasserstein barycenter via input convex
	neural networks.
	\newblock In \emph{International Conference on Machine Learning}, pp.\
	1571--1581. PMLR, 2021.
	
	\bibitem[Fawzi et~al.(2018)Fawzi, Moosavi-Dezfooli, Frossard, and
	Soatto]{fawzi2018empirical}
	Fawzi, A., Moosavi-Dezfooli, S.-M., Frossard, P., and Soatto, S.
	\newblock Empirical study of the topology and geometry of deep networks.
	\newblock In \emph{Proceedings of the IEEE Conference on Computer Vision and
		Pattern Recognition}, pp.\  3762--3770, 2018.
	
	\bibitem[Frankle \& Carbin(2018)Frankle and Carbin]{frankle2018lottery}
	Frankle, J. and Carbin, M.
	\newblock The lottery ticket hypothesis: Finding sparse, trainable neural
	networks.
	\newblock In \emph{International Conference on Learning Representations}, 2018.
	
	\bibitem[Goldt et~al.(2020)Goldt, M{\'e}zard, Krzakala, and
	Zdeborov{\'a}]{goldt2020modeling}
	Goldt, S., M{\'e}zard, M., Krzakala, F., and Zdeborov{\'a}, L.
	\newblock Modeling the influence of data structure on learning in neural
	networks: The hidden manifold model.
	\newblock \emph{Physical Review X}, 10\penalty0 (4):\penalty0 041044, 2020.
	
	\bibitem[Gorban \& Tyukin(2018)Gorban and Tyukin]{gorban2018blessing}
	Gorban, A.~N. and Tyukin, I.~Y.
	\newblock Blessing of dimensionality: mathematical foundations of the
	statistical physics of data.
	\newblock \emph{Philosophical Transactions of the Royal Society A:
		Mathematical, Physical and Engineering Sciences}, 376\penalty0
	(2118):\penalty0 20170237, 2018.
	
	\bibitem[Grigsby \& Lindsey(2022)Grigsby and
	Lindsey]{grigsby2022transversality}
	Grigsby, J.~E. and Lindsey, K.
	\newblock On transversality of bent hyperplane arrangements and the topological
	expressiveness of relu neural networks.
	\newblock \emph{SIAM Journal on Applied Algebra and Geometry}, 6\penalty0
	(2):\penalty0 216--242, 2022.
	
	\bibitem[Gromov(1981)]{gromov1981curvature}
	Gromov, M.
	\newblock Curvature, diameter and betti numbers.
	\newblock \emph{Commentarii Mathematici Helvetici}, 56:\penalty0 179--195,
	1981.
	
	\bibitem[Haase et~al.(2022)Haase, Hertrich, and Loho]{haase2022lower}
	Haase, C.~A., Hertrich, C., and Loho, G.
	\newblock Lower bounds on the depth of integral relu neural networks via
	lattice polytopes.
	\newblock In \emph{The Eleventh International Conference on Learning
		Representations}, 2022.
	
	\bibitem[Haim et~al.(2022)Haim, Vardi, Yehudai, Shamir, and
	Irani]{haim2022reconstructing}
	Haim, N., Vardi, G., Yehudai, G., Shamir, O., and Irani, M.
	\newblock Reconstructing training data from trained neural networks.
	\newblock \emph{Advances in Neural Information Processing Systems},
	35:\penalty0 22911--22924, 2022.
	
	\bibitem[Hajij \& Istvan(2020)Hajij and Istvan]{hajij2020topological}
	Hajij, M. and Istvan, K.
	\newblock A topological framework for deep learning.
	\newblock \emph{arXiv preprint arXiv:2008.13697}, 2020.
	
	\bibitem[Hajij \& Istvan(2021)Hajij and Istvan]{hajij2021topological}
	Hajij, M. and Istvan, K.
	\newblock Topological deep learning: Classification neural networks.
	\newblock \emph{arXiv preprint arXiv:2102.08354}, 2021.
	
	\bibitem[Hanin \& Rolnick(2019)Hanin and Rolnick]{hanin2019deep}
	Hanin, B. and Rolnick, D.
	\newblock Deep relu networks have surprisingly few activation patterns.
	\newblock \emph{Advances in neural information processing systems}, 32, 2019.
	
	\bibitem[Hannouch \& Chalup(2023)Hannouch and Chalup]{hannouch2023topology}
	Hannouch, K.~M. and Chalup, S.
	\newblock Topology estimation of simulated 4d image data by combining
	downscaling and convolutional neural networks.
	\newblock \emph{arXiv preprint arXiv:2306.14442}, 2023.
	
	\bibitem[Hornik(1991)]{hornik1991approximation}
	Hornik, K.
	\newblock Approximation capabilities of multilayer feedforward networks.
	\newblock \emph{Neural networks}, 4\penalty0 (2):\penalty0 251--257, 1991.
	
	\bibitem[Huchette et~al.(2023)Huchette, Mu{\~n}oz, Serra, and
	Tsay]{huchette2023deep}
	Huchette, J., Mu{\~n}oz, G., Serra, T., and Tsay, C.
	\newblock When deep learning meets polyhedral theory: A survey.
	\newblock \emph{arXiv preprint arXiv:2305.00241}, 2023.
	
	\bibitem[Kantchelian et~al.(2014)Kantchelian, Tschantz, Huang, Bartlett,
	Joseph, and Tygar]{kantchelian2014large}
	Kantchelian, A., Tschantz, M.~C., Huang, L., Bartlett, P.~L., Joseph, A.~D.,
	and Tygar, J.~D.
	\newblock Large-margin convex polytope machine.
	\newblock \emph{Advances in Neural Information Processing Systems}, 27, 2014.
	
	\bibitem[Kim et~al.(2020)Kim, Kim, Zaheer, Kim, Chazal, and
	Wasserman]{kim2020pllay}
	Kim, K., Kim, J., Zaheer, M., Kim, J., Chazal, F., and Wasserman, L.
	\newblock Pllay: Efficient topological layer based on persistent landscapes.
	\newblock \emph{Advances in Neural Information Processing Systems},
	33:\penalty0 15965--15977, 2020.
	
	\bibitem[Li et~al.(2018)Li, Farkhoor, Liu, and Yosinski]{li2018measuring}
	Li, C., Farkhoor, H., Liu, R., and Yosinski, J.
	\newblock Measuring the intrinsic dimension of objective landscapes.
	\newblock In \emph{International Conference on Learning Representations}, 2018.
	
	\bibitem[Liu et~al.(2023)Liu, Cole, Peterson, and Kirby]{liu2023relu}
	Liu, Y., Cole, C.~M., Peterson, C., and Kirby, M.
	\newblock Relu neural networks, polyhedral decompositions, and persistent
	homology.
	\newblock In \emph{Topological, Algebraic and Geometric Learning Workshops
		2023}, pp.\  455--468. PMLR, 2023.
	
	\bibitem[Magai \& Ayzenberg(2022)Magai and Ayzenberg]{magai2022topology}
	Magai, G. and Ayzenberg, A.
	\newblock Topology and geometry of data manifold in deep learning.
	\newblock \emph{arXiv preprint arXiv:2204.08624}, 2022.
	
	\bibitem[Makkuva et~al.(2020)Makkuva, Taghvaei, Oh, and
	Lee]{makkuva2020optimal}
	Makkuva, A., Taghvaei, A., Oh, S., and Lee, J.
	\newblock Optimal transport mapping via input convex neural networks.
	\newblock In \emph{International Conference on Machine Learning}, pp.\
	6672--6681. PMLR, 2020.
	
	\bibitem[Malach et~al.(2020)Malach, Yehudai, Shalev-Schwartz, and
	Shamir]{malach2020proving}
	Malach, E., Yehudai, G., Shalev-Schwartz, S., and Shamir, O.
	\newblock Proving the lottery ticket hypothesis: Pruning is all you need.
	\newblock In \emph{International Conference on Machine Learning}, pp.\
	6682--6691. PMLR, 2020.
	
	\bibitem[Manwani \& Sastry(2010)Manwani and Sastry]{manwani2010learning}
	Manwani, N. and Sastry, P.
	\newblock Learning polyhedral classifiers using logistic function.
	\newblock In \emph{Proceedings of 2nd Asian Conference on Machine Learning},
	pp.\  17--30. JMLR Workshop and Conference Proceedings, 2010.
	
	\bibitem[Masden(2022)]{masden2022algorithmic}
	Masden, M.
	\newblock Algorithmic determination of the combinatorial structure of the
	linear regions of relu neural networks.
	\newblock \emph{arXiv preprint arXiv:2207.07696}, 2022.
	
	\bibitem[Milnor et~al.(1963)Milnor, Spivak, Wells, and Wells]{milnor1963morse}
	Milnor, J.~W., Spivak, M., Wells, R., and Wells, R.
	\newblock \emph{Morse theory}.
	\newblock Princeton university press, 1963.
	
	\bibitem[Naitzat et~al.(2020)Naitzat, Zhitnikov, and Lim]{naitzat2020topology}
	Naitzat, G., Zhitnikov, A., and Lim, L.-H.
	\newblock Topology of deep neural networks.
	\newblock \emph{The Journal of Machine Learning Research}, 21\penalty0
	(1):\penalty0 7503--7542, 2020.
	
	\bibitem[Park et~al.(2020)Park, Yun, Lee, and Shin]{park2020minimum}
	Park, S., Yun, C., Lee, J., and Shin, J.
	\newblock Minimum width for universal approximation.
	\newblock In \emph{International Conference on Learning Representations}, 2020.
	
	\bibitem[Paul \& Chalup(2019)Paul and Chalup]{paul2019estimating}
	Paul, R. and Chalup, S.
	\newblock Estimating betti numbers using deep learning.
	\newblock In \emph{2019 International Joint Conference on Neural Networks
		(IJCNN)}, pp.\  1--7. IEEE, 2019.
	
	\bibitem[Pestov(2013)]{pestov2013k}
	Pestov, V.
	\newblock Is the k-nn classifier in high dimensions affected by the curse of
	dimensionality?
	\newblock \emph{Computers \& Mathematics with Applications}, 65\penalty0
	(10):\penalty0 1427--1437, 2013.
	
	\bibitem[Piwek et~al.(2023)Piwek, Klukowski, and Hu]{piwek2023exact}
	Piwek, P., Klukowski, A., and Hu, T.
	\newblock Exact count of boundary pieces of relu classifiers: Towards the
	proper complexity measure for classification.
	\newblock In \emph{Uncertainty in Artificial Intelligence}, pp.\  1673--1683.
	PMLR, 2023.
	
	\bibitem[Rolnick \& Kording(2020)Rolnick and Kording]{rolnick2020reverse}
	Rolnick, D. and Kording, K.
	\newblock Reverse-engineering deep relu networks.
	\newblock In \emph{International Conference on Machine Learning}, pp.\
	8178--8187. PMLR, 2020.
	
	\bibitem[Rudin et~al.(1976)]{rudin1976principles}
	Rudin, W. et~al.
	\newblock \emph{Principles of mathematical analysis}, volume~3.
	\newblock McGraw-hill New York, 1976.
	
	\bibitem[Sankaranarayanan \& Rengaswamy(2022)Sankaranarayanan and
	Rengaswamy]{sankaranarayanan2022cdinn}
	Sankaranarayanan, P. and Rengaswamy, R.
	\newblock Cdinn--convex difference neural networks.
	\newblock \emph{Neurocomputing}, 495:\penalty0 153--168, 2022.
	
	\bibitem[Sivaprasad et~al.(2021)Sivaprasad, Singh, Manwani, and
	Gandhi]{sivaprasad2021curious}
	Sivaprasad, S., Singh, A., Manwani, N., and Gandhi, V.
	\newblock The curious case of convex neural networks.
	\newblock In \emph{Machine Learning and Knowledge Discovery in Databases.
		Research Track: European Conference, ECML PKDD 2021, Bilbao, Spain, September
		13--17, 2021, Proceedings, Part I 21}, pp.\  738--754. Springer, 2021.
	
	\bibitem[Telgarsky(2015)]{telgarsky2015representation}
	Telgarsky, M.
	\newblock Representation benefits of deep feedforward networks.
	\newblock \emph{arXiv preprint arXiv:1509.08101}, 2015.
	
	\bibitem[Tiwari \& Konidaris(2022)Tiwari and Konidaris]{tiwari2022effects}
	Tiwari, S. and Konidaris, G.
	\newblock Effects of data geometry in early deep learning.
	\newblock \emph{Advances in Neural Information Processing Systems},
	35:\penalty0 30099--30113, 2022.
	
	\bibitem[Vallin et~al.(2023)Vallin, Larsson, and Larson]{vallin2023geometric}
	Vallin, J., Larsson, K., and Larson, M.~G.
	\newblock The geometric structure of fully-connected relu-layers.
	\newblock \emph{arXiv preprint arXiv:2310.03482}, 2023.
	
	\bibitem[Vincent \& Schwager(2022)Vincent and Schwager]{vincent2022reachable}
	Vincent, J.~A. and Schwager, M.
	\newblock Reachable polyhedral marching (rpm): An exact analysis tool for
	deep-learned control systems.
	\newblock \emph{arXiv preprint arXiv:2210.08339}, 2022.
	
	\bibitem[Wang et~al.(2021)Wang, Buchanan, Gilboa, and Wright]{wang2021deep}
	Wang, T., Buchanan, S., Gilboa, D., and Wright, J.
	\newblock Deep networks provably classify data on curves.
	\newblock \emph{Advances in neural information processing systems},
	34:\penalty0 28940--28953, 2021.
	
	\bibitem[Xu et~al.(2021)Xu, Vaughan, Chen, Zhang, and
	Sudjianto]{xu2021traversing}
	Xu, S., Vaughan, J., Chen, J., Zhang, A., and Sudjianto, A.
	\newblock Traversing the local polytopes of relu neural networks.
	\newblock In \emph{The AAAI-22 Workshop on Adversarial Machine Learning and
		Beyond}, 2021.
	
	\bibitem[Zhou et~al.(2019)Zhou, Lan, Liu, and Yosinski]{zhou2019deconstructing}
	Zhou, H., Lan, J., Liu, R., and Yosinski, J.
	\newblock Deconstructing lottery tickets: Zeros, signs, and the supermask.
	\newblock \emph{Advances in neural information processing systems}, 32, 2019.
	
\end{thebibliography}
\end{document}